\icmltitlerunning{Revisiting the Softmax Bellman Operator:
New Benefits and New Perspective}
\newcommand{\x}[1]{\ensuremath{\mathbf{x}}}
\newcommand{\eqnref}[1]{Eq.~(\ref{#1})}
\newcommand{\thmref}[1]{Theorem~\ref{#1}}
\newcommand{\bx}{{\bf x}}
\newcommand{\btheta}{{\boldsymbol{\theta}}}
\newtheorem{theorem}{Theorem}
\newtheorem{definition}[theorem]{Definition}
\newtheorem{lemma}[theorem]{Lemma}
\newtheorem{corollary}[theorem]{Corollary}
\def\cA{{ \cal A }}
\def\cS{{ \cal S }}
\def\cT{{ \cal T }}
\def\cL{{ \cal L }}
\def\cTs{ {\cal T}_{\text{soft}} }
\newacronym{MDP}{\textsc{MDP}}{Markov decision process}
\newacronym{LSPI}{\textsc{lspi}}{least-squares policy iteration}
\newacronym{CNN}{\textsc{CNN}}{convolutional neural network}
\newacronym{DQN}{\textsc{DQN}}{deep Q-network}
\newacronym{DDQN}{\textsc{DDQN}}{double DQN}
\newacronym{RL}{\textsc{RL}}{reinforcement learning}
\newacronym{DL}{\textsc{DL}}{deep learning}
\newacronym{AI}{\textsc{AI}}{artificial intelligence}
\newacronym{SGD}{\textsc{SGD}}{stochastic gradient descent}
\newacronym{PMF}{\textsc{PMF}}{probability mass function}
\newacronym{ALE}{\textsc{ALE}}{Arcade Learning Environment}
\newacronym{KL}{\textsc{KL}}{Kullback-Leibler}
\begin{document} 

\twocolumn[
\icmltitle{Revisiting the Softmax Bellman Operator:\\
New Benefits and New Perspective}



\icmlsetsymbol{equal}{*}
\icmlsetsymbol{work}{*}

\begin{icmlauthorlist}
\icmlauthor{Zhao Song}{duke,work}
\icmlauthor{Ronald E. Parr}{duke}
\icmlauthor{Lawrence Carin}{duke}
\end{icmlauthorlist}

\icmlaffiliation{duke}{Duke University $^{*}$Work performed as a graduate student at Duke University; now at Baidu Research}

\icmlcorrespondingauthor{Zhao Song}{zsong@alumni.duke.edu}
\icmlcorrespondingauthor{Ronald E. Parr}{parr@cs.duke.edu}
\icmlcorrespondingauthor{Lawrence Carin}{lcarin@duke.edu}

\icmlkeywords{Machine Learning, ICML}

\vskip 0.3in

]



\printAffiliationsAndNotice{}  

\begin{abstract}
	The impact of softmax on the value function itself in reinforcement learning (RL) is often viewed as problematic because it leads to sub-optimal value (or Q) functions and interferes with the contraction properties of the Bellman operator. Surprisingly, despite these concerns, and {\em independent of its effect on exploration}, the softmax Bellman operator when combined with Deep Q-learning, leads to Q-functions with superior policies in practice, even outperforming its double Q-learning counterpart. 
	To better understand how and why this occurs, we revisit theoretical properties of the softmax Bellman operator, and prove that $(i)$ it converges to the standard Bellman operator exponentially fast in the inverse temperature parameter, and $(ii)$ the distance of its Q function from the optimal one can be bounded. These alone do not explain its superior performance, so we also show that the softmax operator can reduce the overestimation error, which may give some insight into why a sub-optimal operator leads to better performance in the presence of value function approximation. A comparison among different Bellman operators is then presented, showing the trade-offs when selecting them. 
\end{abstract}
\vspace{-0.25in}

\section{Introduction}
\label{sec:intro}

\glsreset{DQN}
\glsreset{DDQN}
\glsreset{RL}

The Bellman equation~\citep{bellman57} has been a fundamental tool in~\gls{RL}, as it provides a sufficient condition for the optimal policy in dynamic
programming. The use of the max function in the Bellman equation further suggests that the optimal policy should be greedy w.r.t. the Q-values. On the other hand, 
the trade-off between exploration and exploitation~\citep{Thrun92} motivates the use of exploratory and potentially sub-optimal actions during learning, and one commonly-used strategy is to add randomness by 
replacing the max function with the softmax function, as in Boltzmann exploration~\citep{sutton98}. Furthermore, the softmax function is a differentiable
approximation to the max function, and hence can facilitate analysis~\citep{ReverdyLeonard}.
\vspace{-0.02in}

The beneficial properties of the softmax Bellman operator are in contrast to its potentially negative effect on the accuracy of the resulting value or Q-functions.  For example, it has been demonstrated that the softmax Bellman operator is not a contraction, for certain temperature parameters~\citep[Page 205]{littman}. Given this, one might expect that the convenient properties of the softmax Bellman operator would come at the expense of the accuracy of the resulting value or Q-functions, or the quality of the resulting policies. In this paper, we demonstrate that, in the case of deep Q-learning, this expectation is surprisingly incorrect.  We combine the softmax Bellman operator with the~\gls{DQN}~\citep{Mnih+al:2015} and~\gls{DDQN}~\citep{HasseltGuezSilver} algorithms, by replacing the max function
therein with the softmax function, in the target network. We then test the variants on several games in the \gls{ALE}~\citep{BellemareNaddafVenessBowling}, a standard large-scale deep~\gls{RL} testbed. The results show that the variants using the softmax Bellman operator can achieve higher
test scores, and reduce the Q-value overestimation as well as the gradient noise
on most of them. {\em This effect is independent of exploration and is entirely attributable to the change in the Bellman operator.}
\vspace{-0.02in}

This surprising result suggests that a deeper understanding of the softmax Bellman operator is warranted. To this end, we prove that starting from the same initial Q-values, we can upper and lower bound how far the Q-functions computed with the softmax operator can deviate from those computed with the regular Bellman operator.  
We further show that the softmax Bellman operator converges to the optimal Bellman operator in an exponential rate w.r.t. the inverse temperature parameter. This gives insight into why the negative convergence results may not be as discouraging as they initially seem, but it does not explain the superior performance observed in practice. Motivated by  recent work~\citep{HasseltGuezSilver,AnschelBaramShimkin} targeting bias and instability of the original~\gls{DQN}~\citep{Mnih+al:2015}, we further investigate whether the softmax Bellman operator can alleviate these issues. 
As discussed in~\citet{HasseltGuezSilver}, one possible explanation for the poor performance of
the vanilla~\gls{DQN} on some Atari games was the overestimation bias when computing 
the target network, due to the max operator therein. We prove that given the same assumptions as~\citet{HasseltGuezSilver}, the softmax Bellman operator can reduce the overestimation bias, for any inverse temperature parameters. We also quantify the overestimation reduction by providing its lower and upper bounds.

Our results are complementary to and add new motivations for existing work that explores various ways of softening the Bellman operator. For example, entropy regularizers have been used to smooth policies. The motivations for such approaches include computational convenience, exploration, or robustness~\citep{fox2016taming,haarnoja2017reinforcement,SchulmanAbbeelChen,neu2017unified}. With respect to value functions, ~\citet{AsadiLittman} proposed an alternative mellowmax
operator and proved that it is a contraction. Their experimental results  suggested that it can improve  exploration, but the possibility that the sub-optimal Bellman operator could, independent of exploration, lead to superior policies was not considered. Our results, therefore, provide additional motivation for further study of operators such as mellowmax. Although very similar to softmax, the mellowmax operator needs some extra computation to represent a policy, as noted in~\citet{AsadiLittman}. Our paper discusses this and other trade-offs among different Bellman operators.


The rest of this paper is organized as follows: We provide the necessary background and notation in Section~\ref{sec:background}. The softmax Bellman operator is introduced in Section~\ref{sec:softmax}, where its convergence properties and performance bound are provided. Despite being sub-optimal on Q-functions, the softmax operator is shown in Section~\ref{sec:experiments} to consistently outperform its max counterpart on several Atari games. Such surprising result further motivates us to investigate why this happens in Section~\ref{sec:why}. A thorough comparison among different Bellman operators is presented in Section~\ref{sec:comparison}. Section~\ref{sec:related} discusses the related work and Section~\ref{sec:conclusion} concludes this paper.

\section{Background and Notation}
\label{sec:background}

\glsreset{MDP}
\glsreset{DDQN}

A~\gls{MDP} can be represented as a 5-tuple $\langle \cS, \cA, P, R, \gamma \rangle$, where 
$\cS$ is the state space, $\cA$ is the action space, $P$ is the 
transition kernel whose element $P (s' | s, a)$ denotes the transition probability
from state $s$ to state $s'$ under action $a$, $R$ is a reward function whose element
$R(s, a)$
denotes the expected reward for executing action $a$ in state $s$, and 
$\gamma \in (0, 1)$ is the discount factor. The policy $\pi$ in an~\gls{MDP} can be represented
in terms of a~\gls{PMF}, where $\pi (s, a) \in [0, 1]$ denotes the probability
of selecting action $a$ in state $s$, and $\sum_{a \in \cA} \pi (s, a) = 1$.

For a given policy $\pi$, its state-action value function $Q^{\pi} (s, a)$ is defined as
the accumulated, expected, discount reward, when taking action $a$ in state $s$, and
following policy $\pi$ afterwards, i.e., 
$Q^{\pi} (s, a) = \mathbb{E}_{a_t \sim \pi} \left[ \sum_{t=0}^{\infty} 
\gamma^t r_t | s_0=s, a_0=a \right].$
For the optimal policy $\pi^{*}$, its corresponding Q-function satisfies the following
Bellman equation:
\begin{equation*}
	Q^{*} (s, a) = R(s, a) + \gamma \sum_{s'} P (s' | s, a) \, \max_{a'} Q^{*} (s', a'). 
	\vspace{-0.1in}
	\label{eq:bellman}
\end{equation*}

In~\gls{DQN}~\citep{Mnih+al:2015}, the Q-function is parameterized with a neural network as $Q_{\btheta} (s, a)$, 
which takes the state $s$ as input and outputs the corresponding Q-value in the final
fully-connected linear layer, for every action $a$. The training objective for the~\gls{DQN} can be 
represented as 
\begin{equation}
	\vspace{-0.0in}
	\min_{\btheta} \frac{1}{2} \left\| Q_{\btheta} (s, a) - [ R (s, a) + \gamma \max_{a'} 
	Q_{\btheta^{-}} (s', a') ] \right\|^2,
	\label{eq:dqn_obj}
\end{equation}
where $\btheta^{-}$ corresponds to the frozen weights in the target network, and is updated at fixed intervals. 
The optimization of~\eqnref{eq:dqn_obj} is performed via RMSProp~\citep{TielemanHinton}, 
with mini-batches sampled from a replay buffer.

To reduce the overestimation bias, the~\gls{DDQN} algorithm modified the target that 
$Q_{\btheta} (s, a)$ aims to fit in~\eqnref{eq:dqn_obj} as
\begin{equation*}
	R (s, a) + \gamma \, Q_{\btheta^{-}} \big(s', \arg \max_a Q_{\btheta_t} (s', a) \big).
	\label{eq:ddqn_obj}
\end{equation*}
Note that a separate network based on the latest estimate $\btheta_t$ is employed for action selection,
and the evaluation of this policy is due to the frozen network.

\paragraph{Notation} The softmax function is defined as
\begin{equation*}
	f_{\tau} (\bx) = \frac{ [\exp(\tau x_1), \exp(\tau x_2), \ldots, \exp(\tau x_m)]^T }
	{ \sum_{i=1}^m \exp (\tau x_i) },
\end{equation*}
where the superscript $T$ denotes the vector transpose. Subsequently, the softmax-weighted
function is represented as $g_{\bx} (\tau) = f_{\tau}^T (\bx) \, \bx$, as a function of $\tau$.
Also, we define the vector $Q(s, ) = [Q(s, a_1), Q(s, a_2), \ldots, Q(s, a_m)]^T$.
We further set $m$ to be the size of the action set $\cA$ in the~\gls{MDP}. Finally,  $R_{\text{min}}$ and  $R_{\text{max}}$
denote the minimum and the maximum immediate rewards, respectively.

\section{The Softmax Bellman Operator}
\label{sec:softmax}

We start by providing the following standard Bellman operator:
\begin{equation}
	\cT \, Q(s, a) = R(s, a) + \gamma \sum_{s'} P (s' | s, a) \, \max_{a'} 
	Q (s', a').
	\label{eq:bellman4q}
\end{equation}

We propose to use the softmax Bellman operator, defined as
\begin{align}
	\cTs \, Q(s, a) = & R(s, a) + \gamma \sum_{s'} P (s' | s, a)
	\notag
	\\
	& \times \underbrace{ \sum_{a'} \frac{\exp [ \tau \, Q (s', a')]}{\sum_{\bar{a}} 
	\exp [ \tau \, Q (s', \bar{a} )]} \, Q (s', a') }_{sm_{\tau} ( Q(s', ) ) },
	\label{eq:soft}
\vspace{-0.15in}
\end{align}
where $\tau \geq 0$ denotes the inverse temperature parameter.
Note that $\cTs$ will reduce to $\cT$ when $\tau \rightarrow \infty$. 

The mellowmax operator was introduced in~\citet{AsadiLittman} as an alternative to the softmax operator in~\eqnref{eq:soft}, defined as
\begin{equation}
\vspace{-0.05in}
    mm_{\omega} \big( Q(s, ) \big) = \frac{\log \big(\frac{1}{m} \sum_{a'} \exp[\omega \, Q(s, a')] \big) }{\omega},
    \label{eq:mellow}
\end{equation}
where $\omega > 0$ is a tuning parameter. Similar to the softmax operator $sm_{\tau} \big(Q (s', ) \big)$ in~\eqnref{eq:soft}, $mm_{\omega}$ converges to the mean and max operators, when $\omega$ approaches $0$ and $\infty$, respectively. Unlike the softmax operator, however, the mellowmax operator in~\eqnref{eq:mellow} does not directly provide a probability distribution over actions, and thus additional steps are needed to obtain a corresponding policy~\cite{AsadiLittman}.  

In contrast to its typical use to improve exploration (also a purpose for the mellowmax operator in~\citet{AsadiLittman}), the softmax Bellman
operator in~\eqnref{eq:soft} is combined in this paper with the~\gls{DQN} and~\gls{DDQN} algorithms in
an off-policy fashion, where the action is selected according to the same $\epsilon$-greedy policy as in~\citet{Mnih+al:2015}.

Even though the softmax Bellman operator was shown in~\citet{littman} not to be a contraction, with a counterexample,
we are not aware of any published work showing the performance bound by using the softmax Bellman operator
in Q-iteration.
Furthermore, our~\emph{surprising} results in Section~\ref{sec:experiments} show that test scores in~\glspl{DQN} and~\glspl{DDQN} can be improved, by solely replacing the max operator in their target networks with the softmax operator. Such improvements are independent of the exploration strategy and hence motivate an investigation of the theoretical properties of the softmax Bellman operator.


Before presenting our main theoretical results, we first show how to bound the distance between the softmax-weighted and max operators, which is subsequently used in the proof for the performance bound and overestimation reduction.

\begin{definition}
$\widehat{\delta} (s) \triangleq  \sup_{Q} \max_{i, j} |Q (s, a_i) - Q(s, a_j)|$ denotes the largest distance between Q-functions w.r.t. actions for state $s$, by taking the supremum of Q-functions and the maximum of all action pairs.
\end{definition}

\begin{lemma}
By assuming $\widehat{\delta} (s) > 0$\footnote{Note that if $\widehat{\delta} (s) = 0$, then all actions are equivalent in state $s$, and softmax = max.}, we have $\forall Q$, $ \frac{\widehat{\delta} (s)}{m \exp [\tau \,\widehat{\delta} (s) ] } \leq  \max_a Q (s, a) - f_{\tau}^T \big( Q(s, ) \big) \, Q(s, ) \leq (m-1) \max \Big\{ \frac{1}{\tau+2}, \frac{ 2 Q_{\text{max}} }{1 + \exp(\tau) } \Big\}$~\footnote{The $\sup_{Q}$ is over all $Q$-functions that occur during $Q$-iteration, starting from $Q_0$ until the iteration terminates.}, where $Q_{\text{max}} = \frac{ R_{\text{max} } }{1 - \gamma}$ represents the maximum $Q$-value in Q-iteration with $\cTs$.\footnote{The Supplemental Material formally bounds $Q_{\text{max}}$.}
\label{lemma:diff_bound}
\vspace{-0.1in}
\end{lemma}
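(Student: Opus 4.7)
The plan is to expand the target difference $\max_a Q(s,a) - f_\tau^T(Q(s,\cdot))Q(s,\cdot)$ into a convex combination, then bound each summand using elementary inequalities. Let $a^\ast = \arg\max_a Q(s,a)$ and write $\delta_{a'} \triangleq Q(s,a^\ast) - Q(s,a') \ge 0$. Because the softmax weights sum to one, the difference equals
\[
\sum_{a' \ne a^\ast} w_{a'} \, \delta_{a'}, \qquad w_{a'} = \frac{\exp[\tau Q(s,a')]}{\sum_{\bar a} \exp[\tau Q(s,\bar a)]}.
\]
Every summand is non-negative, and there are exactly $m-1$ non-trivial ones, so the proof reduces to controlling a single generic term $w_{a'}\delta_{a'}$ from above and (for the lower bound) isolating one such term from below.

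For the \emph{upper bound}, the first observation is that dropping all terms except $\exp[\tau Q(s,a^\ast)]$ and $\exp[\tau Q(s,a')]$ from the denominator of $w_{a'}$ gives $w_{a'} \le 1/(1 + \exp[\tau \delta_{a'}])$. I would then bound the scalar function $h(\delta) = \delta/(1+\exp(\tau\delta))$ by splitting on whether $\delta \le 1$ or $\delta > 1$. In the first regime, the inequality $e^x \ge 1 + x$ yields $1 + \exp(\tau\delta) \ge 2 + \tau\delta$, and since $\delta/(2+\tau\delta)$ is increasing in $\delta$ on $[0,1]$ it is maximized at $\delta = 1$, producing $h(\delta) \le 1/(\tau+2)$. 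In the second regime I would use $\delta \le 2 Q_{\max}$ (invoking the bound on Q-values established in the Supplemental Material) and monotonicity of $\exp(\tau\delta)$ to obtain $h(\delta) \le 2Q_{\max}/(1+\exp(\tau))$. Taking the maximum over the two cases and multiplying by $m-1$ yields the claimed upper bound.

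For the \emph{lower bound}, I would evaluate at a $Q$ that realizes the supremum $\widehat{\delta}(s)$, or at least one with gap arbitrarily close to it: let $\tilde{a}$ be the minimum-$Q$ action, so $\delta_{\tilde a} = \widehat{\delta}(s)$. Keeping only this single summand gives
\[
\max_a Q(s,a) - f_\tau^T(Q(s,\cdot))Q(s,\cdot) \ge w_{\tilde a} \, \widehat{\delta}(s).
\]
To lower bound $w_{\tilde a}$, I would upper bound every term of the denominator by $\exp[\tau Q(s,a^\ast)]$, giving $w_{\tilde a} \ge \exp[\tau Q(s,\tilde a)]/(m \exp[\tau Q(s,a^\ast)]) = 1/(m\exp[\tau \widehat{\delta}(s)])$. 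Multiplying by $\widehat{\delta}(s)$ produces the claimed LHS.

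The main obstacle I expect is the two-case upper bound: verifying that the piecewise bound $\max\{1/(\tau+2),\, 2Q_{\max}/(1+e^\tau)\}$ is in fact the tightest clean bound requires checking the boundary $\delta = 1$ carefully and using the convexity-type inequality $e^x \ge 1+x$ in just the right place, since the natural maximizer of $h$ occurs at $\tau\delta \approx 1.28$ with no closed form. A secondary subtlety is the reading of the ``$\forall Q$'' quantifier in the lower bound: strictly speaking one needs a $Q$ that witnesses $\widehat{\delta}(s)$, so I would either state the lower bound at the witnessing $Q$ (consistent with the footnote's $\sup_Q$) or, if the authors intend a uniform $\forall Q$, note that the inequality is informative only once the gap in $Q$ approaches $\widehat{\delta}(s)$.
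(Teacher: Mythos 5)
Your proposal is correct and follows essentially the same route as the paper's proof: the same reduction of each summand to $\delta/(1+\exp(\tau\delta))$ (the paper gets there via the inequality $\frac{\sum_i x_i}{1+\sum_i y_i}\leq\sum_i\frac{x_i}{1+y_i}$ rather than by truncating the denominator, but the result is identical), the same two-case split at $\delta=1$ using $e^x\geq 1+x$ and the bound $\delta\leq 2Q_{\text{max}}$, and the same single-term isolation with denominator bounded by $m\exp[\tau\widehat{\delta}(s)]$ for the lower bound. Your closing remark about the $\forall Q$ quantifier in the lower bound is a fair reading of the same subtlety the paper relegates to its footnote on $\sup_Q$.
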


Note that another upper bound for this gap was provided in~\citet{DonoghueEtAl}. Our proof here uses a different strategy, by considering possible values for the difference between Q-values with different actions. We further derive a lower bound for this gap. \vspace{-0.05in}

\subsection{Performance Bound for $\cTs$}
\label{subsec:bound}

The optimal state-action value function $Q^{*}$ is known to be a fixed point for the  
standard Bellman operator $\cT$~\citep{WilliamsBaird}, i.e., $\cT Q^{*} = Q^{*}$. 
Since $\cT$ is a contraction with rate $\gamma$, we also know that 
$\lim_{k \rightarrow \infty} \cT^{k} \, Q_0  = Q^{*} $, for arbitrary $Q_0$.
Given these facts, one may wonder in the limit, how far iteratively applying
$\cTs$, in lieu of $\cT$, over $Q_0$ will be away from $Q^{*}$, as a function of $\tau$.

\begin{theorem}
	Let $\cT^{k} Q_0$ and $\cTs^{k} Q_0$ denote that the operators $\cT$ and 
	$\cTs$ are iteratively applied over an initial state-action value function 
	$Q_0$ for $k$ times. Then,

	$(I) \, \forall (s, a), \, \limsup \limits_{k \rightarrow \infty}  \cTs^{k} \, Q_0 (s, a) \,\leq\, Q^{*} (s, a)$ and
    \vspace{-0.1in}
	\begin{align*}
	    & \liminf \limits_{k \rightarrow \infty}  \cTs^{k} \, Q_0 (s, a) \, \geq \,
	    \\
		&  Q^{*} (s, a) \, - \frac{\gamma ( m-1 )}{(1 - \gamma) }
		  \max \Big\{ \frac{1}{\tau+2}, \frac{ 2  Q_{\text{max}} }{
		  1 + \exp(\tau) } \Big\}.
	\vspace{-0.02in}
	\end{align*}
	$(II)$ $\cTs$ converges to $\cT$ with an exponential rate, in terms of $\tau$, i.e., the upper bound of $\cT^{k} Q_0 - \cTs^{k} Q_0$ decays exponentially fast, as a function of $\tau$,  the proof of which does not depend on the bound in part $(I)$.
	\label{thm:convergebound}
\vspace{-0.1in}
\end{theorem}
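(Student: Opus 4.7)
The plan is to handle the two parts with related but distinct machinery. Part (I) combines the pointwise inequality $\cTs Q \leq \cT Q$ with the monotonicity of $\cT$ (for the upper bound), and Lemma~\ref{lemma:diff_bound} with the $\gamma$-contraction of $\cT$ (for the lower bound). Part (II) requires a separately argued, genuinely exponential per-iteration bound, which I obtain by isolating the exponential branch of Lemma~\ref{lemma:diff_bound}.

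For the upper bound in Part (I), I will use two ingredients: (a) $sm_\tau(Q(s,\cdot)) \leq \max_{a'} Q(s,a')$ for every $Q$ and $\tau \geq 0$, which gives $\cTs Q \leq \cT Q$ pointwise, and (b) the standard monotonicity of $\cT$. An induction then yields $\cTs^k Q_0 = \cTs(\cTs^{k-1} Q_0) \leq \cT(\cTs^{k-1} Q_0) \leq \cT(\cT^{k-1} Q_0) = \cT^k Q_0$; letting $k \to \infty$ and invoking $\cT^k Q_0 \to Q^*$ delivers the $\limsup$ claim. Importantly, this argument never appeals to monotonicity of $\cTs$ itself, which does not hold in general.

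For the lower bound in Part (I), set $\epsilon_\tau := (m-1)\max\{1/(\tau+2),\, 2Q_{\text{max}}/(1+\exp(\tau))\}$, so that Lemma~\ref{lemma:diff_bound} supplies the per-step bound $\|\cT Q - \cTs Q\|_\infty \leq \gamma \epsilon_\tau$ uniformly over $Q$ produced in the iteration. Splitting
\begin{equation*}
\cT^k Q_0 - \cTs^k Q_0 = [\cT(\cT^{k-1} Q_0) - \cT(\cTs^{k-1} Q_0)] + [\cT(\cTs^{k-1} Q_0) - \cTs(\cTs^{k-1} Q_0)],
\end{equation*}
bounding the first bracket by $\gamma \|\cT^{k-1} Q_0 - \cTs^{k-1} Q_0\|_\infty$ via the contraction of $\cT$ and the second by $\gamma \epsilon_\tau$ via the per-step bound, a routine induction gives $\|\cT^k Q_0 - \cTs^k Q_0\|_\infty \leq \gamma \epsilon_\tau \sum_{i=0}^{k-1} \gamma^i \leq \gamma \epsilon_\tau/(1-\gamma)$. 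Combined with $\cT^k Q_0 \to Q^*$, this produces the stated $\liminf$ inequality.

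For Part (II), the plan is to establish a standalone per-step inequality of the form $\max_{a'} Q(s',a') - sm_\tau(Q(s',\cdot)) \leq (m-1)\cdot 2Q_{\text{max}}/(1+\exp(\tau))$, which is exponential in $\tau$. Writing $\delta_{a'} := \max_{a''}Q(s',a'') - Q(s',a')$ and pairing each non-optimal $a'$ with the maximizer $a^*$, the two-point identity $e^y/(e^x+e^y) \leq 1/(1+\exp(x-y))$ together with $\delta_{a'} \leq 2Q_{\text{max}}$ yields the exponential term directly, re-deriving the exponential branch inside Lemma~\ref{lemma:diff_bound} without the polynomial alternative. Feeding this exponential per-step bound into the same contraction-plus-recursion scheme used in Part (I) then gives $\|\cT^k Q_0 - \cTs^k Q_0\|_\infty \leq 2\gamma(m-1)Q_{\text{max}}/[(1-\gamma)(1+\exp(\tau))]$, which decays exponentially in $\tau$ and, as required, invokes nothing from Part (I).

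The hardest step is the exponential per-step bound for Part (II): the $\max\{\cdot,\cdot\}$ form of Lemma~\ref{lemma:diff_bound} cannot be used as a black box, since its polynomial branch $1/(\tau+2)$ dominates for large $\tau$, so the proof must re-derive the exponential branch on its own terms. The main subtlety is handling actions that nearly tie with the maximum; they contribute negligibly to the softmax--max gap because $p_{a'}\delta_{a'}$ is small, but the pairing argument has to be arranged so that their presence does not reintroduce a polynomial correction that would defeat the claim.
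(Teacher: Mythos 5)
Your Part (I) is correct and matches the paper's argument in substance: the $\limsup$ bound is the same induction using $\cTs Q \leq \cT Q$ pointwise together with monotonicity of $\cT$ (and, like the paper, you correctly avoid assuming monotonicity of $\cTs$ itself); your $\liminf$ bound via the contraction-plus-telescoping recursion is equivalent to the paper's induction on the conjecture $\cT^{k} Q_0 - \cTs^{k} Q_0 \leq \sum_{j=1}^{k}\gamma^{j}\zeta$, and it produces the same geometric series and the same final constant.

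Part (II), however, has a genuine gap. The standalone per-step inequality you propose, $\max_{a'} Q(s',a') - sm_{\tau}(Q(s',\cdot)) \leq (m-1)\,2Q_{\text{max}}/(1+\exp(\tau))$, is false as a bound uniform over $Q$. Writing $\delta$ for a gap $\max_{a''}Q(s',a'')-Q(s',a')$, your pairing argument bounds the corresponding term by $\delta/(1+\exp(\tau\delta))$; the supremum of this expression over $\delta\in(0,2Q_{\text{max}}]$ is attained near $\delta\approx 1.28/\tau$ and is of order $1/\tau$, not $e^{-\tau}$. Concretely, $\delta=1/\tau$ gives $\tfrac{1/\tau}{1+e}$, which exceeds $2Q_{\text{max}}/(1+e^{\tau})$ for all large $\tau$. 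This is precisely why the upper bound in \lemref{lemma:diff_bound} must carry the polynomial branch $1/(\tau+2)$: near-ties cannot be argued away, and no rearrangement of the pairing removes the $\Theta(1/\tau)$ worst case. You flag this subtlety yourself but do not resolve it --- and it cannot be resolved in the uniform form you want. The paper's proof of Part (II) sidesteps it by abandoning uniformity over $Q$: for the given Q-function it lets $\delta_{i^{*}}(s)>0$ be the smallest strictly positive gap, discards the zero gaps, and bounds each remaining term by $\delta_i(s)\exp[-\tau\delta_{i^{*}}(s)]$, yielding $\frac{\gamma(1-\gamma^{k})}{1-\gamma}\exp[-\tau\delta_{i^{*}}(s)]\sum_{i\geq i^{*}}\delta_i(s)$, i.e.\ exponential decay in $\tau$ with an \emph{instance-dependent} rate $\delta_{i^{*}}(s)$ rather than a universal one. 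To salvage your route you would need either to adopt this instance-dependent rate or to add an explicit hypothesis such as a uniform lower bound on the nonzero action gaps.
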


A noteworthy point about part $(I)$ of Theorem~\ref{thm:convergebound} is it does not contradict the negative convergence results for $\cTs$ since its proof and result do not need to assume the convergence of $\cTs$. Furthermore, this bound implies that non-convergence for $\cTs$ is different from divergence or oscillation with an arbitrary range: Even though $\cTs$ may not converge in certain scenarios, the Q-values could still be within a reasonable range~\footnote{ The lower bound can become loose when $\tau$ is extremely small, but this degenerate case has never been used in experiments.}. $\cTs$ also has other benefits over $\cT$, as shown in Section~\ref{sec:why}. 


\begin{figure*}[!t]
	\vspace{-0.1in}
	\centering
	\subfigure{\includegraphics[width=0.17\linewidth]{./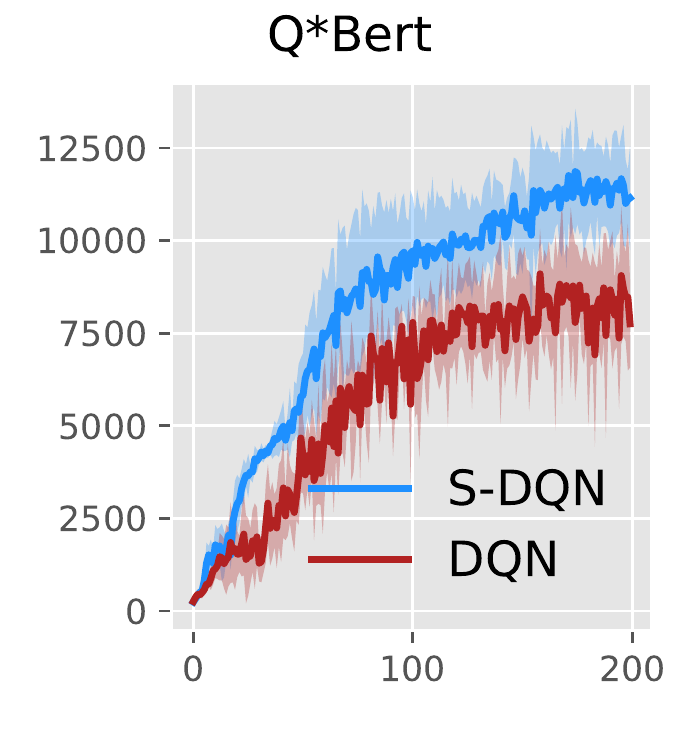}} 
    \hspace{-0.1in}
	\subfigure{\includegraphics[width=0.17\linewidth]{./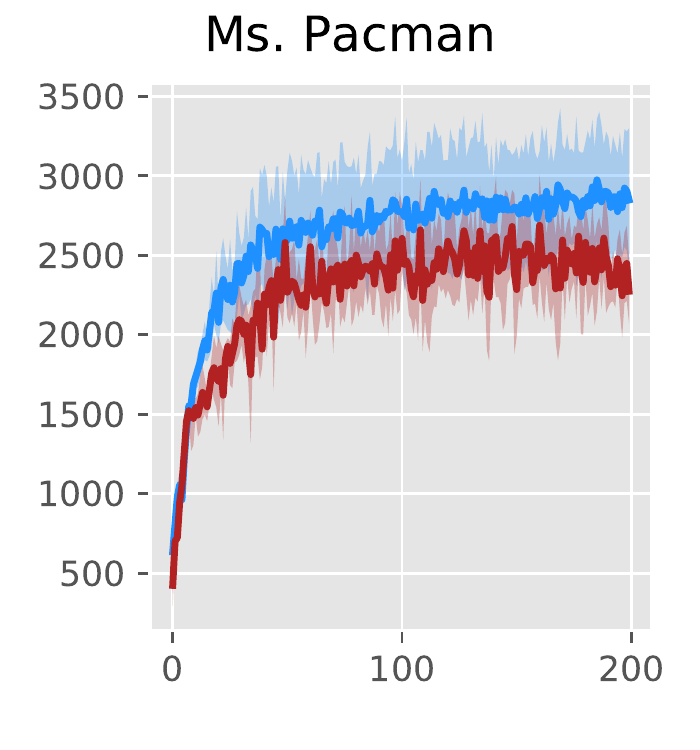}}
	\hspace{-0.1in}
	\subfigure{\includegraphics[width=0.17\linewidth]{./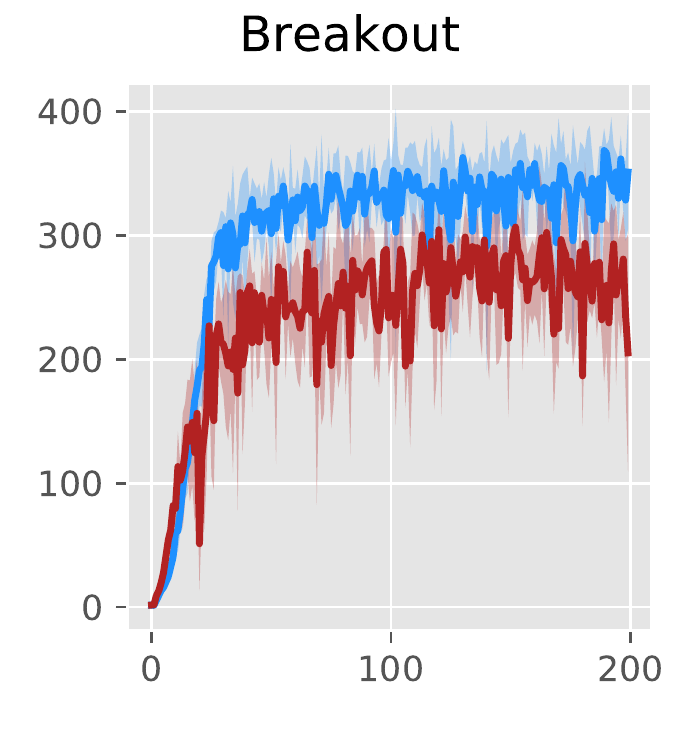}} 
	\hspace{-0.1in}
	\subfigure{\includegraphics[width=0.17\linewidth]{./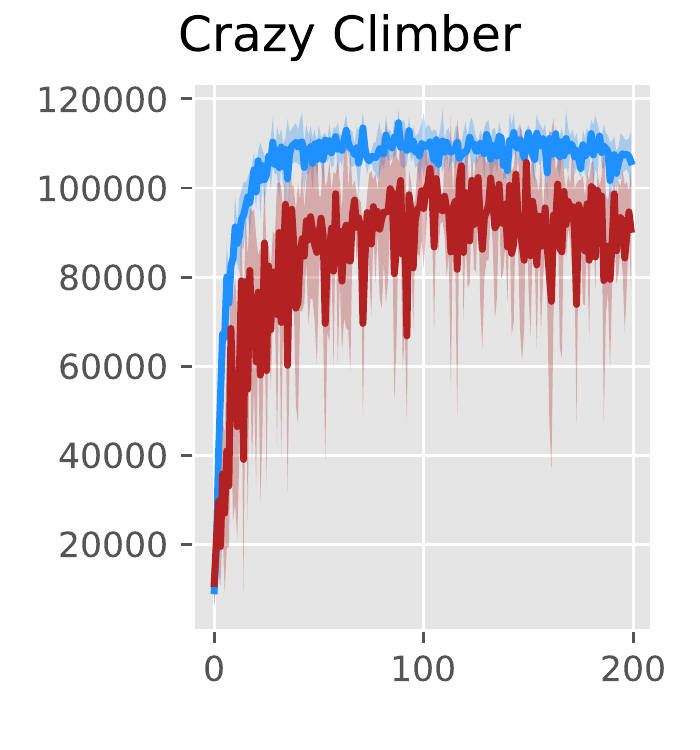}} 
	\hspace{-0.1in}
	\subfigure{\includegraphics[width=0.17\linewidth]{./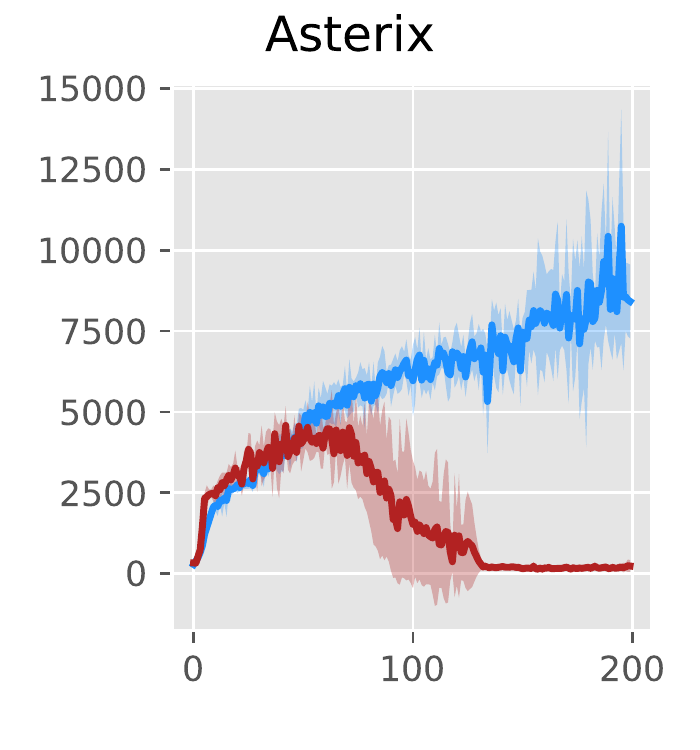}} 
	\hspace{-0.1in}
	\subfigure{\includegraphics[width=0.17\linewidth]{./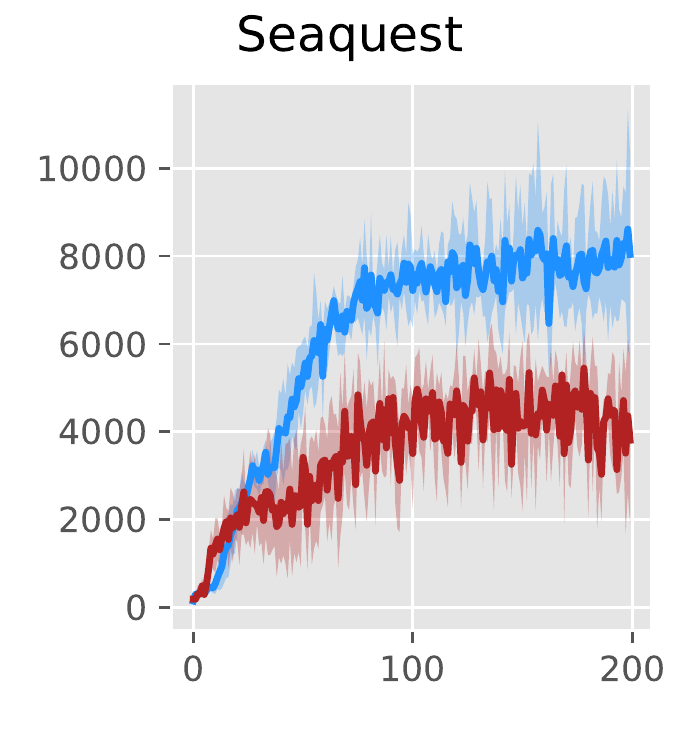}}
	
	\vspace{-0.25in}
	\subfigure{\includegraphics[width=0.17\linewidth]{./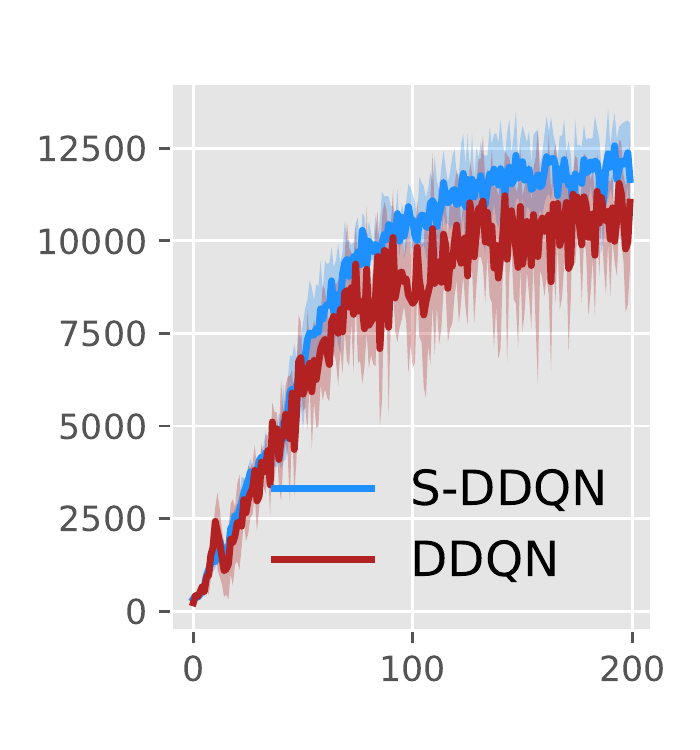}} 
    \hspace{-0.1in}
	\subfigure{\includegraphics[width=0.17\linewidth]{./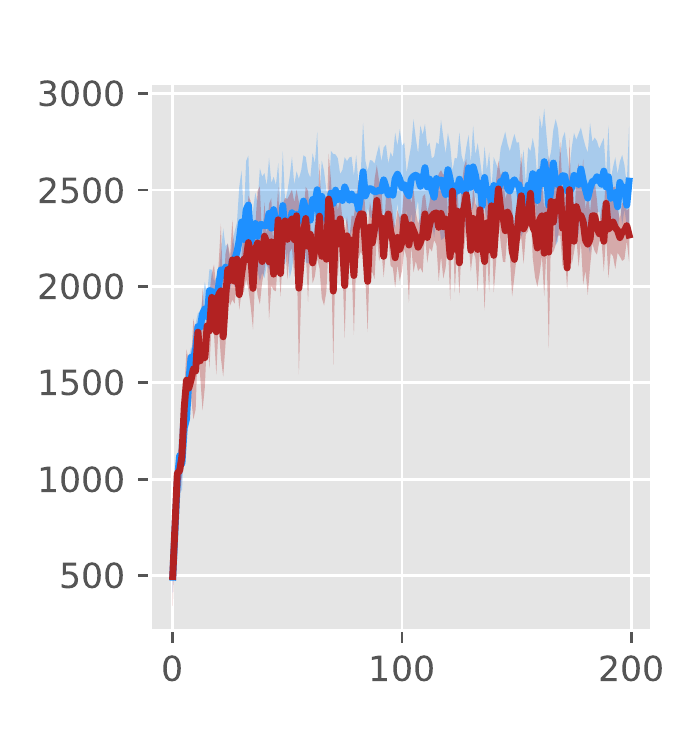}}
	\hspace{-0.1in}
	\subfigure{\includegraphics[width=0.17\linewidth]{./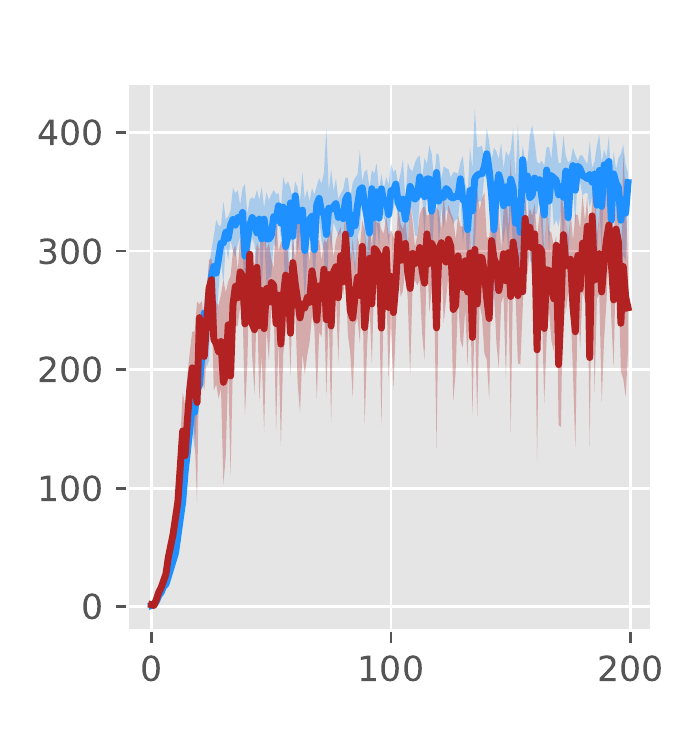}} 
	\hspace{-0.1in}
	\subfigure{\includegraphics[width=0.17\linewidth]{./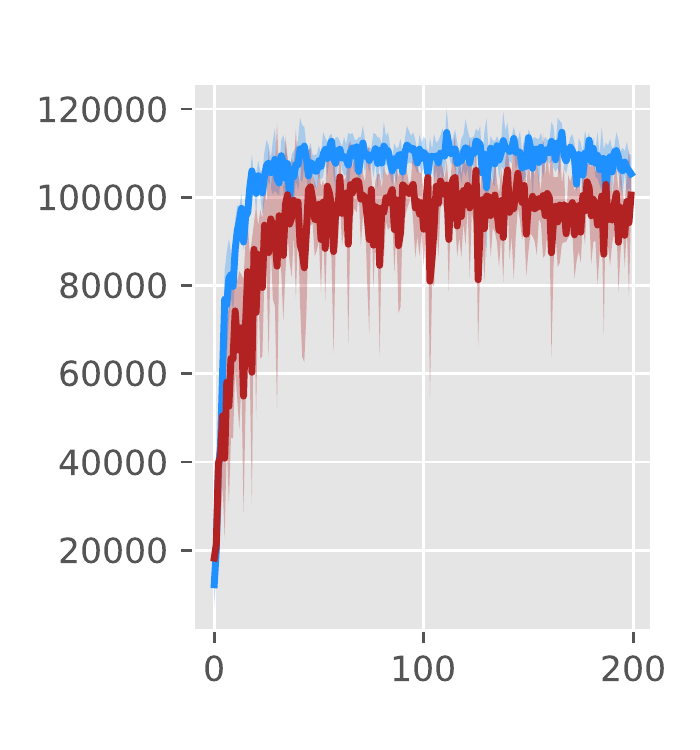}} 
	\hspace{-0.1in}
	\subfigure{\includegraphics[width=0.17\linewidth]{./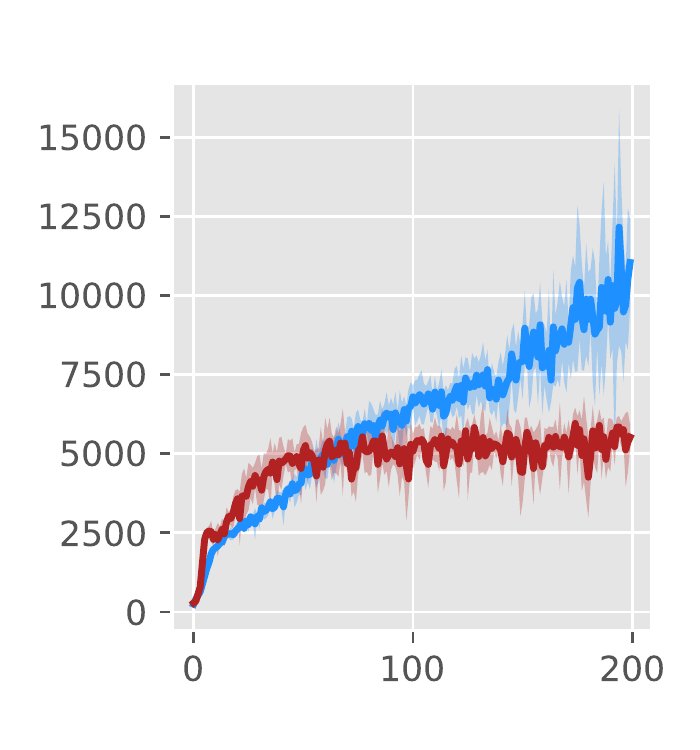}} 
	\hspace{-0.1in}
	\subfigure{\includegraphics[width=0.17\linewidth]{./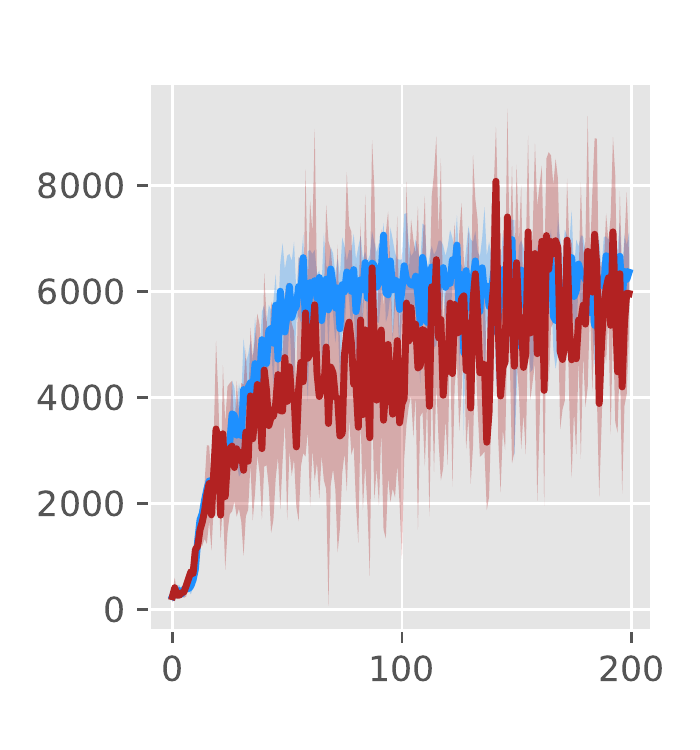}}
	
	\vspace{-0.25in}
	\subfigure{\includegraphics[width=0.17\linewidth]{./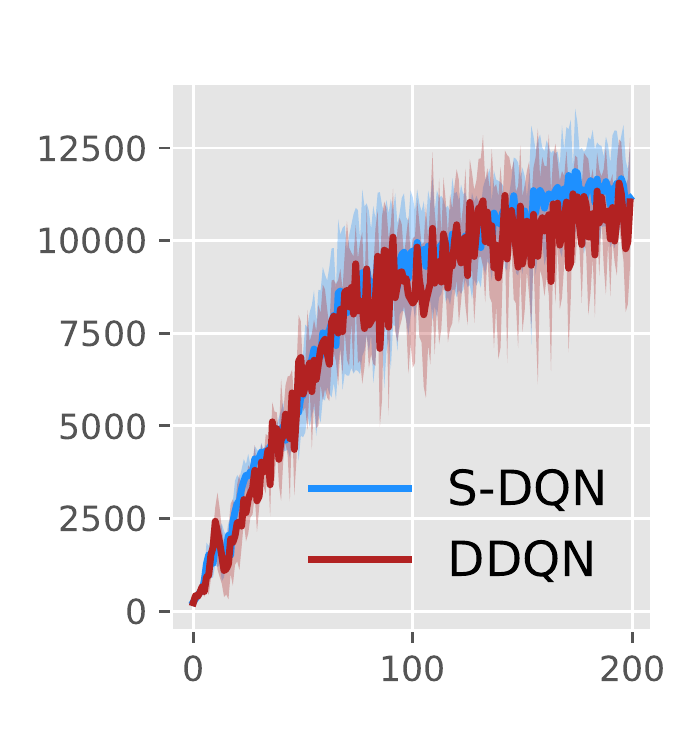}} 
    \hspace{-0.1in}
	\subfigure{\includegraphics[width=0.17\linewidth]{./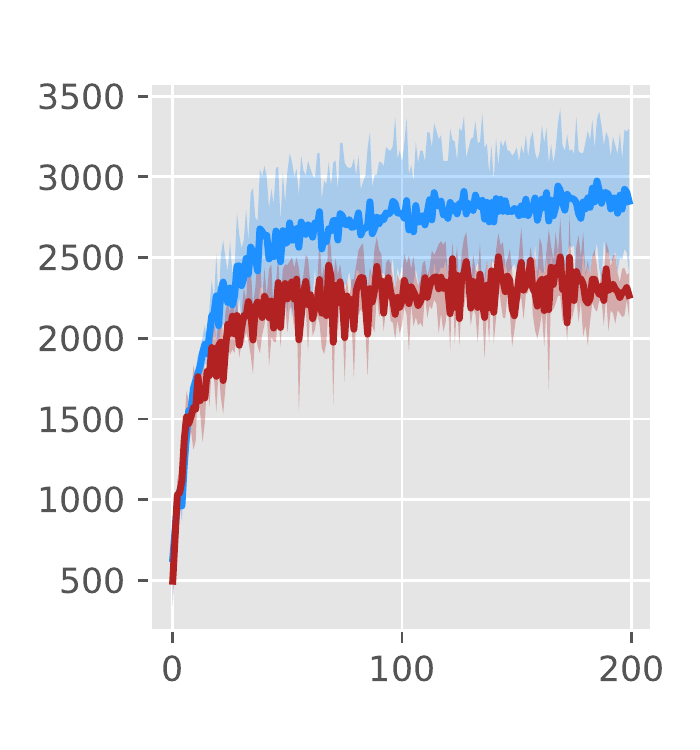}}
	\hspace{-0.1in}
	\subfigure{\includegraphics[width=0.17\linewidth]{./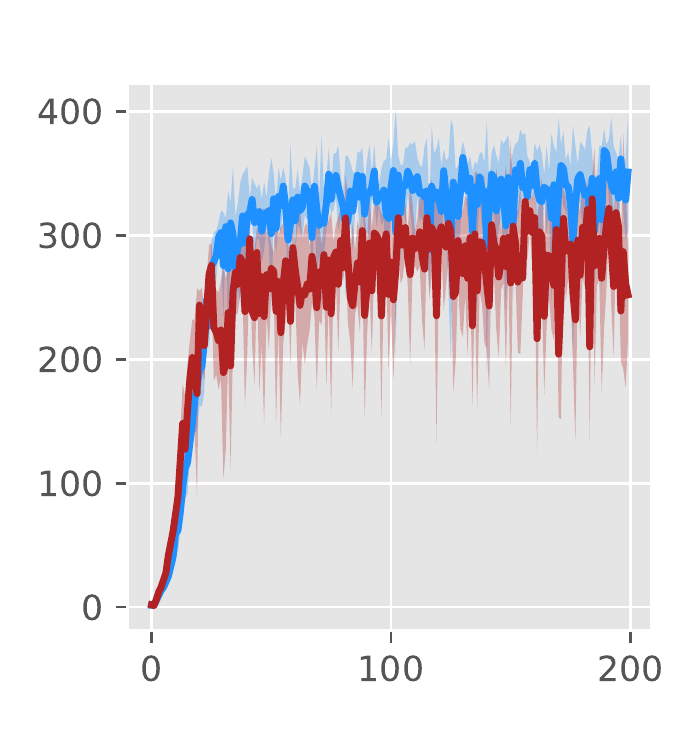}} 
	\hspace{-0.1in}
	\subfigure{\includegraphics[width=0.17\linewidth]{./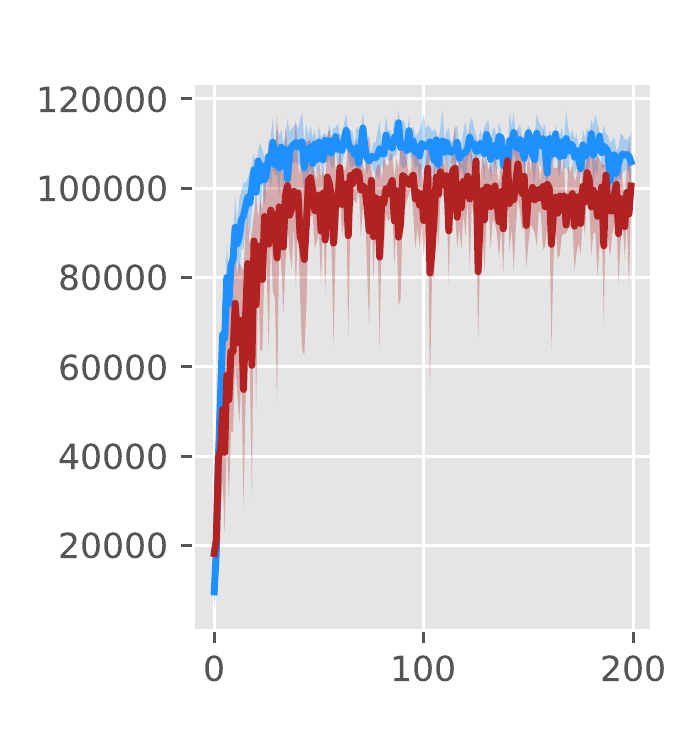}} 
	\hspace{-0.1in}
	\subfigure{\includegraphics[width=0.17\linewidth]{./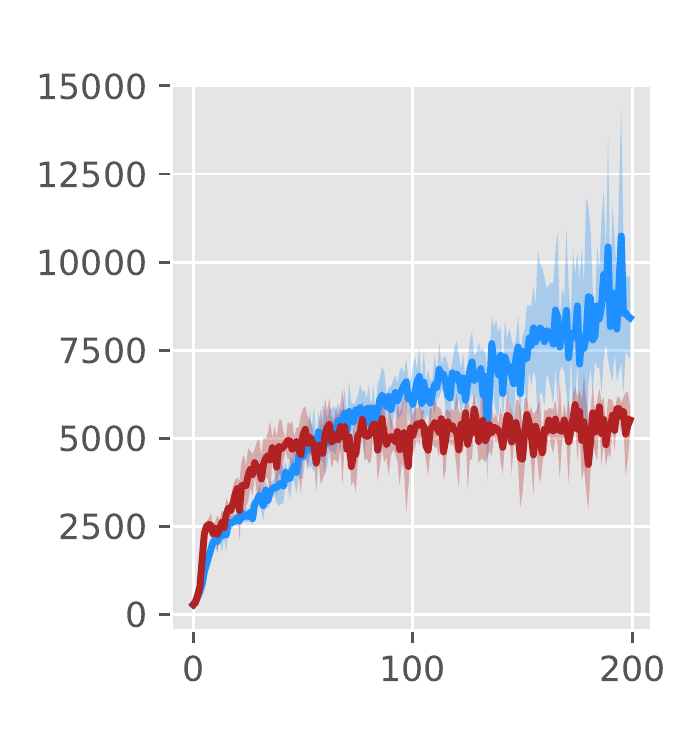}} 
	\hspace{-0.1in}
	\subfigure{\includegraphics[width=0.17\linewidth]{./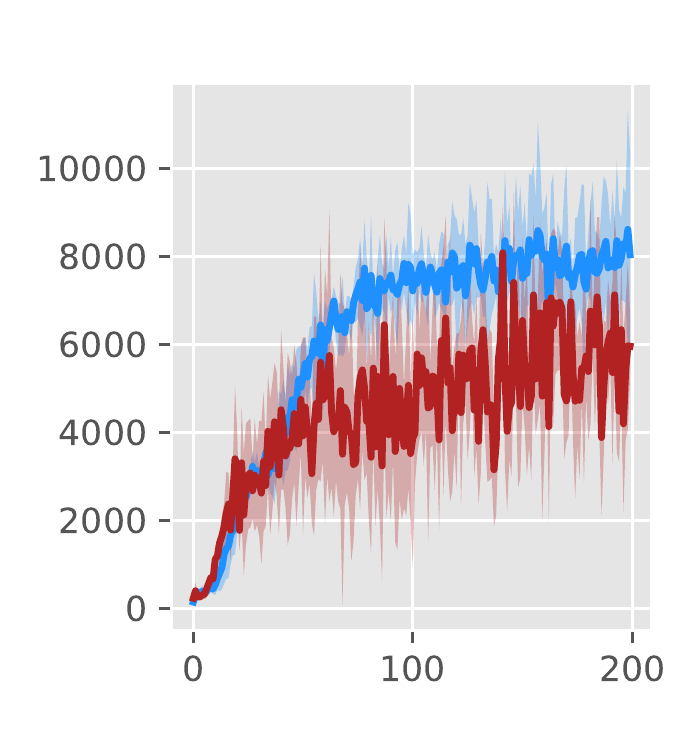}}
	
	\caption{Performance of (top row) S-DQN vs. DQN, (middle row) S-DDQN vs. DDQN, and (bottom row) S-DQN vs. DDQN  on the Atari games. X-axis and Y-axis correspond to the training epoch and test score, respectively. All curves are averaged over five independent random seeds.}
	\label{fig:score}
	\vspace{-0.2in}
\end{figure*}

Note that although the bound for mellowmax under the entropy-regularized~\gls{MDP} framework was shown in~\citet{LeeChoiOh} to have a better scalar term $\log(m)$ instead of $(m-1)$, its convergence rate is linear w.r.t. $\tau$. In contrast, our softmax Bellman operator result has an exponential rate, as shown in Part $(II)$, which potentially gives insight into why very large values of $\tau$ are not required experimentally in Section~\ref{sec:experiments}. The error bound in the sparse Bellman operator~\citep{LeeChoiOh} improves upon $\log (m)$, but is still linear w.r.t. $\tau$.  We further empirically illustrate the faster convergence for the softmax Bellman operator in Figure~\ref{fig:comp} 
of Section~\ref{sec:comparison}.

\section{Main Experiments}
\label{sec:experiments}

Our theoretical results apply to the case of a tabular value function representation and known next-state distributions, and they bound suboptimality rather than suggesting superiority of softmax. The goal of our main experiments is to show that despite being sub-optimal w.r.t. the Q-values, the softmax Bellman operator is still useful in practice by improving the~\gls{DQN} and~\gls{DDQN} algorithms. Such an improvement is entirely attributable to changing the max operator to the softmax operator in the target network.


We combine the softmax operator with~\gls{DQN} and~\gls{DDQN}, by replacing the max function therein
with the softmax function, with all other steps being the same. 
The corresponding new algorithms 
are termed as S-DQN and S-DDQN, respectively. Their exploration strategies are set to be $\epsilon$-greedy, the same as~\gls{DQN} and~\gls{DDQN}. 
Our code is provided at~\url{https://github.com/zhao-song/Softmax-DQN}.

We tested on six Atari games: Q*Bert, Ms. Pacman, Crazy Climber, Breakout, Asterix,
and Seaquest. Our code is built on the \texttt{Theano+Lasagne} implementation from
\url{https://github.com/spragunr/deep_q_rl/}. The training contains 200 epochs in total.
The test procedures and all the hyperparameters are set the same as~\gls{DQN}, with
details described in~\citet{Mnih+al:2015}. The inverse temperature parameter $\tau$ was selected based on a grid search over $\{1, 5, 10\}$. We also implemented the logarithmic cooling
scheme~\citep{MitraRomeoSangiovanni} in simulated annealing to gradually increase $\tau$, but did not observe a better policy,
compared with the constant temperature. The results statistics are obtained by running
with five independent random seeds.

\begin{table}[!t]
    \caption{Mean of test scores for different values of $\tau$ in S-DDQN 
    (standard deviation in parenthesis). Each game is denoted with its
    initial. Note that S-DDQN reduces to DDQN when $
    \tau = \infty$.} 
    \centering
    \begin{tabular}{p{0.02in}cccc}
	    \toprule
	    $\tau$ & 1 & 5 & 10 & $\infty$ \\ 
	    \midrule
	    \multirow{2}{*}{Q} & $\mathbf{12068.66}$ & 11049.28 & 11191.31 & 10577.76 \\
	    & (1085.65) & (1565.57) & (1336.35) & (1508.27) \\
	    \hline
	    \multirow{2}{*}{M} & 2492.40 & $\mathbf{2566.44}$ & 2546.18 & 2293.73 \\
	    & (183.71) & (227.24) & (259.82) & (160.50) \\
	    \hline
	    \multirow{2}{*}{B} & 313.08 & $\mathbf{350.88}$ & 303.71 & 284.64 \\
	    & (20.13) & (35.58) & (65.59) & (60.83) \\
	    \hline
	    \multirow{2}{*}{C} & 107405.11 & $\mathbf{111111.07}$ & 104049.46 & 96373.08 \\
	    & (4617.90) & (5047.19) & (6686.84) & (9244.27) \\
	    \hline
	    \multirow{2}{*}{A} & 3476.91 & $\mathbf{10266.12}$ & 6588.13 & 5523.80 \\
	    & (460.27) & (2682.00) & (1183.10) & (694.72) \\
	    \hline
	    \multirow{2}{*}{S} & 272.20 & 2701.05 & $\mathbf{6254.01}$ & 5695.35 \\
	    & (49.75) & (10.06) & (697.12) & (1862.59) \\
	    \bottomrule
	\end{tabular}
    \label{tab:score_tau}
    \vspace{-0.25in}
\end{table} 

Figure~\ref{fig:score} shows the mean and one standard deviation for the average test score
on the Atari games, as a function of the training epoch: 
In the top two rows, S-DQN and S-DDQN generally achieve higher maximum scores and faster learning than their max counterparts, illustrating the promise of replacing max
with softmax. 
For the game Asterix, often used as an example of the overestimation
issue in~\gls{DQN}~\citep{HasseltGuezSilver,AnschelBaramShimkin}, both S-DQN
and S-DDQN have much higher test scores than their max counterparts, which suggests that the
softmax operator can mitigate the overestimation bias. 
Although not as dramatic as for Asterix, the bottom row of Figure~\ref{fig:score} shows that S-DQN generally outperforms DDQN with higher test scores for the entire test suite, which suggests the possibility of using softmax operator over double Q-learning~\cite{Hasselt} to reduce the overestimation bias, in the presence of function approximation.

Table~\ref{tab:score_tau} shows the
test scores of S-DDQN with different values of $\tau$, obtained by
averaging the scores from the last 10 epochs. Note that S-DDQN achieves higher test scores
than its max counterpart, for most of the values of $\tau$. Table~\ref{tab:score_tau} further suggests
a trade-off when selecting the values for 
$\tau$: Setting $\tau$ too small will move $\cTs$ further away from the Bellman optimality, since
the max operator (corresponding to $\tau = \infty$) is employed in the standard Bellman equation. 
On the other hand, using a larger $\tau$ will lead to the issues of overestimation 
and high gradient noise, as to be discussed in Section~\ref{sec:why}.

\begin{table}[!ht]
	\vspace{-0.1in}
    \caption{Mean of test scores for different Bellman operators (standard deviation in parenthesis). The statistics are averaged over five independent random seeds.} 
    \centering
    \begin{tabular}{lccc}
	    \toprule
	     & Max & Softmax & Mellowmax \\ 
	    \midrule
	    \multirow{2}{*}{Q*Bert} & 8331.72 & 11307.10 & $\mathbf{11775.93}$ \\
	    & (1597.67) & (1332.80) & (1173.51) \\
	    \hline
	    \multirow{2}{*}{Ms. Pacman} & 2368.79 & $\mathbf{2856.82}$ & 2458.76\\
	    & (219.17) & (369.75) & (130.34) \\
	    \hline
	    \multirow{2}{*}{C. Climber} & 90923.40 & $\mathbf{106422.27}$ & 99601.47\\
	    & (11059.39) & (4821.40) & (19271.53)\\
	    \hline
	    \multirow{2}{*}{Breakout} & 255.32 & 345.56 & $\mathbf{355.94}$\\
	    & (64.69) & (34.19) & (25.85) \\
	    \hline
	    \multirow{2}{*}{Asterix} & 196.91 & 8868.00 & $\mathbf{11203.75}$\\
	    & (135.16) & (2167.35) & (3818.40)\\
	    \hline
	    \multirow{2}{*}{Seaquest} & 4090.36 & $\mathbf{8066.78}$ & 6476.20 \\
	    & (1455.73) & (1646.51) & (1952.12)\\
	    \bottomrule
	\end{tabular}
    \label{tab:score_mellow}
    \vspace{-0.1in}
\end{table}

To compare against the mellowmax operator, we combine it with DQN in the same off-policy fashion as the softmax operator~\footnote{We tried the mellowmax operator for exploration as in~\citet{AsadiLittman}, but observed much worse performance.}. Note that the root-finding approach for mellowmax in~\citet{AsadiLittman} needs to compute the optimal inverse temperature parameter for every state, and thus is too computationally expensive to be applied here. Consequently, we tune the inverse temperature parameter for both softmax and mellowmax operators from \{1, 5, 10\}, and report the best scores. Table~\ref{tab:score_mellow} shows that both softmax and mellowmax operators can achieve higher test scores than the max operator. Furthermore, the scores from softmax are higher or similar to those of mellowmax on all games except Asterix. The competitive performance of the softmax operator  here suggests that it is still preferable in certain domains, despite its non-contraction property.

\section{Why Softmax Helps?}
\label{sec:why}

One may wonder why the softmax Bellman operator can achieve the surprising performance in Section~\ref{sec:experiments}, as the greedy policy from the Bellman equation suggests that the max operator should be optimal. The softmax operator is not used for exploration in this paper, nor is it motivated by regularizing the policy, as in the entropy regularized~\glspl{MDP}~\cite{fox2016taming,SchulmanAbbeelChen,neu2017unified,NachumNorouziXuSchuurmans,AsadiLittman,LeeChoiOh}. 

As discussed in earlier work~\citep{HasseltGuezSilver,AnschelBaramShimkin}, the max operator leads to the significant issue of overestimation for the Q-function in~\glspl{DQN}, which further causes excessive gradient noise to destabilizes the optimization of neural networks.
Here we aim to provide analysis of how the softmax Bellman operator can overcome these issues, and also the corresponding evidence for overestimation and gradient noise reduction in~\glspl{DQN}. Although our analysis is focused on the softmax Bellman operator, this work could potentially give further insight into practical benefits of entropy regularization as well. 

\subsection{Overestimation Bias Reduction}
\label{subsec:bias_reduction}

Q-learning's overestimation bias, due to the max operator, was first discussed in~\citet{Thrun93b}. It was later shown in~\citet{HasseltGuezSilver} and \citet{AnschelBaramShimkin} that overestimation leads to the poor performance of~\glspl{DQN} in some Atari games. Following the same assumptions as~\citet{HasseltGuezSilver}, we can show the softmax operator reduces the overestimation bias. Furthermore, we quantify the range of the overestimation reduction, by providing both lower and upper bounds.

\begin{figure*}[!t]
    \vspace{-0.1in}
	\centering
	\subfigure{\includegraphics[width=0.325\linewidth]{./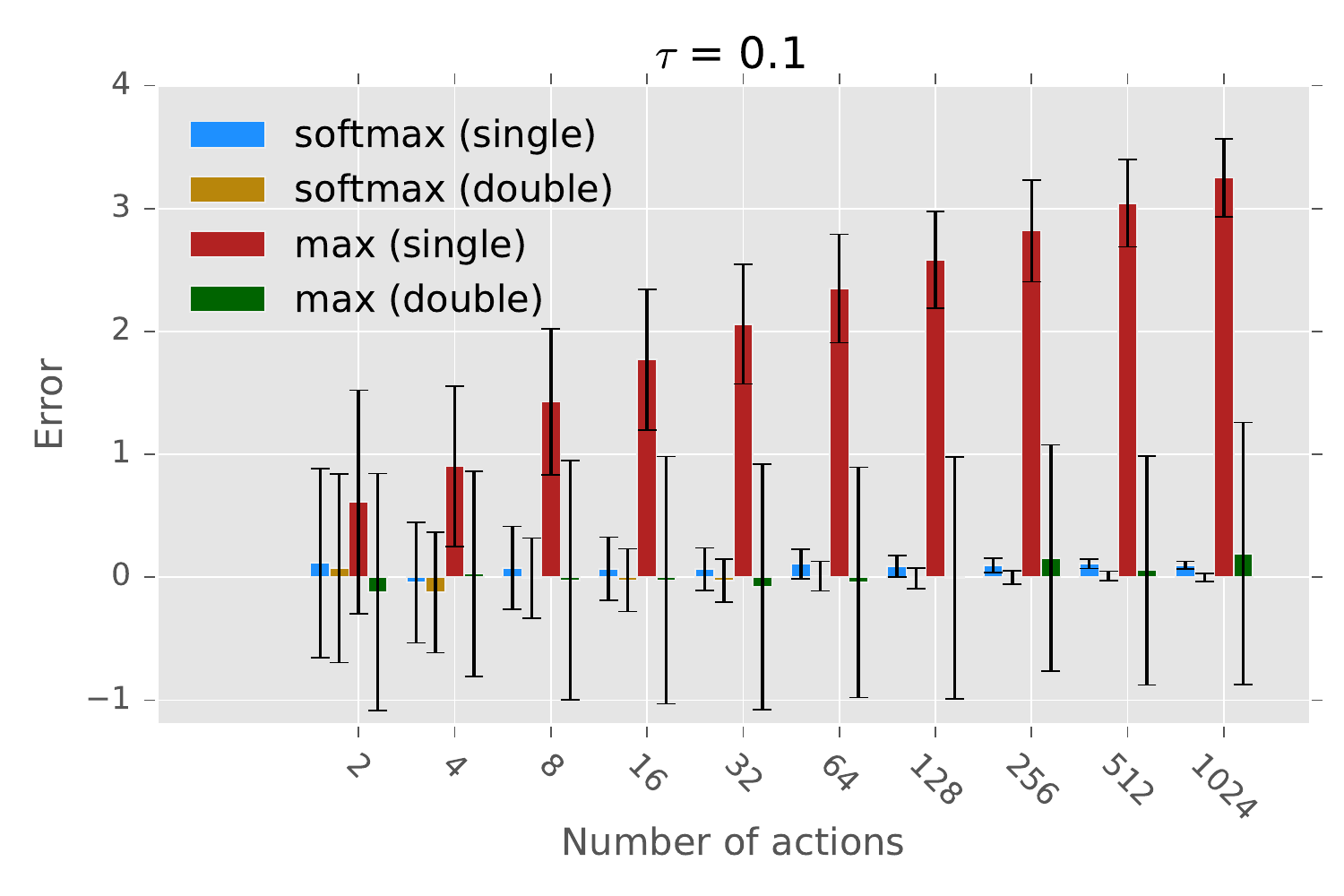}} 
	\subfigure{\includegraphics[width=0.325\linewidth]{./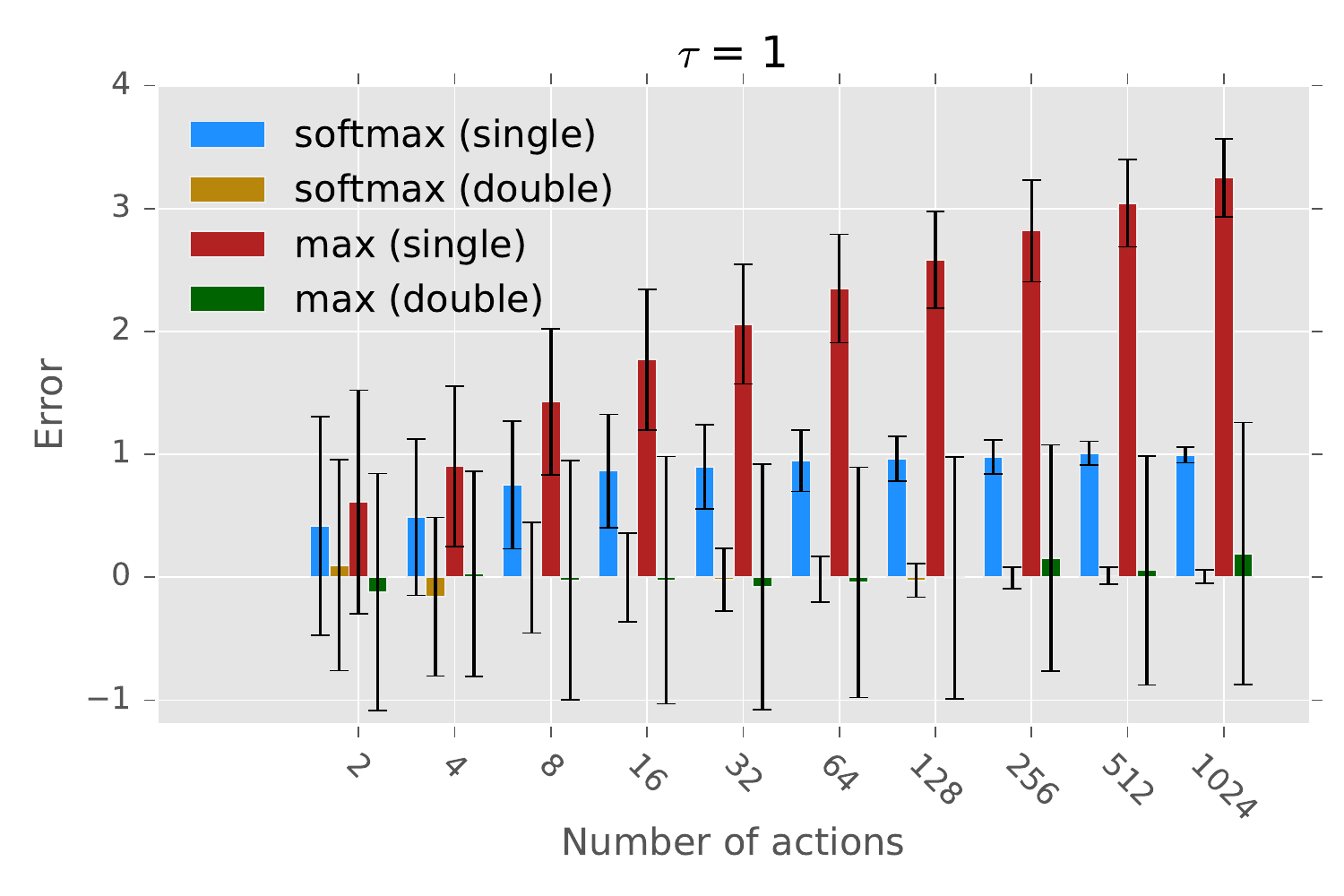}} 
	\subfigure{\includegraphics[width=0.325\linewidth]{./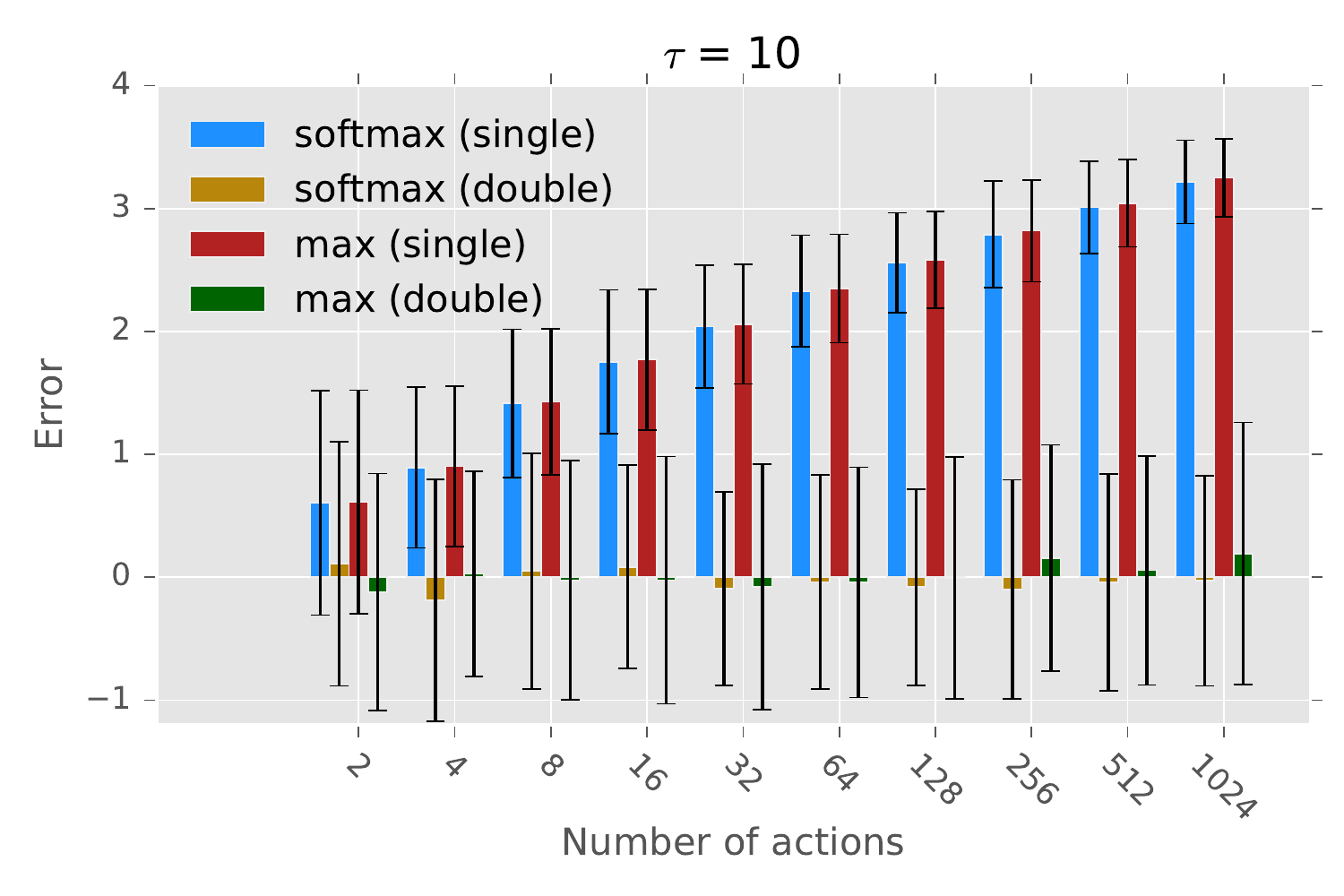}} 
	\vspace{-0.1in}
	\caption{The mean and one standard deviation for the overestimation error for different values of $\tau$.}
	\vspace{-0.15in}
	\label{fig:bias}
\end{figure*}

\begin{theorem}
Given the same assumptions as~\citet{HasseltGuezSilver}, where 
$(A1)$ there exists some $V^{*} (s)$ such that the true state-action value function
satisfies $Q^{*} (s, a) = V^{*} (s)$, for different actions. 
$(A2)$ the estimation error is modeled as $Q_t (s, a) = V^{*} (s) + \epsilon_a$, 
then 

$(I)$ the overestimation errors from $\cTs$ are smaller or equal to those of $\cT$ using the max operator, for any $\tau \geq 0$;

$(II)$ the overestimation reduction by using $\cTs$ in lieu of $\cT$ is within $\big[ \frac{\widehat{\delta} (s)}{m \exp [\tau \,\widehat{\delta} (s) ] }, \, (m-1) \max \{ \frac{1}{\tau+2}, \frac{ 2  Q_{\text{max}} }{1 + \exp(\tau) } \} \big]$;

$(III)$ the overestimation error for $\cTs$ monotonically increases w.r.t. $\tau \in [0, \infty)$.

\label{thm:bias}
\end{theorem}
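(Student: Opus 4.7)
The plan is to reduce each of the three parts to a calculation on the \emph{error vector} $\epsilon = (\epsilon_{a_1},\dots,\epsilon_{a_m})$ by exploiting the shift-invariance of both operators. Under $(A1)$, $\max_{a'} Q^*(s',a') = V^*(s')$, and under $(A2)$, $Q_t(s',a') = V^*(s') + \epsilon_{a'}$. Since adding a constant $V^*(s')$ to every coordinate leaves the softmax distribution unchanged, one gets
\begin{align*}
\max_{a'} Q_t(s',a') &= V^*(s') + \max_{a'}\epsilon_{a'},\\
sm_\tau\bigl(Q_t(s',\cdot)\bigr) &= V^*(s') + f_\tau^T(\epsilon)\,\epsilon.
\end{align*}
Hence the overestimation errors of $\cT$ and $\cTs$ (relative to $V^*(s')$) are exactly $\max_{a'}\epsilon_{a'}$ and $f_\tau^T(\epsilon)\,\epsilon$, respectively.

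For part $(I)$, I would simply note that the softmax weights form a probability distribution, so $f_\tau^T(\epsilon)\epsilon$ is a convex combination of the $\epsilon_{a'}$, hence is upper bounded by $\max_{a'}\epsilon_{a'}$. This bound holds for every realization and therefore for the expectation, giving the claim for any $\tau \geq 0$. For part $(II)$, the reduction is exactly $\max_{a'}\epsilon_{a'} - f_\tau^T(\epsilon)\epsilon$, which is of the form bounded in Lemma~\ref{lemma:diff_bound} (applied to the ``Q-vector'' $\epsilon$). The only thing to verify is that the quantities $\widehat{\delta}(s')$ and $Q_{\text{max}}$ transfer to this setting: the largest gap among $\epsilon_{a'}$'s is exactly the largest gap among $Q_t(s',\cdot)$'s by shift invariance, so plugging $\epsilon$ into the lemma yields the stated interval immediately.

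For part $(III)$, I would treat the overestimation error $g_\epsilon(\tau) = f_\tau^T(\epsilon)\epsilon$ as a function of $\tau$ and show $g_\epsilon'(\tau) \geq 0$. Writing $p_{a'}(\tau) = \exp(\tau\epsilon_{a'})/\sum_{\bar a}\exp(\tau\epsilon_{\bar a})$, a direct computation gives $p'_{a'}(\tau) = p_{a'}(\tau)\bigl(\epsilon_{a'} - \bar\epsilon(\tau)\bigr)$ where $\bar\epsilon(\tau) = \sum_{a'} p_{a'}(\tau)\,\epsilon_{a'} = g_\epsilon(\tau)$. Therefore
\begin{equation*}
g_\epsilon'(\tau) \;=\; \sum_{a'} p_{a'}(\tau)\,\epsilon_{a'}\bigl(\epsilon_{a'} - \bar\epsilon(\tau)\bigr) \;=\; \sum_{a'} p_{a'}(\tau)\bigl(\epsilon_{a'}-\bar\epsilon(\tau)\bigr)^2 \;\geq\; 0,
\end{equation*}
i.e., the derivative equals the variance of $\epsilon$ under the softmax distribution. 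Monotonicity, and the recovery of the max value in the limit $\tau\to\infty$, follows.

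The main obstacle I anticipate is making sure the ``reduction'' is interpreted consistently with Lemma~\ref{lemma:diff_bound} in part $(II)$: the lemma is stated for Q-functions bounded by $Q_{\text{max}}$, whereas here the relevant vector is $\epsilon$. I would handle this by arguing that under $(A2)$ the errors can be taken to range in $[-Q_{\text{max}}, Q_{\text{max}}]$ (or otherwise noting that the upper-bound expression in the lemma depends only on $\tau$ and the range of the inputs, which are controlled by the same $Q_{\text{max}}$ used in the lemma statement). The other steps are routine once the shift-invariance observation is in place.
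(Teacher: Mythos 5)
Your proposal is correct and follows essentially the same route as the paper: shift-invariance reduces both operators to the error vector $\epsilon$, the convex-combination bound gives part $(I)$, part $(II)$ is obtained by translating back to $Q_t$ so that Lemma~\ref{lemma:diff_bound} applies verbatim (which is exactly how the paper resolves the $\widehat{\delta}(s)$/$Q_{\text{max}}$ concern you raise), and your derivative-equals-variance computation for part $(III)$ is the same inequality the paper establishes via Cauchy--Schwarz in its auxiliary monotonicity lemma.
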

\vspace{-0.1in}

An observation about Theorem~\ref{thm:bias} is that for any positive value of $\tau$, there will still be some potential for overestimation bias because noise can also influence the softmax operator. Depending upon the amount of noise, it is possible that, unlike double DQN, the reduction caused by softmax could exceed the bias introduced by max. This can be seen in our experimental results below, which show that it is possible to have negative error (overcompensation) from the use of softmax. However, when combined with double Q-learning, this effect becomes very small and decreases with the number of actions.

To elucidate Theorem~\ref{thm:bias}, we simulate standard normal variables
${\epsilon_a}$ with $100$ independent trials for each action $a$, using the same 
setup as~\citet{HasseltGuezSilver}. 
Figure~\ref{fig:bias} shows the mean and one 
standard deviation of the overestimation bias, for different values of $\tau$ in the softmax operator. For both single and double 
implementations of the softmax operator, they achieve smaller overestimation errors 
than their max counterparts, thus validating our theoretical results. 
Note that the gap becomes smaller when $\tau$ increases, which
is intuitive as $\cTs \rightarrow \cT $ when $\tau \rightarrow \infty$,
and also consistent with the monotonicity result in Theorem~\ref{thm:bias}.

\begin{figure}[!h]
	\vspace{-0.1in}
	\centering
	\subfigure{\includegraphics[width=0.49\linewidth]{./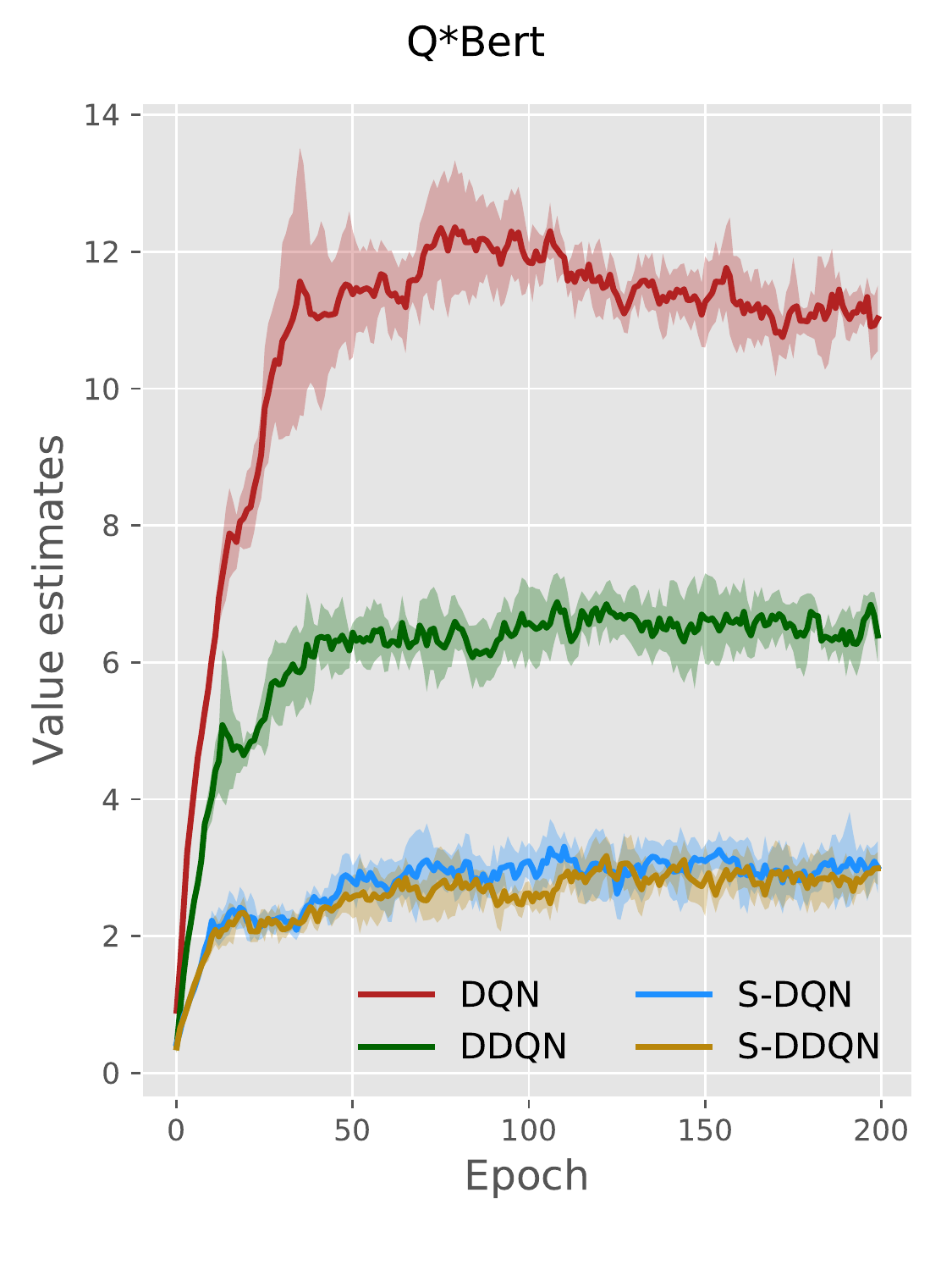}} 
	\vspace{-0.1in}
	\subfigure{\includegraphics[width=0.49\linewidth]{./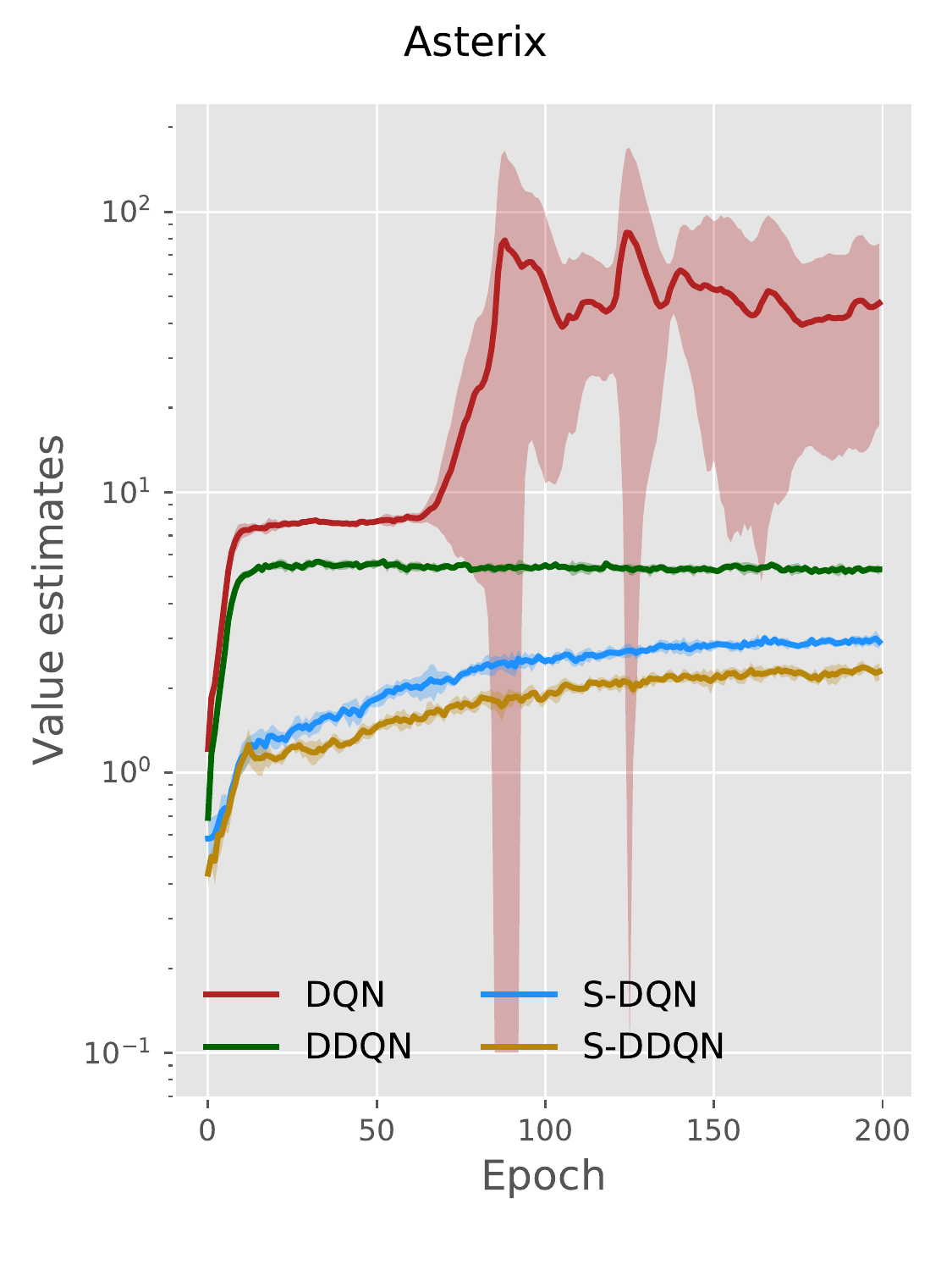}} 
	\vspace{-0.15in}
	\caption{Mean and one standard deviation of the estimated Q-values on the Atari games,
	for different methods.}
	\label{fig:qvalue}
\end{figure}

\begin{figure}[!h]
	\vspace{-0.1in}
	\centering
	\subfigure{\includegraphics[width=0.49\linewidth]{./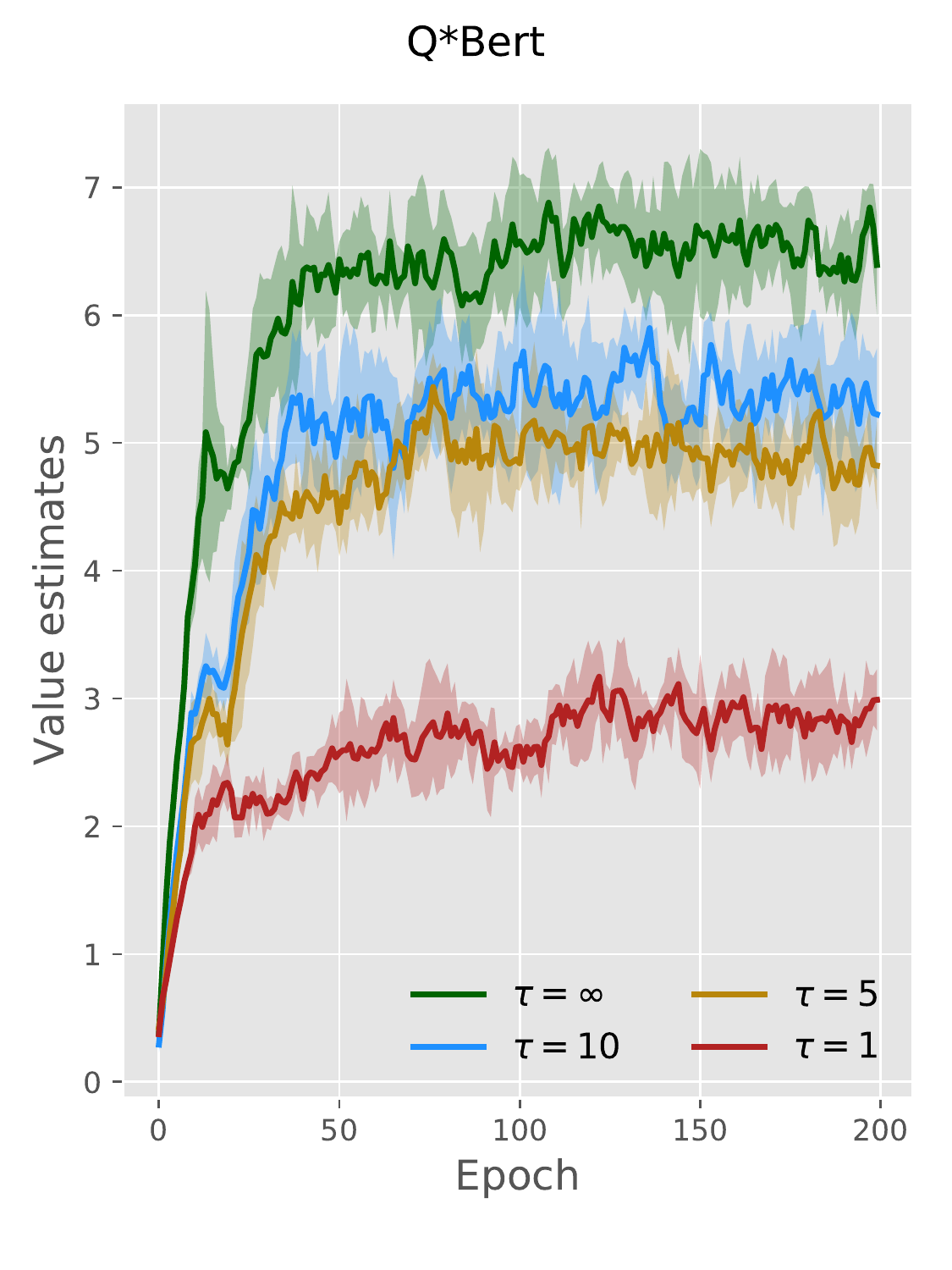}} 
	\vspace{-0.1in}
	\subfigure{\includegraphics[width=0.49\linewidth]{./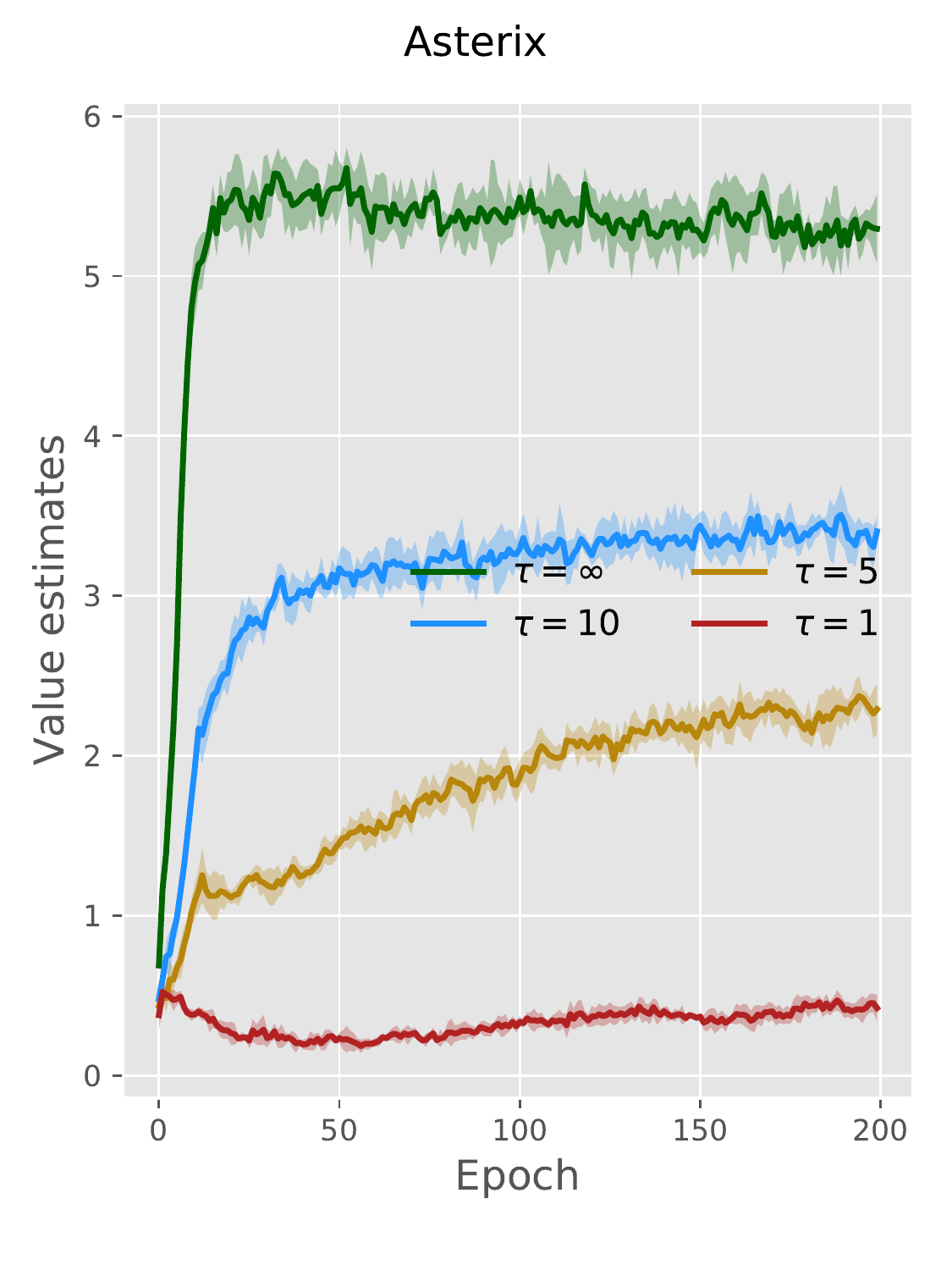}}
	\vspace{-0.1in}
	\caption{Mean and one standard deviation of the estimated Q-values on the Atari games,
	for different values of $\tau$ in S-DDQN.}
	\label{fig:qvalue_tau}
	\vspace{-0.1in}
\end{figure}
\vspace{-0.15in}

We further check the estimated Q-values on Atari games~\footnote{Due to the space limit,
we plot fewer games from here, and provide the full plots in Supplemental Material.}, by plotting the
corresponding mean and one standard deviation in Figure~\ref{fig:qvalue}.
Both S-DQN and S-DDQN achieve smaller values than~\gls{DQN}, which verifies that the softmax operator can reduce the overestimation
bias. This reduction can partly explain the higher test scores for S-DQN and S-DDQN 
in Figure~\ref{fig:score}. Moreover, we plot the estimated Q-values of S-DDQN for different values of $\tau$ in Figure~\ref{fig:qvalue_tau}, and the result is consistent with Theorem~\ref{thm:bias} that increasing $\tau$ leads to larger estimated Q-values and, thus, more risk of overestimation.   It also shows that picking a value of $\tau$ that is very small may introduce a risk of underestimation that can hurt performance, as shown for Asterix, with $\tau=1$. (See also Table \ref{tab:score_tau}.)

\subsection{Gradient Noise Reduction}
\label{subsec:variance_reduction}

\citet{Mnih+al:2015} observed that large gradients can be detrimental to the optimization of~\glspl{DQN}. This was noted in the context of large reward values, but it is possible that overestimation bias can also introduce variance in the gradient and have a detrimental effect. To see this, we first notice that the gradient in the~\gls{DQN} can be represented as
\begin{equation}
	\nabla_{\btheta}  \cL = \mathbb{E}_{s, a, r, s'} \Big\{ \big[ Q_{\btheta} (s, a) - \cT Q_{\btheta^{-}} (s, a) \big] \,
	\nabla_{\btheta} Q_{\btheta} (s, a) \Big\}.
	\label{eq:gradient}
\end{equation}
When the Q-value is overestimated, the magnitude for $\nabla_{\btheta} Q_{\btheta} (s, a)$ in~\eqnref{eq:gradient} could become excessively large and so does the gradient, which may further lead to the instability during optimization. This is supported by Figure~\ref{fig:qvalue}, which shows higher variance in Q-values for DQN vs. others. Since we show in Section~\ref{subsec:bias_reduction} that using softmax in lieu of max in the Bellman operator can reduce the overestimation bias, the gradient estimate in~\eqnref{eq:gradient} can be more stable with softmax. Note that stabilizing the gradient during~\gls{DQN} training was also shown in~\citet{HasseltGuezHesselMnihSilver}, but in the context of adaptive target normalization.

To evaluate the gradient noise on Atari games, we report the $\ell_2$ norm of the gradient in the final fully-connected linear layer, by averaging over 50 independent inputs. As shown in Figure~\ref{fig:grad}, S-DQN and S-DDQN achieve smaller gradient norm and variance than their counterparts, which implies that the softmax operator can facilitate variance reduction in the optimization for the neural networks. For the game Asterix, we also observe that the overestimation of Q-value in~\gls{DQN} eventually causes the gradient to explode, while its competitors avoid this issue by achieving reasonable Q-value estimation. Finally, Figure~\ref{fig:grad_tau} shows that increasing $\tau$ generally leads to higher gradient variance, which is also consistent with our analysis in Theorem~\ref{thm:bias} that the overestimation monotonically increases w.r.t. $\tau$.

\begin{figure}[!h]
	\vspace{-0.1in}
	\centering
	\subfigure{\includegraphics[width=0.49\linewidth]{./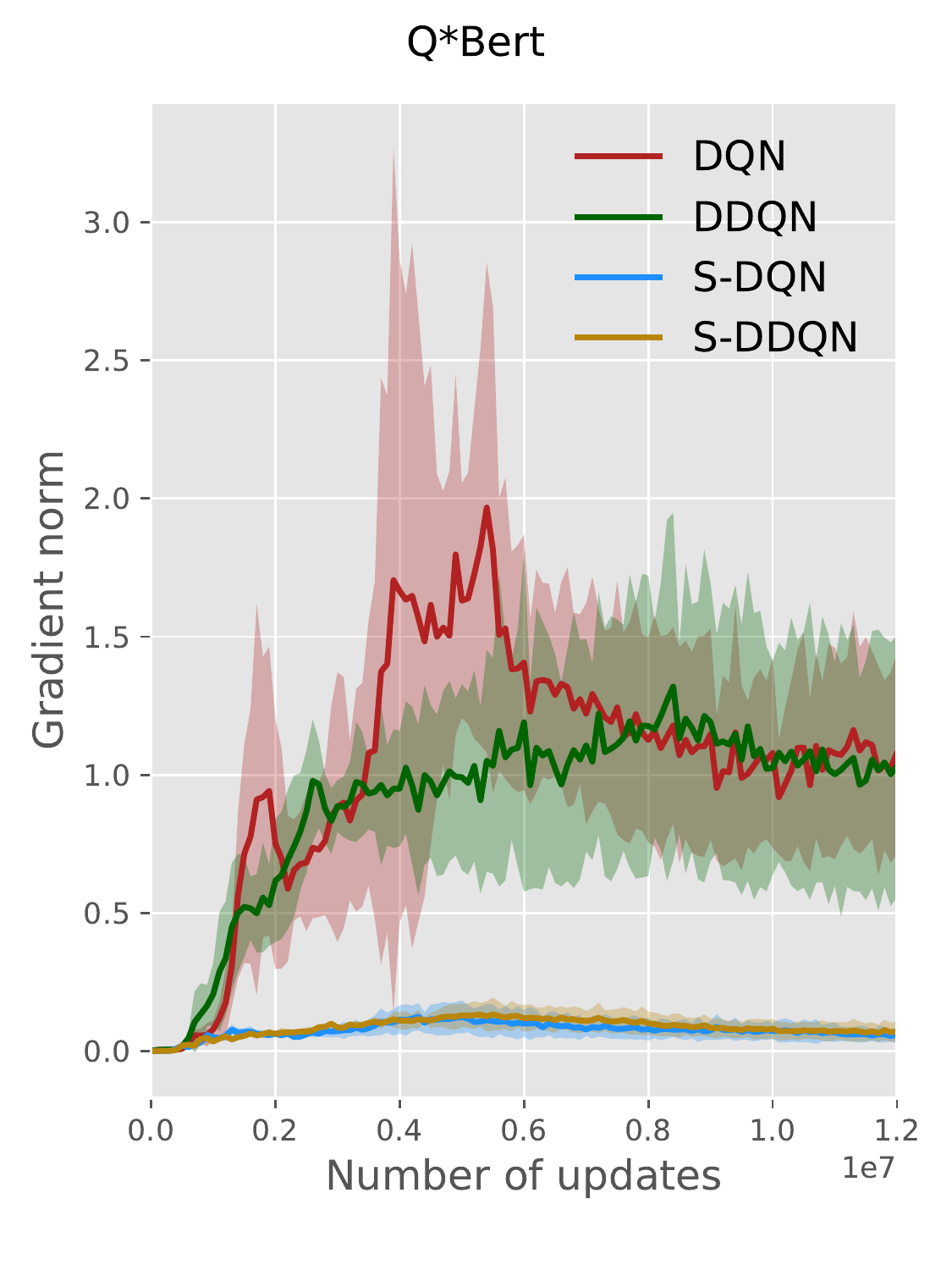}} 
	\vspace{-0.1in}
	\subfigure{\includegraphics[width=0.49\linewidth]{./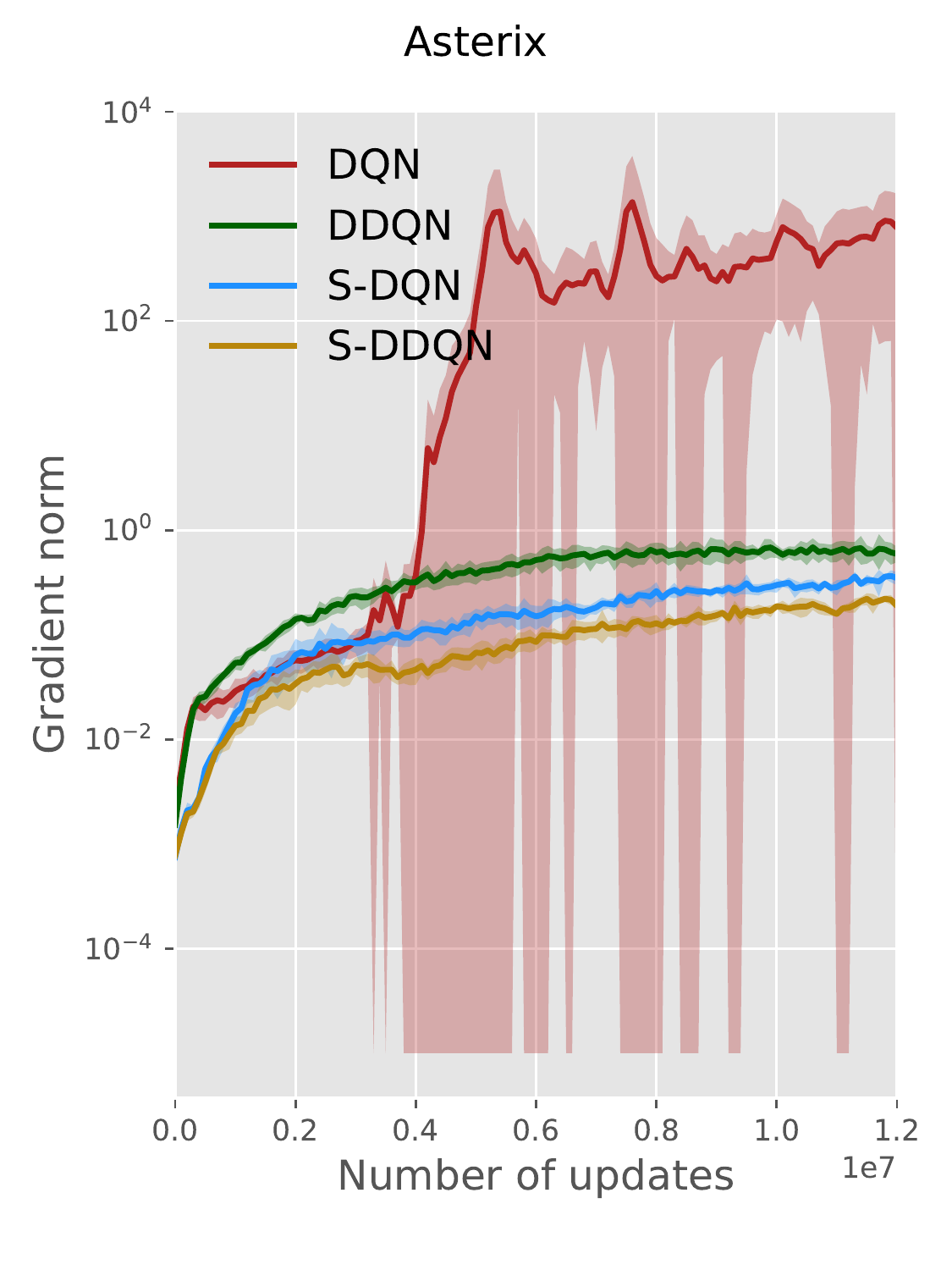}} 
	\vspace{-0.1in}
	\caption{Mean and one standard deviation of the gradient norm on the Atari games,
	for different methods.}
	\label{fig:grad}
	\vspace{-0.1in}
\end{figure}

\begin{figure}[!h]
	\vspace{-0.1in}
	\centering
	\subfigure{\includegraphics[width=0.49\linewidth]{./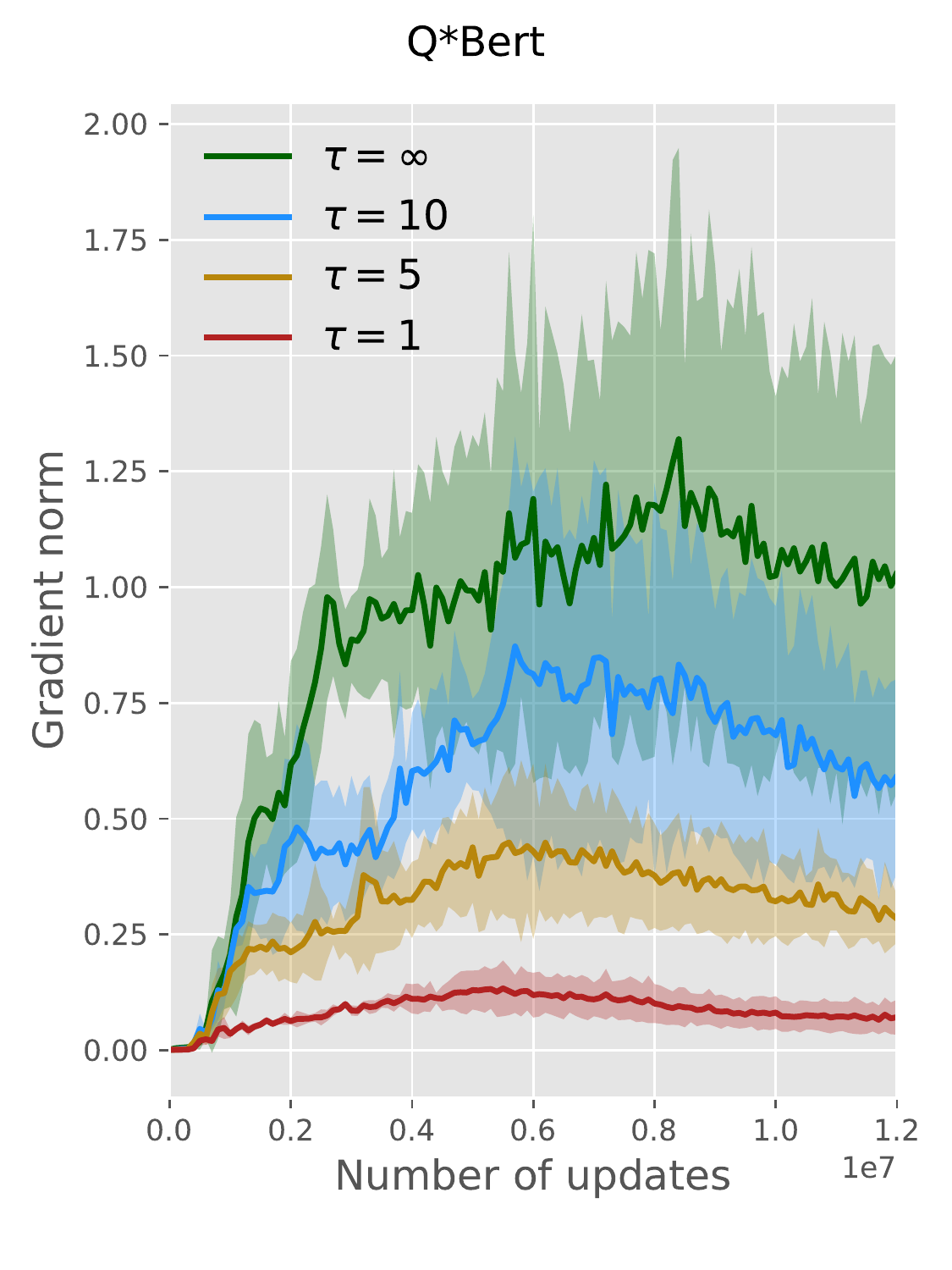}} 
	\vspace{-0.1in}
	\subfigure{\includegraphics[width=0.49\linewidth]{./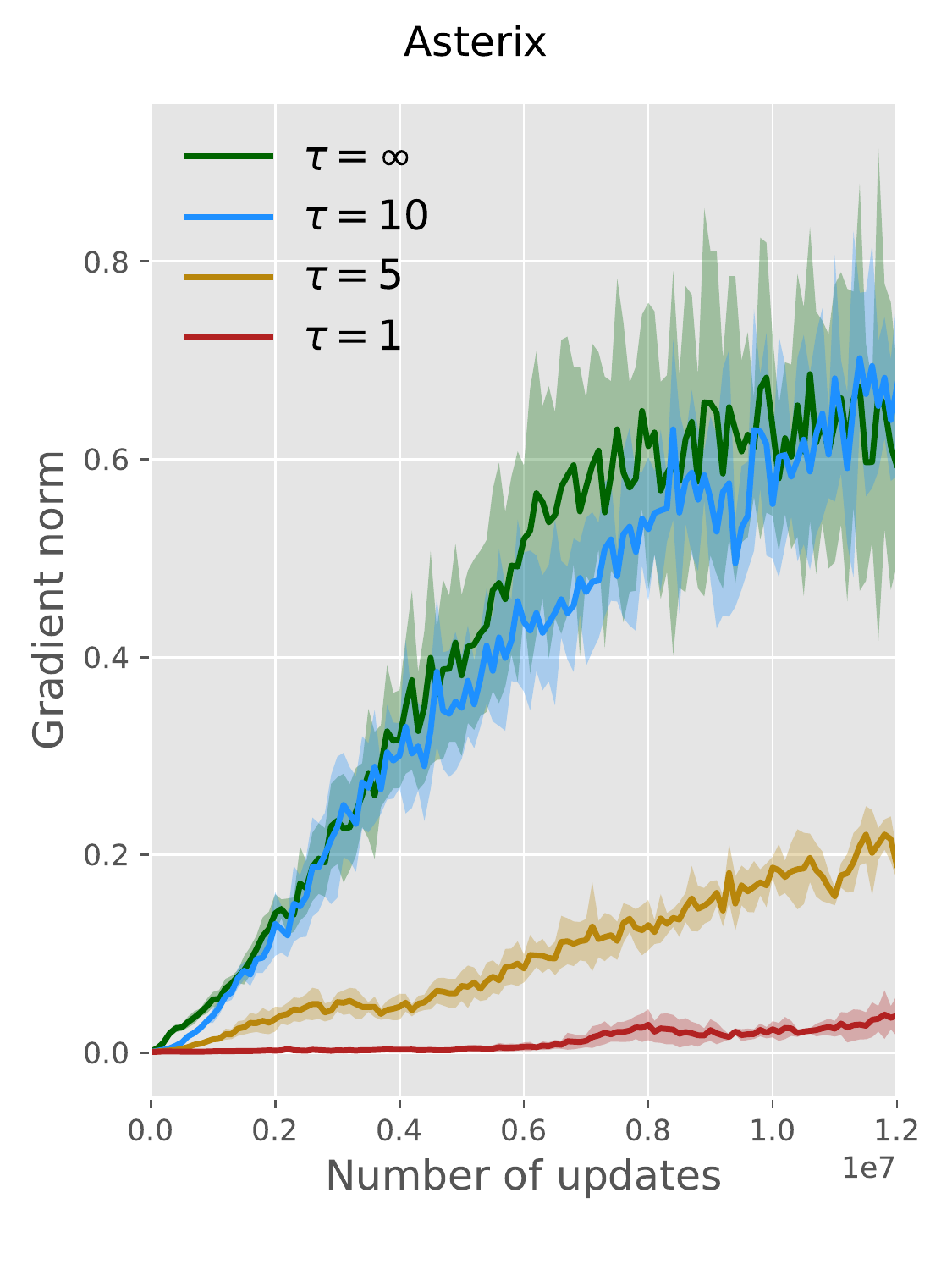}}
	\vspace{-0.1in}
	\caption{Mean and one standard deviation of the gradient norm on the Atari games,
	for different values of $\tau$ in S-DDQN.}
	\label{fig:grad_tau}
	\vspace{-0.1in}
\end{figure}

\section{A Comparison for Bellman Operators}
\label{sec:comparison}

So far we have covered three different types of Bellman operators, where the softmax and mellowmax variants employ the softmax and the log-sum-exp functions in lieu of the max function in the standard Bellman operator, respectively. A thorough comparison among these Bellman operators will not only exhibit their differences and connections, but also reveal the trade-offs when selecting them.

\begin{table}[!h]
\vspace{-0.15in}
\caption{A comparison of different Bellman operators 
(B.O. Bellman optimality; O.R. overestimation reduction; P.R. policy representation;
D.Q. double Q-learning). }
\vspace{0.1in}
\centering
\begin{tabular}{lccccc}
\toprule
& B.O. & Tuning & O.R. & P.R. & D.Q. \\ 
\midrule
Max 	   & Yes & No  &  -  & Yes & Yes\\
Mellowmax  & No  & Yes & Yes & No  & No\\
Softmax    & No  & Yes & Yes & Yes & Yes\\
\bottomrule
\end{tabular}
\label{tab:comp}
\vspace{-0.05in}
\end{table} 

Table~\ref{tab:comp} shows the comparison from different criteria: Bellman equation implies that the optimal greedy policy corresponds to the max operator only. In contrast to the mellowmax and the softmax operators, max also does not need to tune the inverse temperature parameter. On the other hand, the overestimation bias rooted in the max operator can be alleviated by either of the softmax and mellowmax operators. Furthermore, as noted in~\citet{AsadiLittman}, the mellowmax operator itself cannot directly represent a policy, and needs to be transformed into a softmax policy, where numerical methods are necessary to determine the corresponding state-dependent temperature parameters. The lack of an explicit policy representation also prevents the mellowmax operator from being directly applied in double Q-learning.  

For mellowmax and softmax, it is interesting to further investigate their relationship, especially given their comparable performance in the presence of function approximation, as shown in Table~\ref{tab:score_mellow}. First, we notice the following equivalence between these two operators, in terms of the Bellman updates: For the softmax-weighted Q-function, i.e., $g_{ \scriptscriptstyle Q (s', )} ( \tau ) = \sum_{a'} \frac{\exp [ \tau \, Q (s', a')]}{\sum_{\bar{a}} \exp [ \tau \, Q (s', \bar{a} )]} \, Q (s', a')$, there exists an $\omega$ such that the mellowmax-weighted Q-function achieves the same value. This can be verified by the facts that both softmax and mellowmax operators $(i)$ converge to the mean and max operators when the inverse temperature parameter approaches $0$ and $\infty$, respectively; $(ii)$ are continuous on $\tau$ and $\omega$, with $[0, \infty)$ as the support.\footnote{The operators remain fundamentally different since this equivalence would hold only for a particular set of Q-values. The conversion between softmax and mellomax parameters would need to be done on a state-by-state basis, and would change as the Q-values are updated.}

Furthermore, we compare via simulation the approximation error to the max function and the overestimation error, for mellowmax and softmax. The setup is as follows: The inverse temperature parameter $\tau$ is chosen from a linear grid from 0.01 to 100. We generate standard normal random variables, and compute the weighted Q-functions for the cases with different number of actions. The approximation error is then measured in terms of the difference between the max function and the corresponding softmax and mellowmax functions. The overestimation error is measured the same as in Figure~\ref{fig:bias}. The result statistics are reported by averaging over 100 independent trials. Figure~\ref{fig:comp} shows the trade-off between converging to the Bellman optimality and overestimation reduction in terms of the inverse temperature parameter, where the softmax operator approaches the max operator in a faster speed while the mellowmax operator can further reduce the overestimation error.

\begin{figure}[!t]
    \vspace{-0.15in}
	\centering
	\subfigure{\includegraphics[width=0.495\linewidth]{./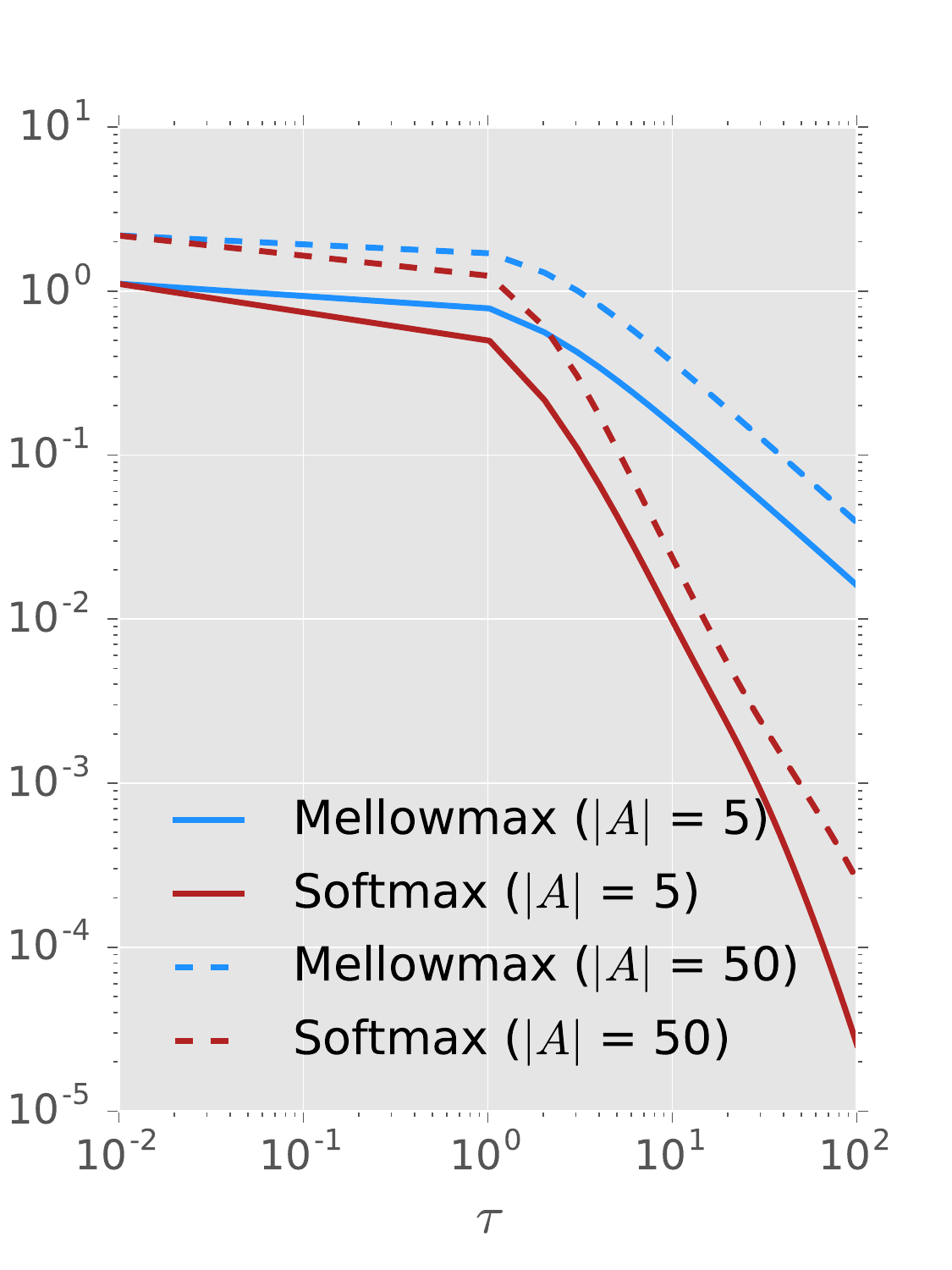}} 
	\hfill
	\subfigure{\includegraphics[width=0.495\linewidth]{./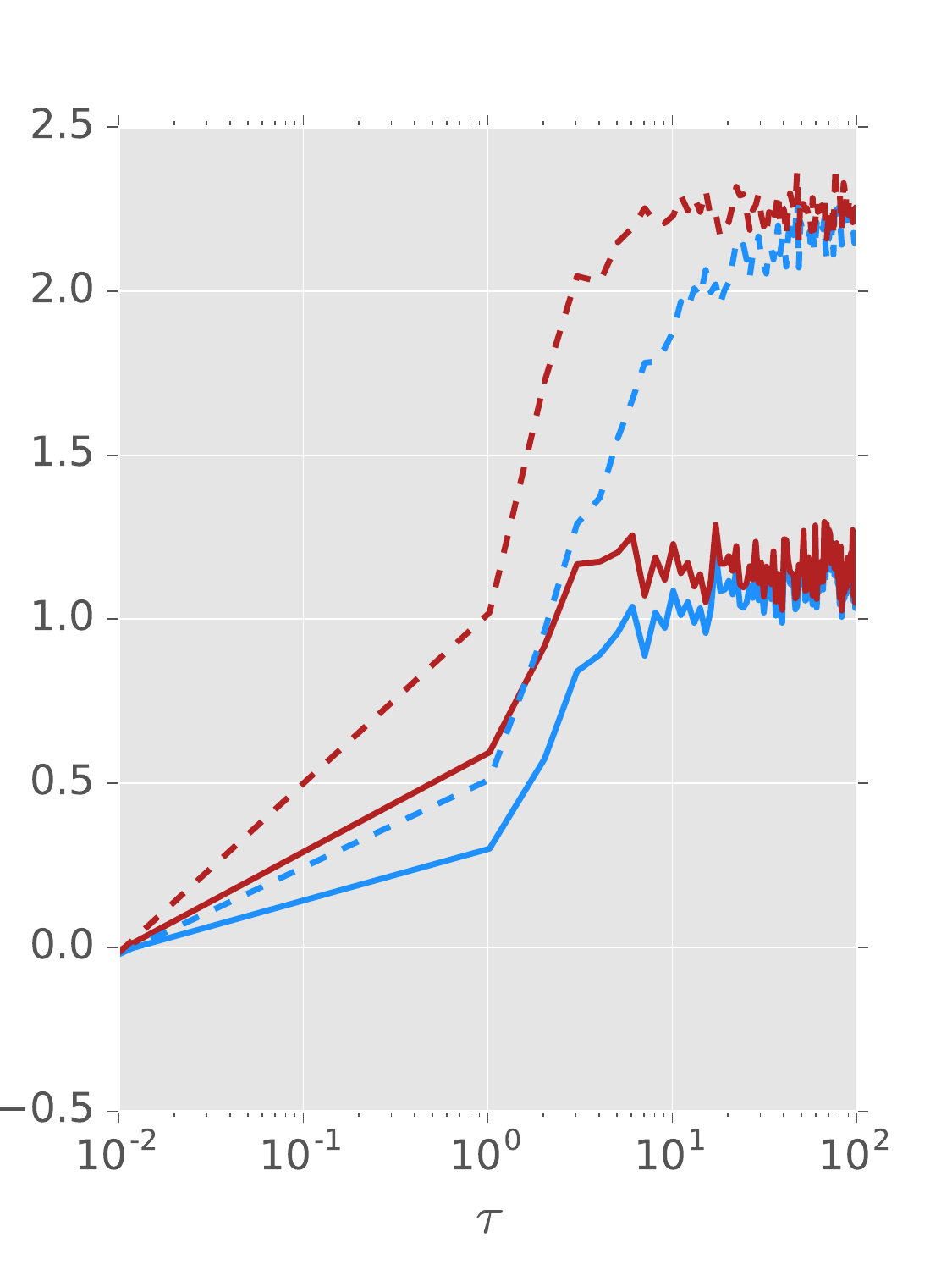}} 
	\caption{(Left) Approximation error to the max function; (Right) Overestimation error, as a function of $\tau$ (softmax) or $\omega$ (mellowmax).}
	\vspace{-0.2in}
	\label{fig:comp}
\end{figure}

\section{Related Work}
\label{sec:related}

Applying the softmax function in Bellman updates has long been viewed as problematic, as~\citet{littman} showed that the corresponding Boltzmann operator could be expansive in a specific~\gls{MDP}, via a counterexample. The primary use of the softmax function in~\gls{RL} has been focused on improving the exploration performance~\cite{sutton98}. Our work here instead aims to bring evidence and new perspective that the softmax Bellman operator can still be beneficial beyond exploration.

The mellowmax operator~\cite{AsadiLittman} was proposed based on the~\texttt{log-sum-exp} function, which can be proven as a contraction and has received recent attention due to its convenient mathematical properties. The use of the log-sum-exp function in the backup operator dates back to~\citet{todorov2007linearly}, where the control cost was regularized by a~\gls{KL} divergence on the transition probabilities. \citet{fox2016taming} proposed a G-learning scheme with soft updates to reduce the bias for the Q-value estimation, by regularizing the cost with the~\gls{KL} divergence on policies.
~\citet{AsadiLittman} further applied the log-sum-exp function to the on-policy updates, and showed that the state-dependent inverse temperature parameter can be numerically computed. More recently, the log-sum-exp function has also been used in the Boltzmann backup operator for entropy-regularized \gls{RL}~\cite{SchulmanAbbeelChen,haarnoja2017reinforcement,neu2017unified,NachumNorouziXuSchuurmans}. One interesting observation is that the optimal policies in these work admit a form of softmax, and thus can be beneficial for exploration. In~\citet{LeeChoiOh}, a sparsemax operator based on Tsallis entropy was proposed to improve the error bound for mellowmax. Note that these operators were not shown to explicitly address the instability issues in the~\gls{DQN}, and their corresponding policies were used to improve the exploration performance, which is different from our focus here.

Among variants of the~\gls{DQN} in~\citet{Mnih+al:2015},
\gls{DDQN}~\citep{HasseltGuezSilver} first identified the overestimation
bias issue caused by the max operator, and mitigated it via the 
double Q-learning algorithm~\citep{Hasselt}. \citet{AnschelBaramShimkin} later
demonstrated that averaging over the previous Q-value estimates in 
the learning target can also reduce the bias, while the analysis
for the bias reduction is restricted to a small~\gls{MDP} with 
a special structure. Furthermore, an adaptive normalization scheme
was developed in~\citet{HasseltGuezHesselMnihSilver} and was demonstrated to
reduce the gradient norm. 
The softmax function was also employed in the categorical~\gls{DQN}~\citep{BellemareDabneyMunos},
but with a different purpose of generating distributions in its distributional Bellman operator.
Finally, we notice that other variants 
\citep{WangEtAl2016,MnihA3C,SchaulQuanAntonoglouSilver,HeLiuSchwingPeng,rainbow,dabney2018distributional}
have also been proposed
to improve the vanilla~\gls{DQN}, though they were not explicitly designed
to tackle the issues of overestimation error and gradient noise.

\section{Conclusion and Future Work}
\label{sec:conclusion}

We demonstrate surprising benefits of the softmax Bellman operator when combined with~\glspl{DQN}, which further suggests that the softmax operator can be used as an alternative for the double Q-learning to reduce the overestimation bias. Our theoretical results provide new insight into the convergence properties of the softmax Bellman operator, and quantify how it reduces the overestimation bias. To gain a deep understanding about the relationship among different Bellman operators, we further compare them from different criteria, and show the trade-offs when choosing them in practice. 

An interesting direction for future work is to provide more theoretical analysis for the performance trade-off when selecting the inverse temperature parameter in~$\cTs$, and subsequently design an efficient cooling scheme.


\section*{Acknowledgements}
\label{sec:acknowledgements}
We thank the anonymous reviewers for their helpful comments and suggestions. This work was partially supported by NSF RI grant 1815300. Opinions, findings, conclusions or recommendations herein are those of the authors and not necessarily those of the NSF.

\bibliography{IEEEabrv,ref}
\bibliographystyle{icml2019}

\newpage

\twocolumn[
\begin{center}
    \bf \Large{Supplemental Material\\[1.5em]}
\end{center}
]

\renewcommand\thesection{A\arabic{section}}
\renewcommand\theequation{A\arabic{equation}}
\renewcommand\thefigure{A\arabic{figure}}
\renewcommand\thelemma{A\arabic{lemma}}
\renewcommand\thetheorem{A\arabic{theorem}}
\renewcommand\thetable{A\arabic{table}}
\setcounter{equation}{0}
\setcounter{section}{0}
\setcounter{figure}{0}
\setcounter{theorem}{0}
\setcounter{table}{0}

\section{Proof for Performance Bound}

We first show that for all Q-functions that occur during Q-iteration with $\cTs$, their corresponding Q-values are bounded.
\begin{lemma}
Assuming $\forall (s, a)$, the initial Q-values $Q_0 (s, a) \in [R_{\text{min}}, R_{\text{max}}]$, the Q-values during Q-iteration with $\cTs$ are within $[Q_{\text{min}}, Q_{\text{max}}]$, with $Q_{\text{min}} = \frac{R_{\text{min}}}{1 - \gamma}$ and $Q_{\text{max}} = \frac{R_{\text{max}}}{1 - \gamma}$.
\label{lemma:qbound}
\end{lemma}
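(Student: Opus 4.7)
The plan is to prove Lemma~\ref{lemma:qbound} by a straightforward induction on the iteration index $k$, exploiting the fact that the softmax aggregation $sm_{\tau}$ is a convex combination of the per-action Q-values and therefore preserves any interval bound one places on them.

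First, I would record the only property of the softmax that the argument actually uses: because the weights $\exp[\tau Q(s',a')]/\sum_{\bar a}\exp[\tau Q(s',\bar a)]$ are nonnegative and sum to one,
\[
\min_{a'} Q(s',a') \;\leq\; sm_{\tau}\bigl(Q(s',)\bigr) \;\leq\; \max_{a'} Q(s',a')
\]
for every $s'$. Averaging this against the transition kernel $P(\cdot\mid s,a)$, itself a probability distribution over $s'$, yields a bound of the same form on the expectation that appears inside $\cTs$.

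I would then run the induction. For the base case, the hypothesis $Q_0(s,a)\in[R_{\text{min}},R_{\text{max}}]$ together with $1/(1-\gamma)\geq 1$ yields $Q_0(s,a)\in[Q_{\text{min}},Q_{\text{max}}]$ under the standard sign convention $R_{\text{min}}\leq 0\leq R_{\text{max}}$ employed in the Atari experiments (after reward clipping). For the inductive step, assume $Q_k(s,a)\in[Q_{\text{min}},Q_{\text{max}}]$ for every $(s,a)$. The convex-combination observation then places $\sum_{s'} P(s'\mid s,a)\, sm_{\tau}(Q_k(s',))$ in the same interval, and combining this with $R(s,a)\in[R_{\text{min}},R_{\text{max}}]$ plus the telescoping identities $R_{\text{max}}+\gamma Q_{\text{max}} = R_{\text{max}} + \gamma\tfrac{R_{\text{max}}}{1-\gamma} = \tfrac{R_{\text{max}}}{1-\gamma} = Q_{\text{max}}$ and analogously $R_{\text{min}}+\gamma Q_{\text{min}} = Q_{\text{min}}$ gives
\[
Q_{\text{min}} \;\leq\; \cTs Q_k(s,a) \;\leq\; Q_{\text{max}},
\]
completing the induction.

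The argument is essentially routine bookkeeping once the convex-combination property is in place. The only mildly delicate point I would be careful about is the base-case inclusion $[R_{\text{min}},R_{\text{max}}]\subseteq[Q_{\text{min}},Q_{\text{max}}]$, which holds automatically under the reward conventions adopted in the paper but would need to be assumed directly (by strengthening the hypothesis to $Q_0\in[Q_{\text{min}},Q_{\text{max}}]$) if all rewards were of strictly positive sign. I do not anticipate any genuine technical obstacle, since the inductive step is uniform in $\tau$ and does not interact with the non-contraction behavior of $\cTs$.
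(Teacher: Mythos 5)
Your proof is correct and follows essentially the same route as the paper's: an induction whose engine is the observation that the softmax aggregation is a convex combination and hence dominated by the max, combined with the sign convention $R_{\text{min}} \leq 0 \leq R_{\text{max}}$ (which the paper adopts in its Corollary on bounding $|Q(s,a_i)-Q(s,a_j)|$). The only cosmetic difference is that you carry the invariant $Q_k \in [Q_{\text{min}}, Q_{\text{max}}]$ directly via the identity $R_{\text{max}} + \gamma Q_{\text{max}} = Q_{\text{max}}$, whereas the paper tracks the sharper partial sums $\sum_{j=0}^{k}\gamma^{j}R_{\text{max}}$ and passes to the limit.
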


\begin{proof}
    The upper bound can be obtained by showing $\forall(s, a)$, the Q-values at the i$th$ iteration are bounded as
    \begin{equation}
        Q_i (s, a) \leq \sum_{j=0}^{i} \gamma^{j} R_{\text{max}}.
        \label{eq:qmax_bound}
    \end{equation}
    We then prove~\eqnref{eq:qmax_bound} by induction as follows. The lower bound can be proven similarly.
    
    $(i)$ When $i=1$, we start from the definition of $\cTs$ in~\eqnref{eq:soft} and the assumption of $Q_0$ to have
    \begin{align*}
        Q_1 (s, a) &= \cTs \, Q_0 (s, a)
        \\
        &\leq  R_{\text{max}} + \gamma \sum_{s'} P (s' | s, a) \, \max_{a'} Q_{0} (s', a')
        \\
        &\leq R_{\text{max}} + \gamma \sum_{s'} P (s' | s, a) \, R_{\text{max}}
        \\
        &= (1 + \gamma) R_{\text{max}}.
    \end{align*}
    \vspace{-0.1in}
    
    $(ii)$ Assuming~\eqnref{eq:qmax_bound} holds when $i=k$, i.e., $Q_k (s, a) \leq \sum_{j=0}^{k} \gamma^{j} R_{\text{max}}$. Then,
    \begin{align*}
        Q_{k+1} (s, a) &= \cTs Q_k (s, a)
        \\
        &\leq  R_{\text{max}} + \gamma \sum_{s'} P (s' | s, a) \, \max_{a'} Q_{k} (s', a')
        \\
        &\leq R_{\text{max}} + \gamma \sum_{s'} P (s' | s, a) \, \sum_{j=0}^{k} \gamma^{j} R_{\text{max}}
        \\
        &= \sum_{j=0}^{k+1} \gamma^{j} R_{\text{max}}.
    \end{align*}
\end{proof}

\begin{corollary}
Assuming $R_{\text{max}} \geq -R_{\text{min}} \geq 0$ WLOG, we have $| Q(s, a_i) - Q(s, a_j) | \leq 2 \frac{R_{\text{max}}}{1 - \gamma}, \forall Q$ and $\forall s$.
\label{coro:diffbound}
\end{corollary}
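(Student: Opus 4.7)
The plan is to derive this corollary as an immediate consequence of \lemref{lemma:qbound}. The key observation is that if every Q-value arising during the softmax Q-iteration lies in a bounded interval, then the gap between any two Q-values at a common state is at most the length of that interval; the WLOG hypothesis then converts this two-sided bound into a one-sided one in terms of $R_{\max}$ alone.

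First I would invoke \lemref{lemma:qbound} to conclude that every Q-function $Q$ produced along the $\cTs$-iteration satisfies $Q(s, a) \in [R_{\min}/(1-\gamma),\, R_{\max}/(1-\gamma)]$ for all $(s, a)$. Fixing an arbitrary state $s$ and two actions $a_i, a_j$, both $Q(s, a_i)$ and $Q(s, a_j)$ lie in the same interval of length $(R_{\max} - R_{\min})/(1-\gamma)$, so
\[
|Q(s, a_i) - Q(s, a_j)| \;\le\; \frac{R_{\max} - R_{\min}}{1 - \gamma}.
\]
The WLOG hypothesis $R_{\max} \ge -R_{\min} \ge 0$ gives $-R_{\min} \le R_{\max}$ and hence $R_{\max} - R_{\min} \le 2 R_{\max}$, which immediately yields the claimed bound $2 R_{\max}/(1-\gamma)$.

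The only step that merits further comment is the WLOG qualifier. Adding a constant $c$ to every reward shifts each Q-value by $c/(1-\gamma)$ uniformly across all actions, leaving the differences $|Q(s, a_i) - Q(s, a_j)|$ invariant; choosing the shift appropriately lets us assume $R_{\max} \ge -R_{\min}$ without loss of generality. There is no real obstacle in this proof --- the entire argument is essentially two lines --- but one should be careful that the quantifier $\forall Q$ refers only to Q-functions occurring during Q-iteration (as in the $\sup_Q$ of \lemref{lemma:diff_bound}), since for an unrestricted $Q$ the gap could be made arbitrarily large.
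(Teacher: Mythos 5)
Your proposal is correct and matches the paper's own (one-line) proof, which likewise deduces the bound directly from Lemma~\ref{lemma:qbound} together with the assumption $R_{\text{max}} \geq -R_{\text{min}} \geq 0$. Your explicit remarks on the WLOG shift and on restricting $\forall Q$ to the iterates of $\cTs$ are sensible elaborations of what the paper leaves implicit.
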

\begin{proof}
    This follows by using the assumption and the results in Lemma~\ref{lemma:qbound}.
\end{proof}

\begin{proof}[Proof of Lemma~\ref{lemma:diff_bound}]
    We first sort the sequence $\{Q(s, a_i)\}$ such that $Q (s, a_{\small[1]} ) \geq \ldots \geq Q (s, a_{[m]} )$.
	Then, $\forall Q$ and $\forall s$, we have
	\begin{align}
		&\max_a Q (s, a) - f_{\tau}^T \big( Q(s, ) \big) \, Q(s, ) 
		\notag
		\\
		&= Q (s, a_{[1]}) - \frac{\sum_{i=1}^m \exp \big[ \tau Q (s, a_{[i]}) \big] \, 
		Q (s, a_{[i]}) }{ \sum_{i=1}^{m} \exp \big[ \tau Q (s, a_{[i]}) \big]}
		\notag
		\\
		&= \frac{\sum_{i=1}^m \exp \big[ \tau Q (s, a_{[i]}) \big] \, 
		\big[ Q (s, a_{[1]}) - Q (s, a_{[i]}) \big] }{ \sum_{i=1}^{m} \exp 
		\big[ \tau Q (s, a_{[i]}) \big]}.
		\label{eq:intro_delta}	
	\end{align}
	By introducing $\delta_i (s) = Q (s, a_{[1]}) - Q (s, a_{[i]})$, and noting
	$\delta_i (s) \geq 0$ and $\delta_1 (s) = 0$, we can proceed from~\eqnref{eq:intro_delta}
	as
	\begin{align}
		& \quad \frac{\sum_{i=1}^m \exp \big[ \tau Q (s, a_{[i]}) \big] \, 
		\big[ Q (s, a_{[1]}) - Q (s, a_{[i]}) \big] }{ \sum_{i=1}^{m} \exp 
		\big[ \tau Q (s, a_{[i]}) \big]}
		\notag
		\\
		&= \frac{\sum_{i=1}^m \exp [- \tau \delta_i (s) ] \, 
		\delta_i (s) }{ \sum_{i=1}^{m} \exp 
		[ - \tau \delta_i (s) ]}
		\notag
		\\
		&= \frac{\sum_{i=2}^m \exp [- \tau \delta_i (s) ] \, 
		\delta_i (s)  }{ 1 + \sum_{i=2}^{m} \exp 
		[- \tau \delta_i (s) ]}.
		\label{eq:qdiff}
	\end{align}
	Now, we can proceed from~\eqnref{eq:qdiff} to prove each direction separately as follows.
    
    \begin{enumerate}
    
    \item[$(i)$] Upper bound: First note that for any two non-negative sequences $\{x_i\}$ and $\{y_i\}$,
    \begin{equation}
        \frac{\sum_i x_i}{1 + \sum_i y_i} \leq \sum_i \frac{x_i}{1 + y_i}.
        \label{eq:seq_div}
    \end{equation}
    We then apply~\eqnref{eq:seq_div} to~\eqnref{eq:qdiff} as
	\begin{align}
	    \frac{\sum_{i=2}^m \exp [ - \tau \delta_i (s) ] \, 
		\delta_i (s)  }{ 1 + \sum_{i=2}^{m} \exp 
		[ - \tau \delta_i (s) ]}
		&\leq \sum_{i=2}^{m} \frac{\exp [ - \tau \delta_i (s) ] \, \delta_i (s)}{
		1 + \exp [ - \tau \delta_i (s) ]} 
		\notag
		\\
		&= \sum_{i=2}^{m} \frac{ 
		\delta_i (s)}{ 1 + \exp [ \tau \delta_i (s) ]}.
		\label{eq:delta_sum}
	\end{align}
	Next, we bound each term in~\eqnref{eq:delta_sum}, by considering the
	following two cases:
	
	1) $\delta_i (s) > 1$: $\frac{ \delta_i (s)}{ 1 + \exp [ \tau \delta_i (s) ]} \leq 
	\frac{ \delta_i (s)}{ 1 + \exp ( \tau )} \leq \frac{ 2 Q_{\text{max}} }{ 
	1 + \exp ( \tau )}$, where we apply Corollary~\ref{coro:diffbound} to bound $\delta_i (s)$.
	
	2) $0 \leq \delta_i (s) \leq 1$: $\frac{ \delta_i (s)}{ 1 + \exp [ \tau \delta_i (s) ]} 
	= \frac{1}{ \frac{2}{\delta_i (s)} + \tau + 0.5 \tau^2 \delta_i (s) + \cdots } 
	\leq \frac{1}{\tau+2}$, where we first expand the denominator using 
	Taylor series for the exponential function.	
	
	By combining these two cases with~\eqnref{eq:delta_sum}, we achieve the upper bound.
	
	\item[$(ii)$] Lower bound:
	\begin{align}
	    & \quad \frac{\sum_{i=2}^m \exp [ - \tau \delta_i (s) ] \, 
		\delta_i (s)  }{ 1 + \sum_{i=2}^{m} \exp 
		[ - \tau \delta_i (s) ]} 
		\notag
		\\
		&\geq \frac{\sum_{i=2}^m \exp [ - \tau \delta_i (s) ] \, 
		\delta_i (s)}{m}
		\notag
		\\
		& \geq \frac{\sum_{i=2}^m \delta_i (s)}{m \exp [\tau \,\widehat{\delta} (s) ] }
		\notag
		\\
		& \geq \frac{\widehat{\delta} (s)}{m \exp [\tau \,\widehat{\delta} (s) ] }.
	\end{align}
	
	\end{enumerate}
	
\end{proof}

\begin{proof}[Proof of~\thmref{thm:convergebound}]

    We first prove the upper bound by induction as follows.
    
    $(i)$ When $i = 1$, we start from the definitions for $\cT$ and $\cTs$ in~\eqnref{eq:bellman4q}
	and~\eqnref{eq:soft}, and proceed as
	\begin{align*}
		& \quad \cT Q_0 (s, a) - \cTs \, Q_0 (s, a) 
		\\
		= & \gamma \sum_{s'} P (s' | s, a) \, \big[ \max_{a'} Q_0 (s', a') - 
		f_{\tau}^T \big( Q_0 (s', ) \big) Q_0 (s', ) \big] 
		\\
		\geq & 0 .
	\end{align*}
	
	$(ii)$ Suppose this claim holds when $i = l$, i.e., $\cT^{l} Q_0 (s, a) \geq \cTs^{l} \, Q_0 (s, a)$. When $i = l+1$, we have
	\begin{align*}
	    & \cT^{l+1} Q_0 (s, a) - \cTs^{l+1} \, Q_0 (s, a) 
	    \\
	    = & \cT \cT^{l} Q_0 (s, a) - \cTs \cTs^{l} \, Q_0 (s, a)
	    \\
	    \geq & \cT \cTs^{l} \, Q_0 (s, a) - \cTs \cTs^{l} \, Q_0 (s, a)
	    \\
	    \geq & 0.
	\end{align*}
	Since $Q^{*}$ is the fixed point for $\cT$, we know $\lim_{k \rightarrow \infty} \cT^{k} \, Q_0 (s, a) = Q^{*} (s, a)$. Therefore, $\limsup \limits_{k \rightarrow \infty} \cTs^{k} \, Q_0 (s, a) \leq Q^{*} (s, a)$.
	
	To prove the lower bound, we first conjecture that 
	\begin{equation}
		\cT^{k} \, Q_0 (s, a) - \cTs^{k} \, Q_0 (s, a) \,\leq\, \sum_{j=1}^k \gamma^{j} \, \zeta,
		\label{eq:conjecture_zeta}
	\end{equation}
	where $\zeta = \sup_{Q} \max_{s} [ \max_a Q (s, a) - f_{\tau}^T \big( Q(s, ) \big) \, Q(s, ) ]$ denotes
	the supremum of the difference between the max and softmax operators, over all
	Q-functions that occur during Q-iteration, and state $s$. \eqnref{eq:conjecture_zeta} is proven using induction as follows.
	
	$(i)$ When $i = 1$, we start from the definitions for $\cT$ and $\cTs$ in~\eqnref{eq:bellman4q}
	and~\eqnref{eq:soft}, and proceed as
	\begin{align*}
		& \quad \cT Q_0 (s, a) - \cTs \, Q_0 (s, a) 
		\\
		= & \gamma \sum_{s'} P (s' | s, a) \, \big[ \max_{a'} Q_0 (s', a') - 
		f_{\tau}^T \big( Q_0 (s', ) \big) Q_0 (s', ) \big] 
		\\
		\leq & \gamma \sum_{s'} P (s' | s, a) \, \zeta = \gamma \zeta.
	\end{align*}
	
	$(ii)$ Suppose the conjecture holds when $i = l$, i.e., $\cT^{l} Q_0 (s, a) - \cTs^{l} \, Q_0 (s, a) \leq 
	\sum_{j=1}^l \gamma^{j} \zeta$, then
	\begin{align*}
		& \cT^{l+1} Q_0 (s, a) - \cTs^{l+1} \, Q_0 (s, a) 
		\\
		= &  \cT \, \cT^{l} Q_0 (s, a) - \cTs^{l+1} \, Q_0 (s, a)
		\\
		\leq &  \cT \,\big[\, \cTs^{l} \, Q_0 (s, a) + 
		\sum_{j=1}^l \gamma^{j} \zeta \,\big]\, - \cTs^{l+1} \, Q_0 (s, a)
		\\
		= & \sum_{j=1}^l \gamma^{j+1} \zeta + (\cT - \cTs) \, \cTs^{l} \, Q_0 (s, a)
		\\
		\leq & \sum_{j=1}^l \gamma^{j+1} \zeta + \gamma \zeta = \sum_{j=1}^{l+1} \gamma^{j} \zeta,
	\end{align*}
	where the last inequality follows from the definition of $\zeta$. 
	By using the fact that $\lim_{k \rightarrow \infty} \cT^{k} \, Q_0 (s, a) = Q^{*} (s, a)$
	again and applying Lemma~\ref{lemma:diff_bound} to bound $\zeta$, we finish the proof for Part $(I)$. 
	
	To prove part $(II)$, note that as a byproduct of~\eqnref{eq:delta_sum} in the proof of Lemma~\ref{lemma:diff_bound}, \eqnref{eq:conjecture_zeta} can be bounded as
	\begin{align}
		\cT^{k} \, Q_0 (s, a) - & \cTs^{k} \, Q_0 (s, a)
		\leq 
		\notag
		\\
		&\frac{\gamma (1 - \gamma^k) }{1 - \gamma}
		\sum_{i=2}^{m} \frac{ \delta_i (s)}{ 1 + \exp [ \tau \delta_i (s)]}.
		\label{eq:upper_bound_full}
	\end{align}
	From the definition of $\delta_i (s)$, we know $\delta_{m} (s) \geq \delta_{m-1} (s) \geq \ldots \geq \delta_{2} (s) \geq 0$. Furthermore, there must exist an index $i^{*} \leq m$ such that $\delta_{i} > 0, \forall i^{*} \leq i \leq m$ (otherwise the upper bound becomes zero). Subsequently, we can proceed from ~\eqnref{eq:upper_bound_full} as
	\begin{align*}
	    &\quad \frac{\gamma (1 - \gamma^k)}{1 - \gamma}
		\sum_{i=2}^{m} \frac{ \delta_i (s)}{ 1 + \exp [ \tau \delta_i (s) ]} 
		\\
		&= \frac{\gamma (1 - \gamma^k)}{1 - \gamma}
		\sum_{i=i^{*}}^{m} \frac{ \delta_i (s)}{ 1 + \exp [ \tau \delta_i (s) ]}
		\\
		& \leq \frac{\gamma (1 - \gamma^k)}{1 - \gamma}
		\sum_{i=i^{*}}^{m} \frac{ \delta_i (s)}{\exp [ \tau \delta_i (s) ]}
		\\
		& \leq \frac{\gamma (1 - \gamma^k)}{1 - \gamma}
		\sum_{i=i^{*}}^{m} \frac{ \delta_i}{\exp [ \tau \delta_{i^{*} } (s) ]}
		\\
		& = \frac{\gamma (1 - \gamma^k)}{1 - \gamma}
		 \exp [ - \tau \delta_{i^{*} } (s) ] \sum_{i=i^{*}}^{m} \delta_i (s),
	\end{align*}
	which implies an exponential convergence rate in terms of $\tau$ and hence
	proves part $(II)$.
\end{proof}

\section{Proofs for Overestimation Reduction}

\begin{lemma}
$g_{\bx} (\tau) = \frac{\sum_{i=1}^m [ \exp (\tau x_i) x_i ] }{\sum_{i=1}^m \exp (\tau x_i) }$ is a monotonically increasing function for $\tau \in [0, \infty)$.
\label{lemma:mono}
\end{lemma}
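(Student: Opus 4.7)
The plan is to prove monotonicity by showing the derivative $g'_{\bx}(\tau)$ is non-negative on $[0,\infty)$ via a variance interpretation of the softmax-weighted average.

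First I would introduce the softmax probabilities $p_i(\tau) = \exp(\tau x_i) / \sum_{j=1}^m \exp(\tau x_j)$, so that $g_{\bx}(\tau) = \sum_{i=1}^m p_i(\tau)\, x_i$. Under these weights, $g_{\bx}(\tau)$ is exactly the expectation $\mathbb{E}_{p(\tau)}[X]$ of the discrete random variable $X$ taking value $x_i$ with probability $p_i(\tau)$.

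Next I would compute $g'_{\bx}(\tau)$. A direct calculation of $\partial p_i/\partial \tau$ using the quotient rule gives
\begin{equation*}
\frac{\partial p_i(\tau)}{\partial \tau} = p_i(\tau)\,\bigl(x_i - g_{\bx}(\tau)\bigr).
\end{equation*}
Plugging this into $g'_{\bx}(\tau) = \sum_i \frac{\partial p_i}{\partial \tau}\, x_i$ yields
\begin{equation*}
g'_{\bx}(\tau) = \sum_{i=1}^m p_i(\tau)\, x_i^2 \;-\; \Bigl(\sum_{i=1}^m p_i(\tau)\, x_i\Bigr)^{\!2} = \mathbb{E}_{p(\tau)}[X^2] - \bigl(\mathbb{E}_{p(\tau)}[X]\bigr)^2,
\end{equation*}
which is exactly $\mathrm{Var}_{p(\tau)}(X) \geq 0$. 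Hence $g_{\bx}$ is monotonically non-decreasing on $[0,\infty)$.

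No step here is particularly hard; the only thing to be slightly careful about is the derivative of $p_i(\tau)$, where one must correctly apply the quotient rule and recognize that $\sum_j p_j(\tau) x_j = g_{\bx}(\tau)$ to get the clean variance form. One can additionally note that $g'_{\bx}(\tau) = 0$ only when all $x_i$ with $p_i(\tau) > 0$ are equal (which, since all $p_i(\tau) > 0$ for finite $\tau$, means $x_1 = \cdots = x_m$); otherwise the inequality is strict, giving strict monotonicity in the non-degenerate case.
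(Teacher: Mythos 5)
Your proof is correct and follows essentially the same route as the paper: both compute $g'_{\bx}(\tau)$ and show it is nonnegative, the only difference being that the paper bounds the numerator $\bigl[\sum_i e^{\tau x_i} x_i^2\bigr]\bigl[\sum_i e^{\tau x_i}\bigr] - \bigl[\sum_i e^{\tau x_i} x_i\bigr]^2 \geq 0$ by Cauchy--Schwarz on the unnormalized sums, while you normalize first and recognize the same quantity as $\mathrm{Var}_{p(\tau)}(X) \geq 0$ --- an equivalent statement. Your added remark on when the inequality is strict is a small bonus not present in the paper.
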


\begin{proof}
	The gradient of $g_{\bx} (\tau)$ can be computed as
	\begin{align*}
		&\frac{ \partial g_{\bx} (\tau) }{\partial \tau} = \Big\{ \big[ \sum_{i=1}^{m} \exp (\tau x_i) \, x_i^2 \big]
		\big[ \sum_{i=1}^{m} \exp (\tau x_i)\big] -
		\\
		&\big[\sum_{i=1}^{m} \exp (\tau x_i) \, x_i \big]^2 \Big\} \,\Big/\,
		\big[ \sum_{j=1}^{m} \exp ( \tau x_j ) \big]^2 \geq 0,
	\end{align*}
	where the last step holds because of the Cauchy-Schwarz inequality. 
\end{proof} 

The overestimation bias due to the max operator can be observed by plugging
assumption $(A2)$ in Theorem~\ref{thm:bias} into~\eqnref{eq:bellman4q} as
\begin{align*}
	&\mathbb{E} \big[ \max_a \big(Q_t (s, a) \big) - \max_a \big( Q_{*} (s, a) \big) \big] 
	\\
	= & \mathbb{E} \big[ \max_a \big(Q_t (s, a) - V_{*} (s) \big) \big] 
	\\
	= & \mathbb{E} \big[ \max_a (\epsilon_a) \big],
\end{align*}
and $ \max_a (\epsilon_a)$ is typically positive for a large action set
and the noise satisfying a normal distribution, or a uniform distribution
with the symmetric support.

\begin{proof}[Proof of Theorem~\ref{thm:bias} ]
	First, the overestimation error from $\cTs$ can be represented as
	\begin{align}
		& \quad \; \mathbb{E} \bigg\{ \sum_a \frac{\exp[\tau Q_t (s, a)] }{ \sum_{\bar{a}} 
		\exp[\tau Q_t (s, \bar{a} )] } Q_t (s, a ) - V^{*} (s) \bigg\}
		\notag
		\\
		&= \mathbb{E} \bigg\{ \sum_a \frac{\exp[\tau V^{*} (s) + \tau \epsilon_a] }{ 
		\sum_{\bar{a}} \exp[\tau V^{*} (s) + \tau \epsilon_{\bar{a} } )] } \,
		[V^{*} (s) + \epsilon_a ] -  V^{*} (s) \bigg\}
		\notag
		\\
		&= \mathbb{E} \bigg\{ \sum_a \frac{\exp[\tau \epsilon_a] }{\sum_{\bar{a}} \exp[\tau \epsilon_{\bar{a} }] }
		\epsilon_a \bigg\}
		\label{eq:soft_error}
		\\
		&\leq \mathbb{E} \big[ \max_{a} (\epsilon_a) \big].
		\notag
	\end{align}
	
	To prove Part $(II)$, note that the overestimation reduction of $\cTs$ from $\cT$ can then be represented as
	\begin{align*}
	    & \mathbb{E} \big[ \max_{a} (\epsilon_a) - \sum_a \frac{\exp[\tau \epsilon_a] }{\sum_{\bar{a}} \exp[\tau \epsilon_{\bar{a} }] }
		\epsilon_a \big] 
		\\
		=& \mathbb{E} \bigg\{ \max_{a} \big[ \epsilon_a + V^{*} (s) \big] - \sum_a \frac{\exp[\tau \epsilon_a] }{\sum_{\bar{a}} \exp[\tau \epsilon_{\bar{a} }] }
		\big[ \epsilon_a + V^{*} (s) \big]  \bigg\} 
		\\
		=& \mathbb{E} \bigg\{ \max_{a} \big[ Q_t (s, a ) \big] - \sum_a \frac{\exp[\tau \epsilon_a] }{\sum_{\bar{a}} \exp[\tau \epsilon_{\bar{a} }] }
		\big[ Q_t (s, a ) \big]  \bigg\} 
		\\
		=& \mathbb{E} \bigg\{ \max_{a} \big[ Q_t (s, a ) \big] - \sum_a \frac{\exp[\tau \epsilon_a + \tau V^{*} (s)] }{\sum_{\bar{a}} \exp[\tau \epsilon_{\bar{a} } + \tau V^{*} (s) ] }
		\\
		& \times \big[ Q_t (s, a ) \big]  \bigg\} 
		\\
		=& \mathbb{E} \bigg\{ \max_{a} \big[ Q_t (s, a ) \big] - \sum_a \frac{\exp[\tau Q_t (s, a )] }{\sum_{\bar{a}} \exp[\tau Q_t (s, \bar{a} ) ] }
		\big[ Q_t (s, a ) \big]  \bigg\}. 
	\end{align*}
	Subsequently, we can employ Lemma~\ref{lemma:diff_bound} to obtain the range.
	
	Finally, the monotonicity for the overestimation error in terms of $\tau$ follows, by noting the term inside the expectation of~\eqnref{eq:soft_error} can be 
	represented as $g_{\epsilon } (\tau)$, which is a monotonic function of $\tau$, according to
	Lemma~\ref{lemma:mono}.
\end{proof}

\section{Additional Plots and Setups}

Figures~\ref{fig:qvalue_supp} and~\ref{fig:grad_supp} are the
full version of the corresponding figures in the main text, by plotting all six games. The corresponding values for $\tau$ in S-DQN and S-DDQN are provided in Table~\ref{tab:tau_value}.

\begin{table}[!ht]
\vspace{-0.0in}
\caption{Values of $\tau$ used for S-DQN and S-DDQN in Figures~\ref{fig:qvalue_supp} and~\ref{fig:grad_supp}. }
\vspace{0.1in}
\centering
\begin{tabular}{lcccccc}
\toprule
& Q & M & B & C & A & S \\ 
\midrule
S-DQN  & 1 & 1 & 5 & 1 & 5 & 5\\
S-DDQN & 1 & 5 & 5 & 5 & 5 & 10\\
\bottomrule
\end{tabular}
\label{tab:tau_value}
\vspace{-0.0in}
\end{table} 

Figures~\ref{fig:score_tau_supp}, ~\ref{fig:qvalue_tau_supp}, and~\ref{fig:grad_tau_supp} show the scores, Q-values, and gradient
norm, for different values of $\tau$, for S-DDQN.

\begin{figure}[!h]
	\centering
	\subfigure{\includegraphics[width=0.495\linewidth]{./fig/QBert_qvalue_2col.pdf}} 
	\subfigure{\includegraphics[width=0.495\linewidth]{./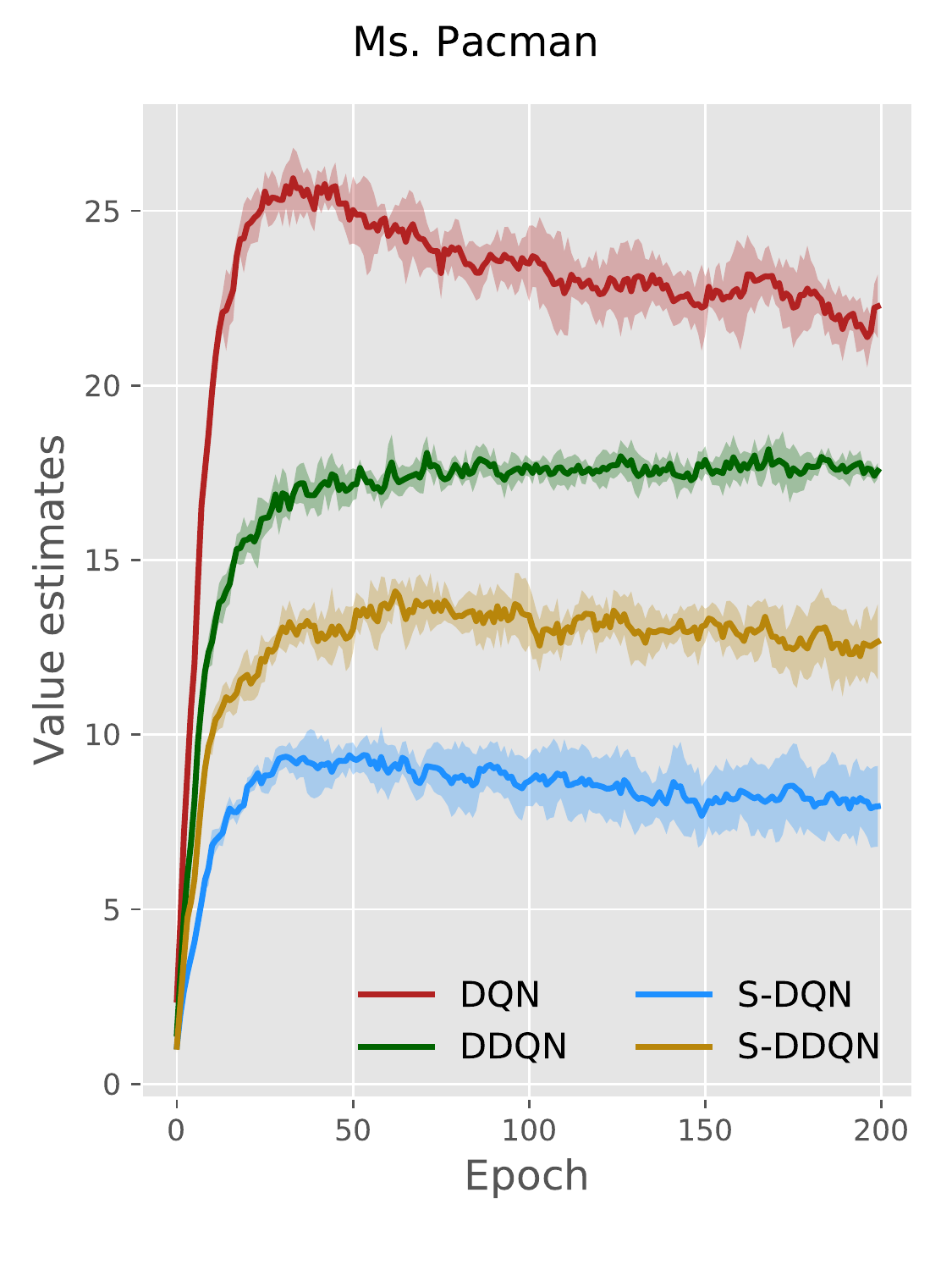}}
	\subfigure{\includegraphics[width=0.495\linewidth]{./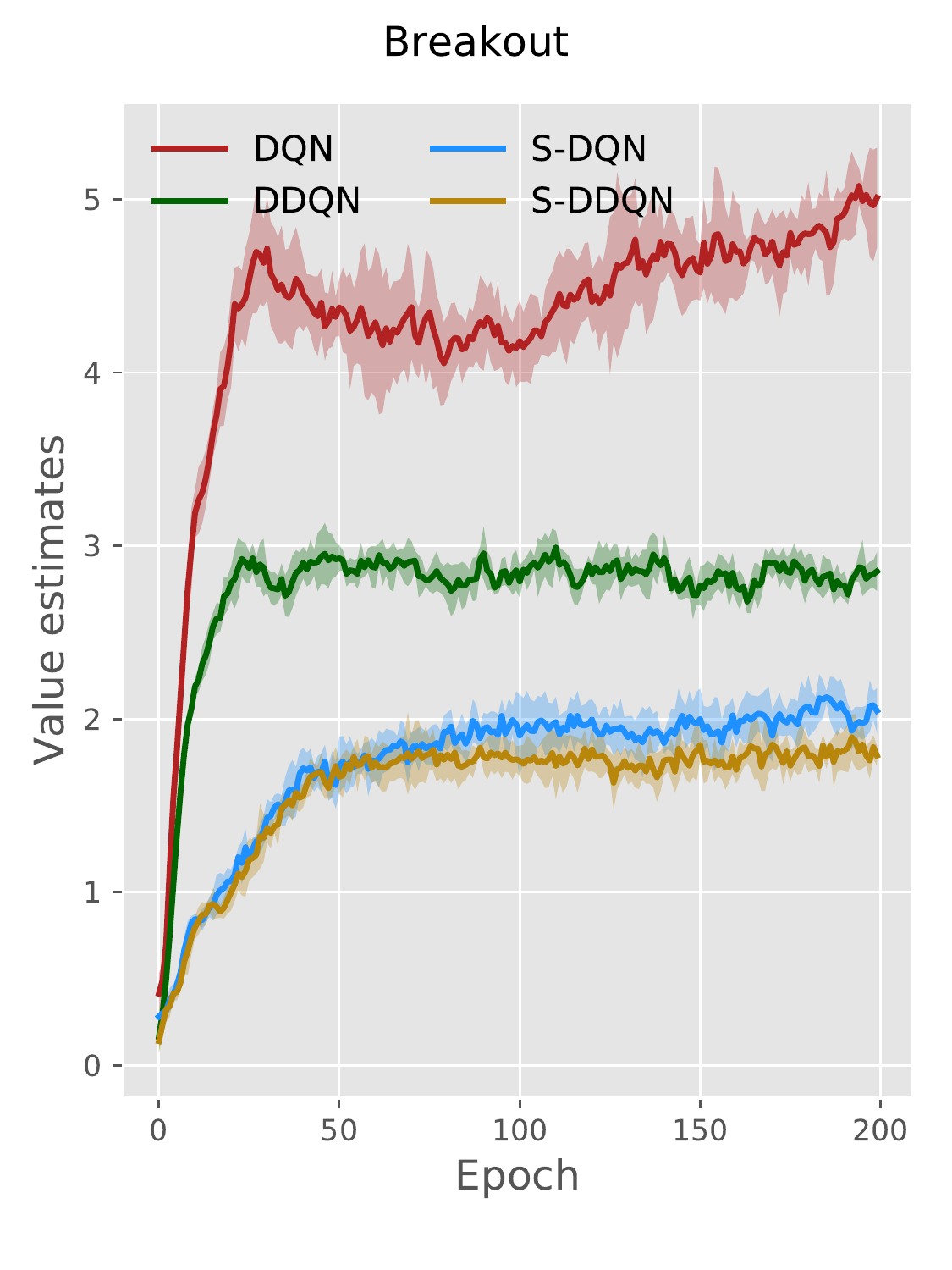}}
	\subfigure{\includegraphics[width=0.495\linewidth]{./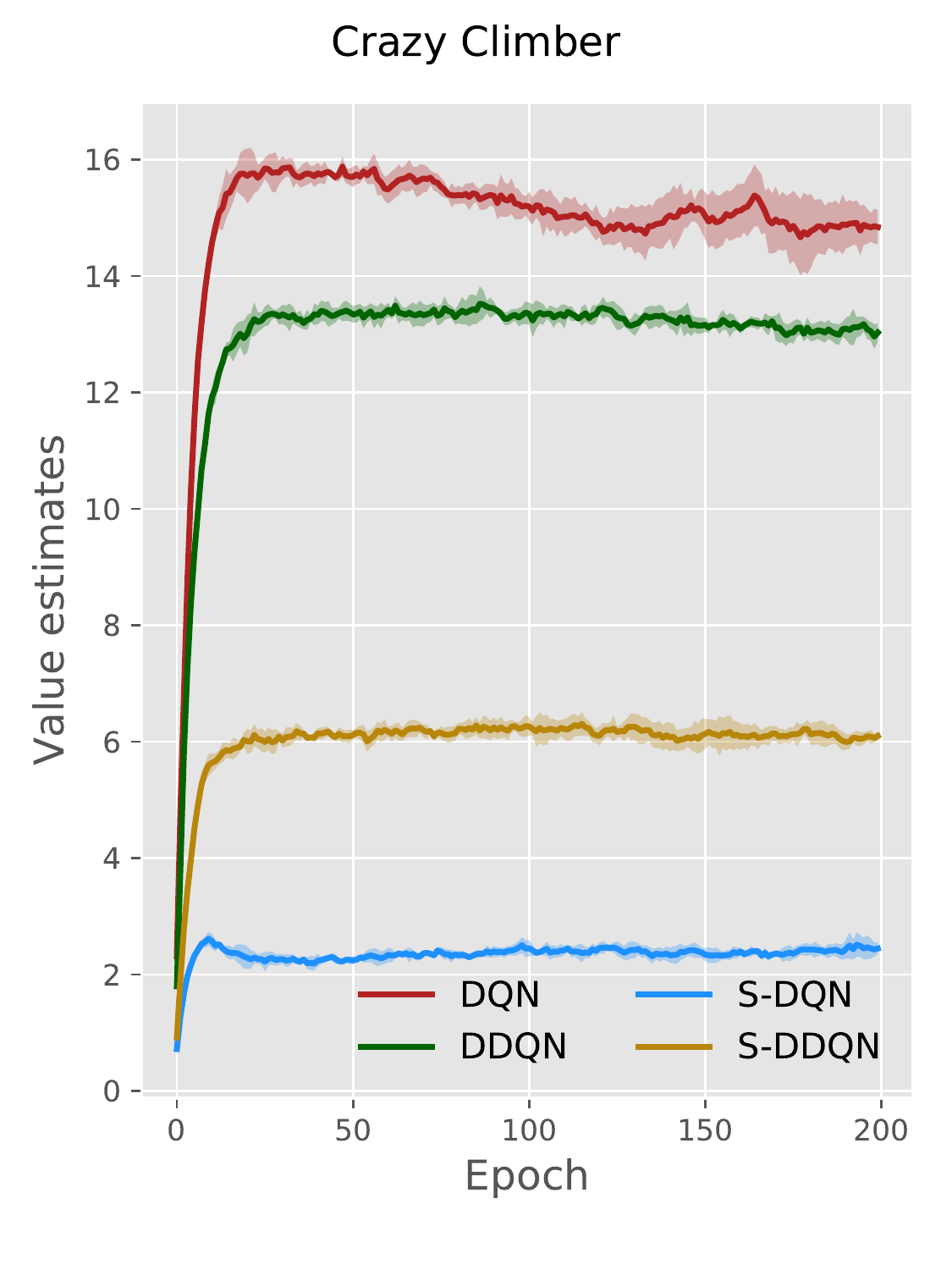}}
	\subfigure{\includegraphics[width=0.495\linewidth]{./fig/Asterix_qvalue_best.pdf}} 
	\subfigure{\includegraphics[width=0.495\linewidth]{./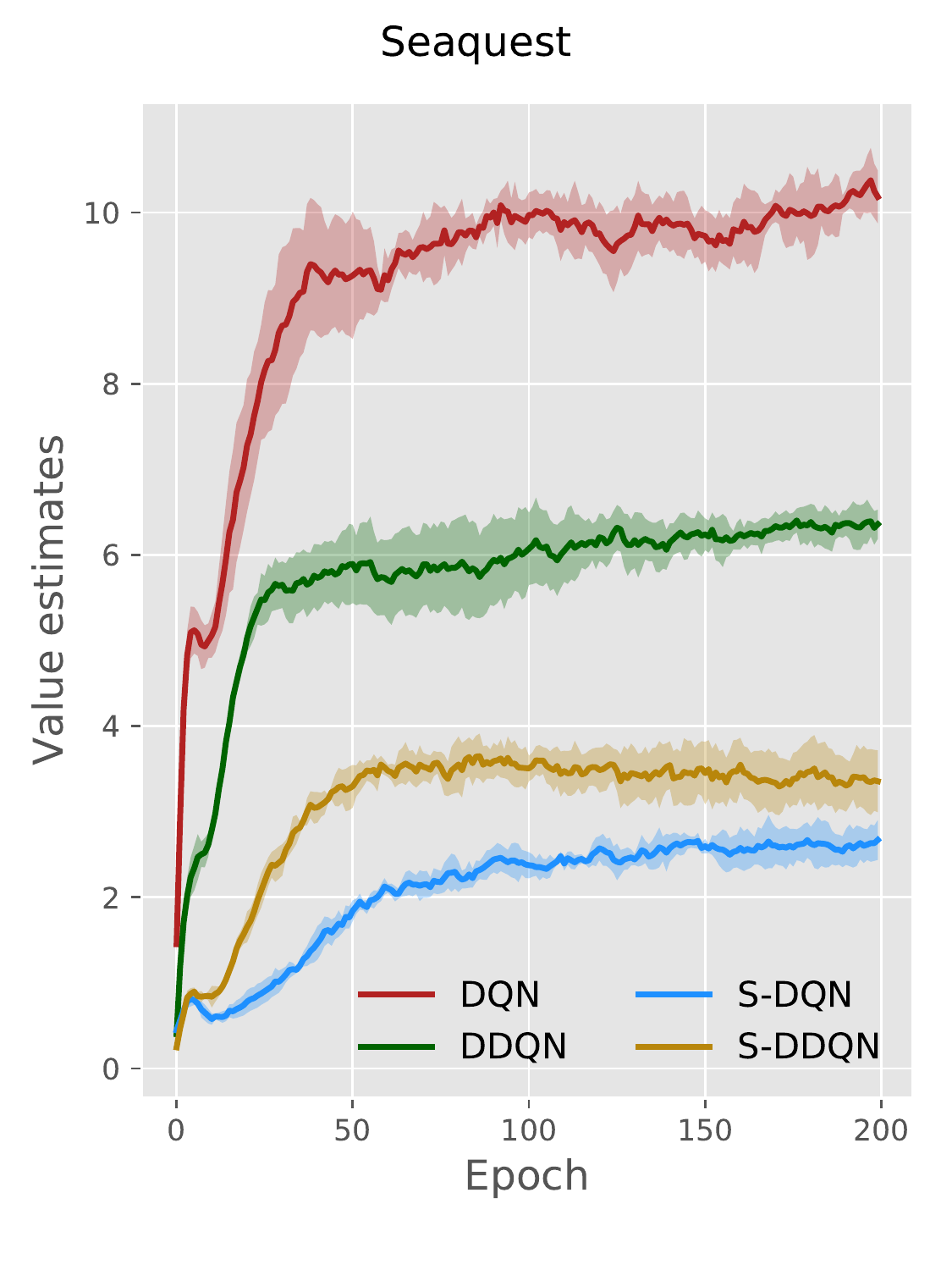}} 	
	\caption{Mean and one standard deviation of the estimated Q-values on the Atari games,
	for different methods.}
	\label{fig:qvalue_supp}
\end{figure}

\begin{figure}[!h]
	\centering
	\subfigure{\includegraphics[width=0.495\linewidth]{./fig/QBert_res_grad.pdf}} 
	\subfigure{\includegraphics[width=0.495\linewidth]{./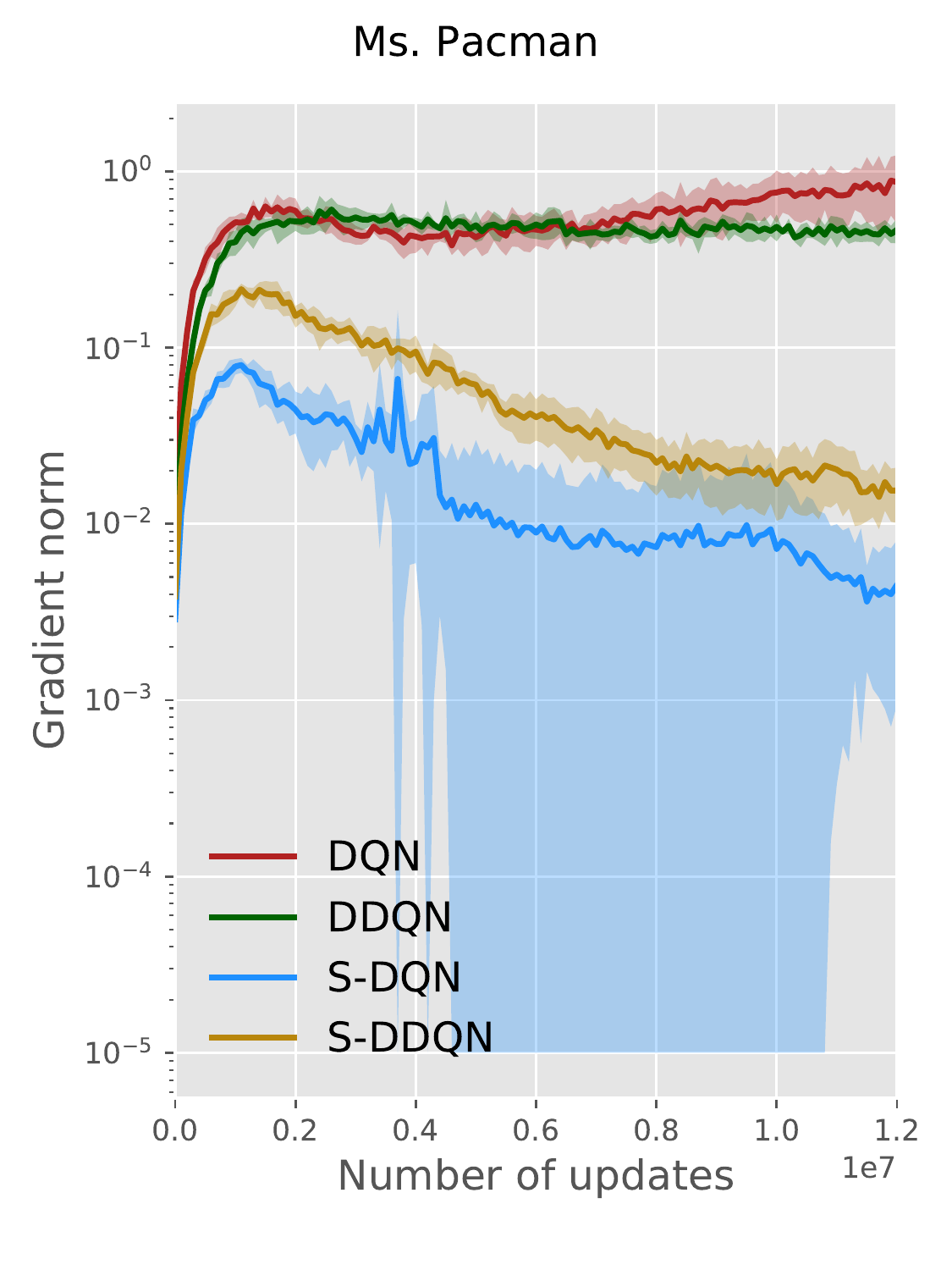}}
	\subfigure{\includegraphics[width=0.495\linewidth]{./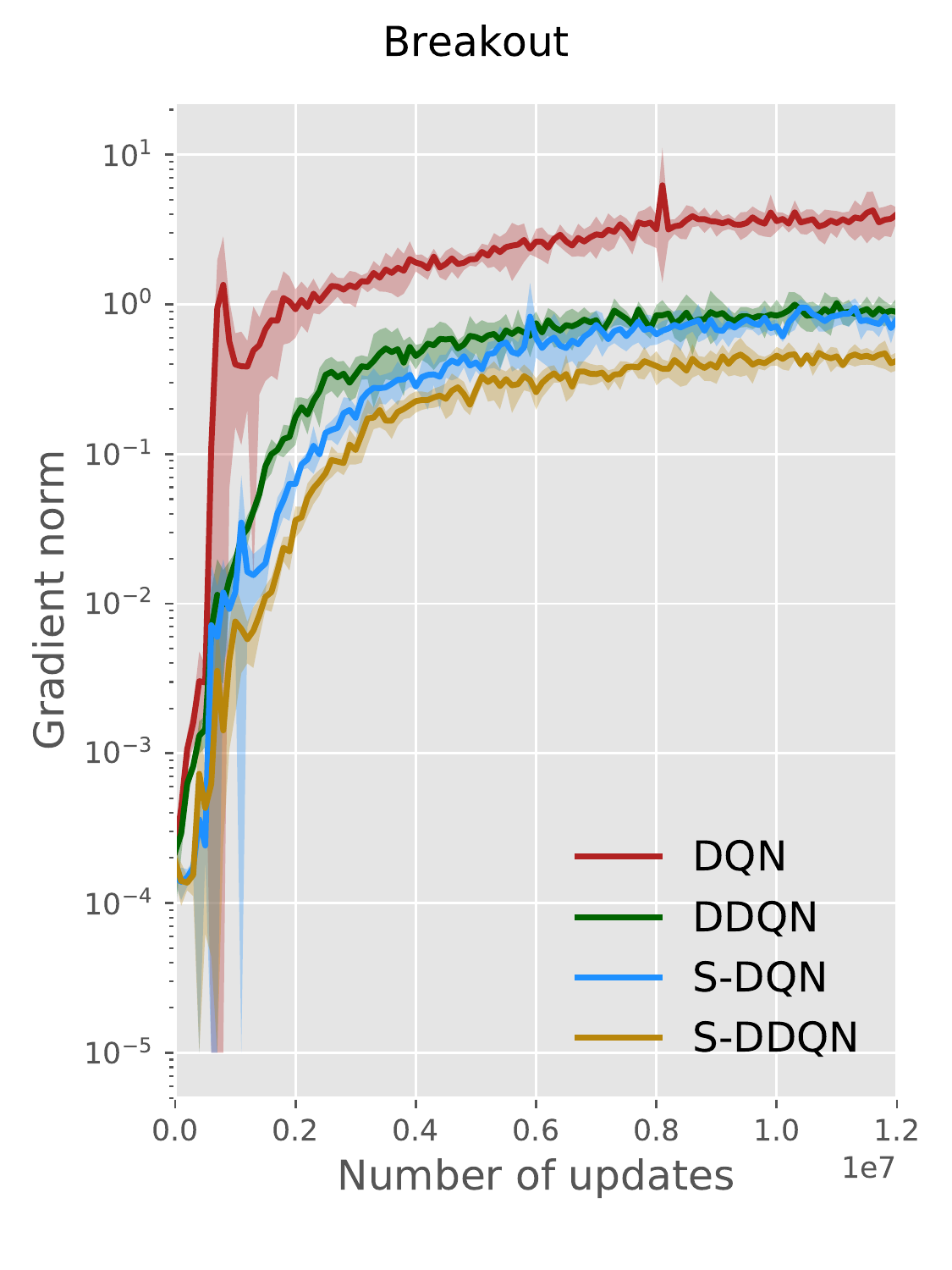}}
	\subfigure{\includegraphics[width=0.495\linewidth]{./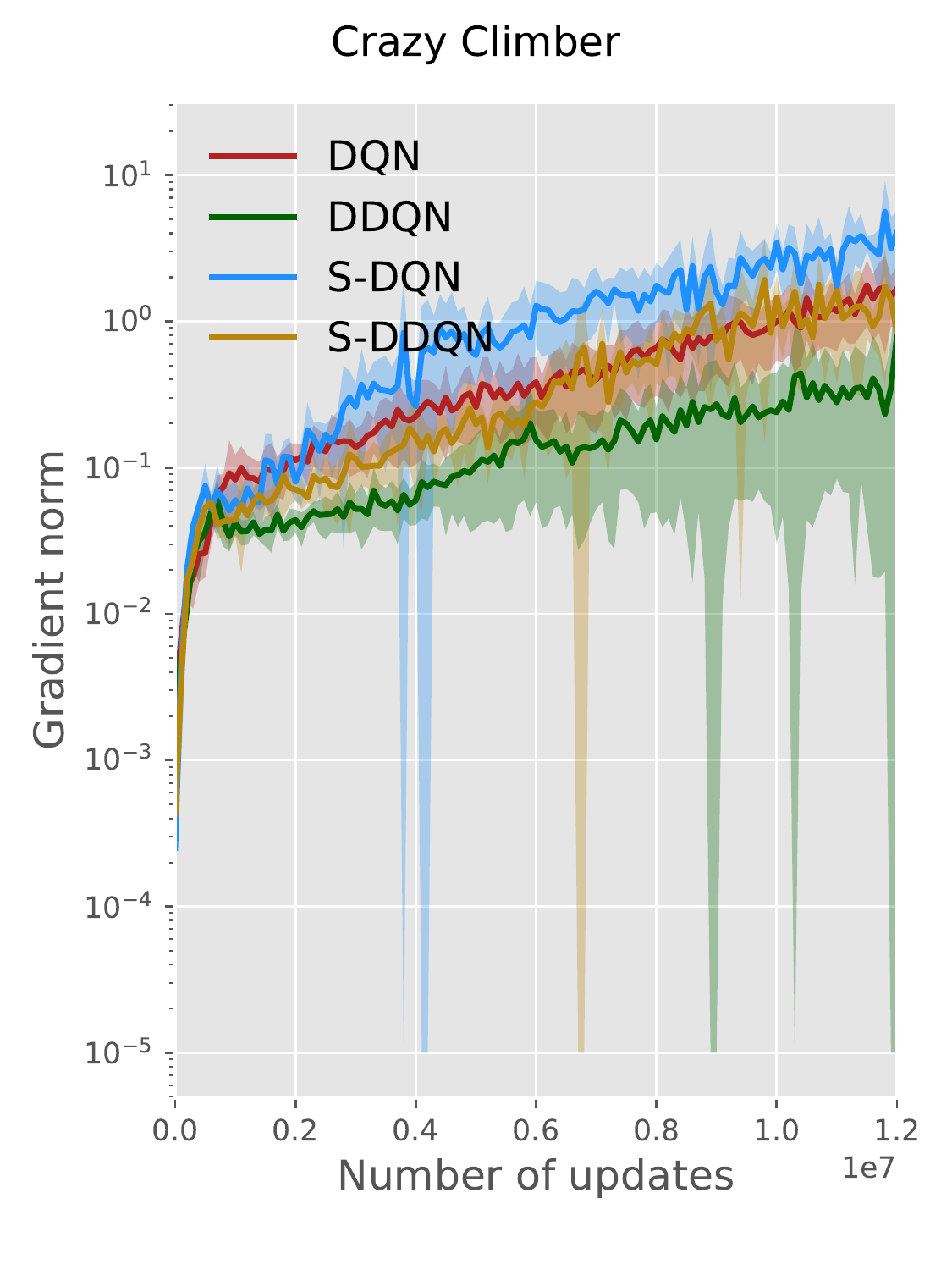}}
	\subfigure{\includegraphics[width=0.495\linewidth]{./fig/Asterix_res_grad_best.pdf}}
	\subfigure{\includegraphics[width=0.495\linewidth]{./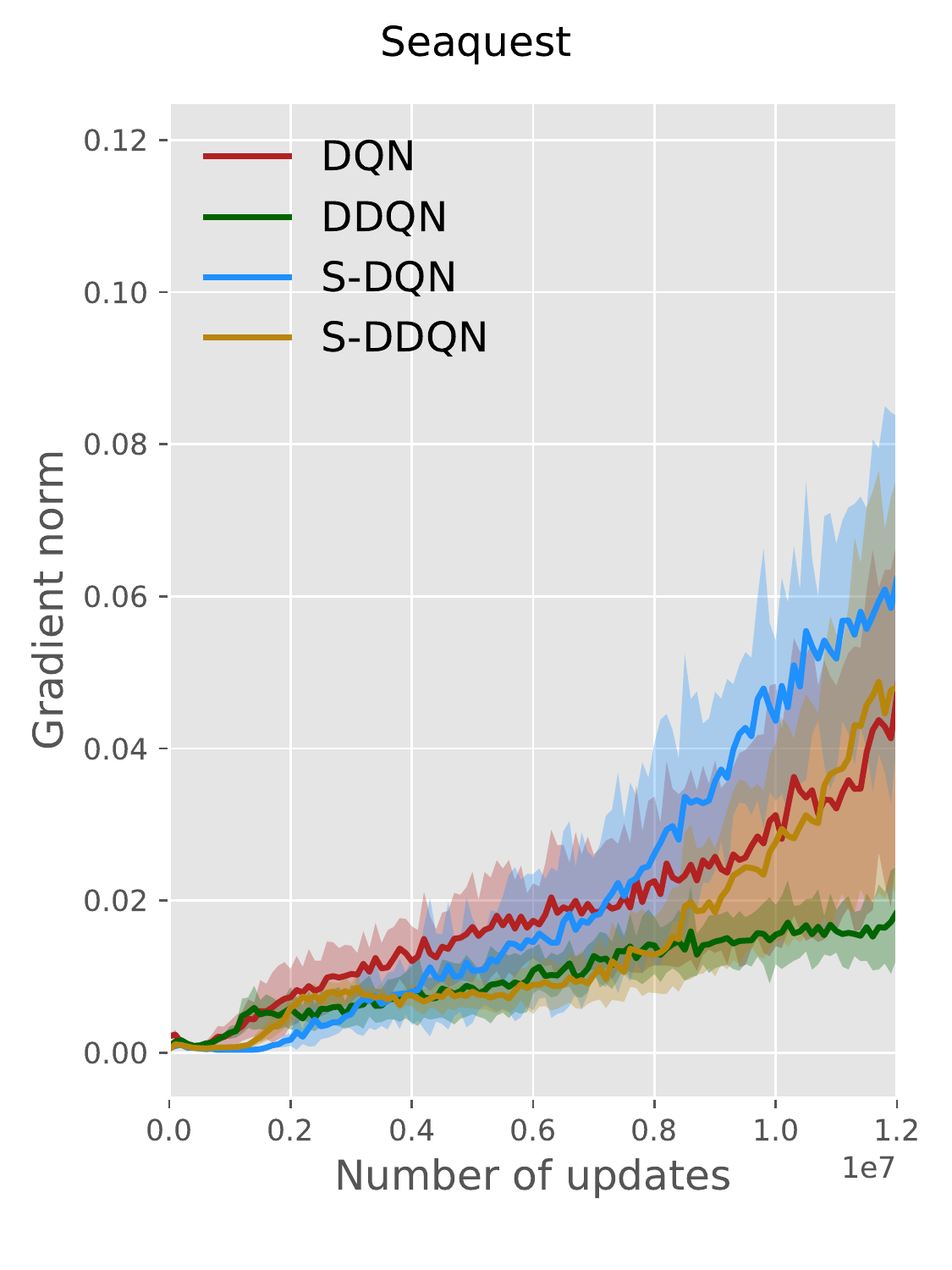}}	 
	\caption{Mean and one standard deviation of the gradient norm on the Atari games,
	for different methods.}
	\label{fig:grad_supp}
\end{figure}

\begin{figure}[!h]
	\centering
	\subfigure{\includegraphics[width=0.495\linewidth]{./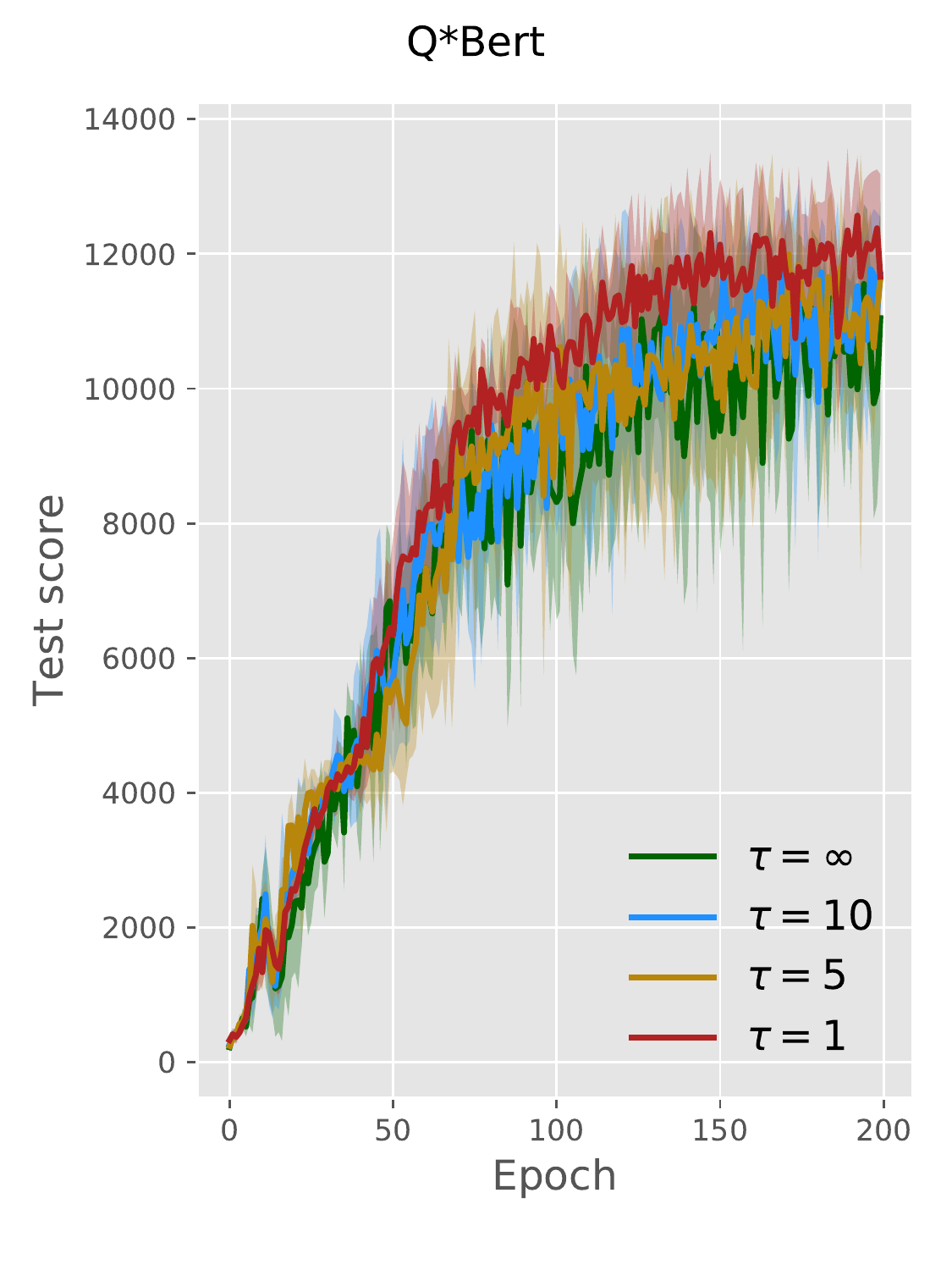}} 
	\subfigure{\includegraphics[width=0.495\linewidth]{./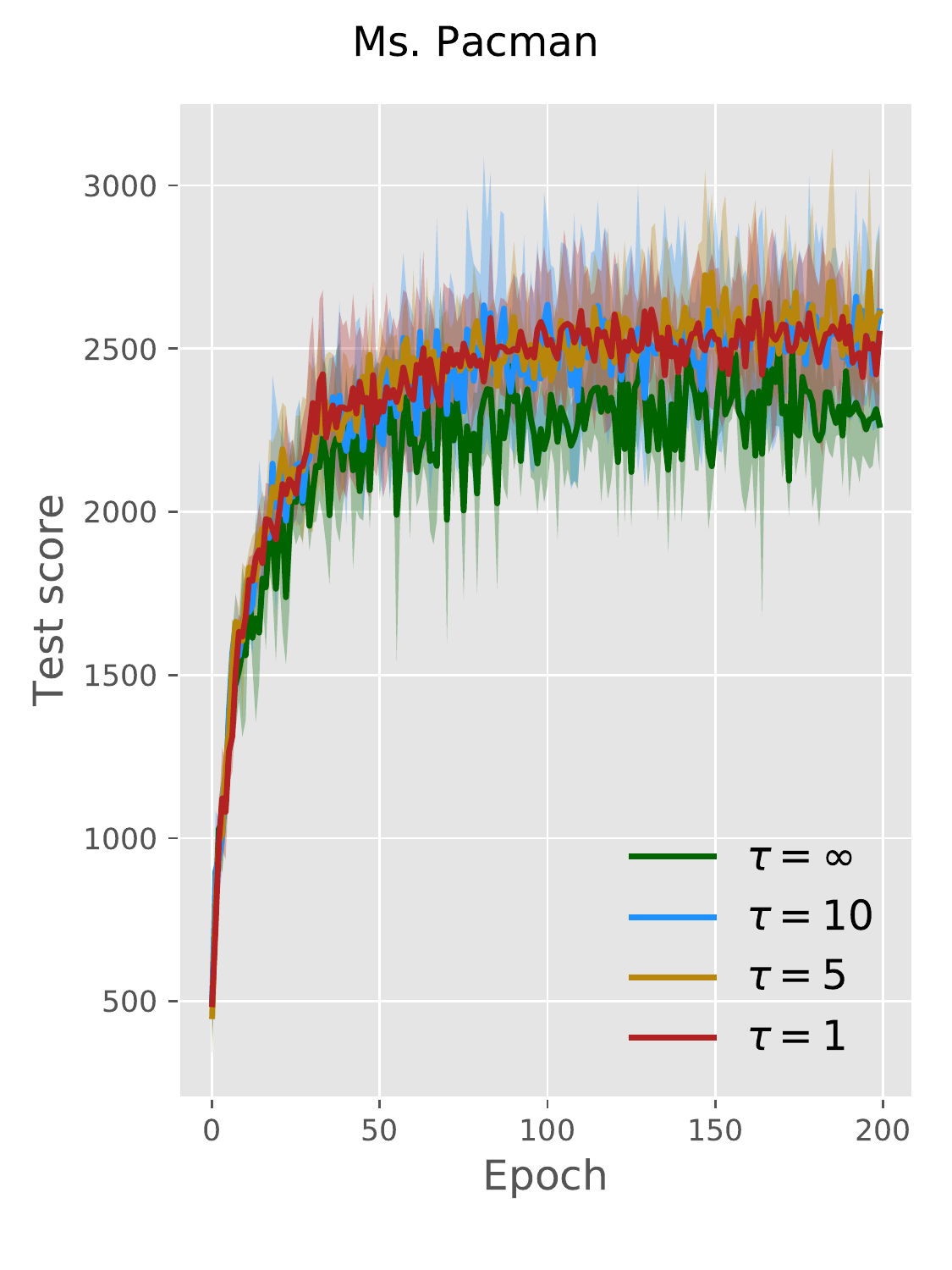}}
	\subfigure{\includegraphics[width=0.495\linewidth]{./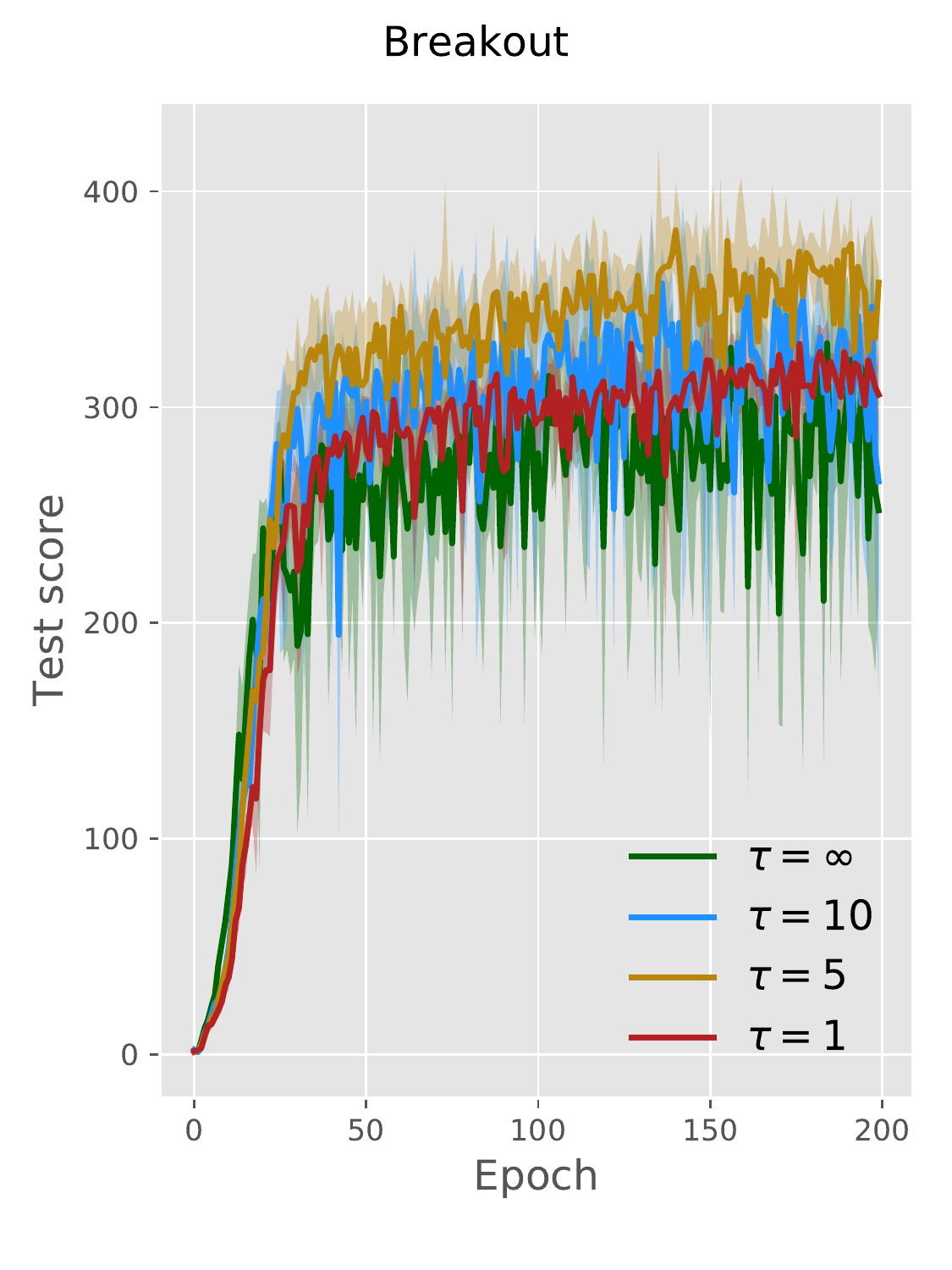}}
	\subfigure{\includegraphics[width=0.495\linewidth]{./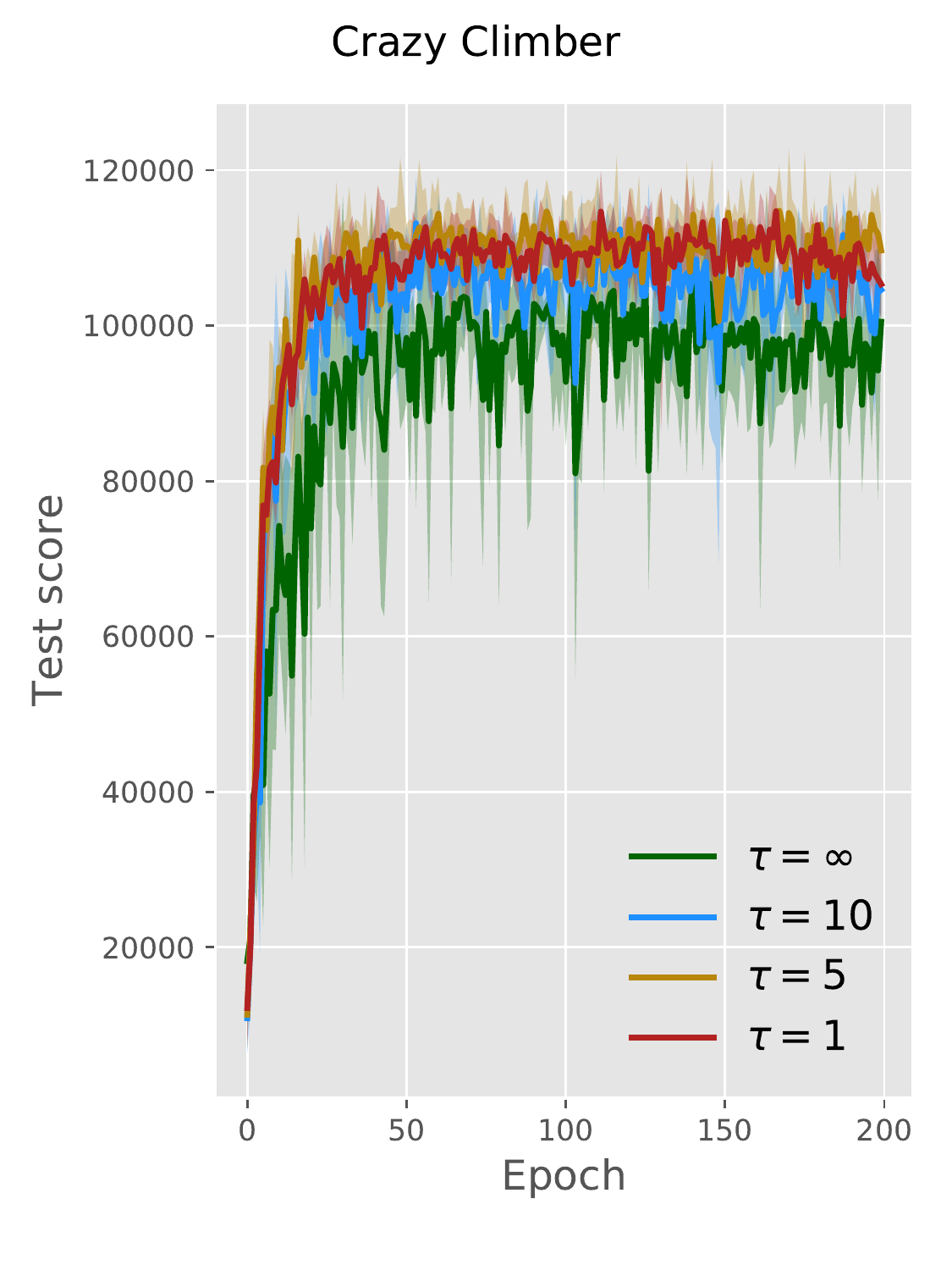}} 
	\subfigure{\includegraphics[width=0.495\linewidth]{./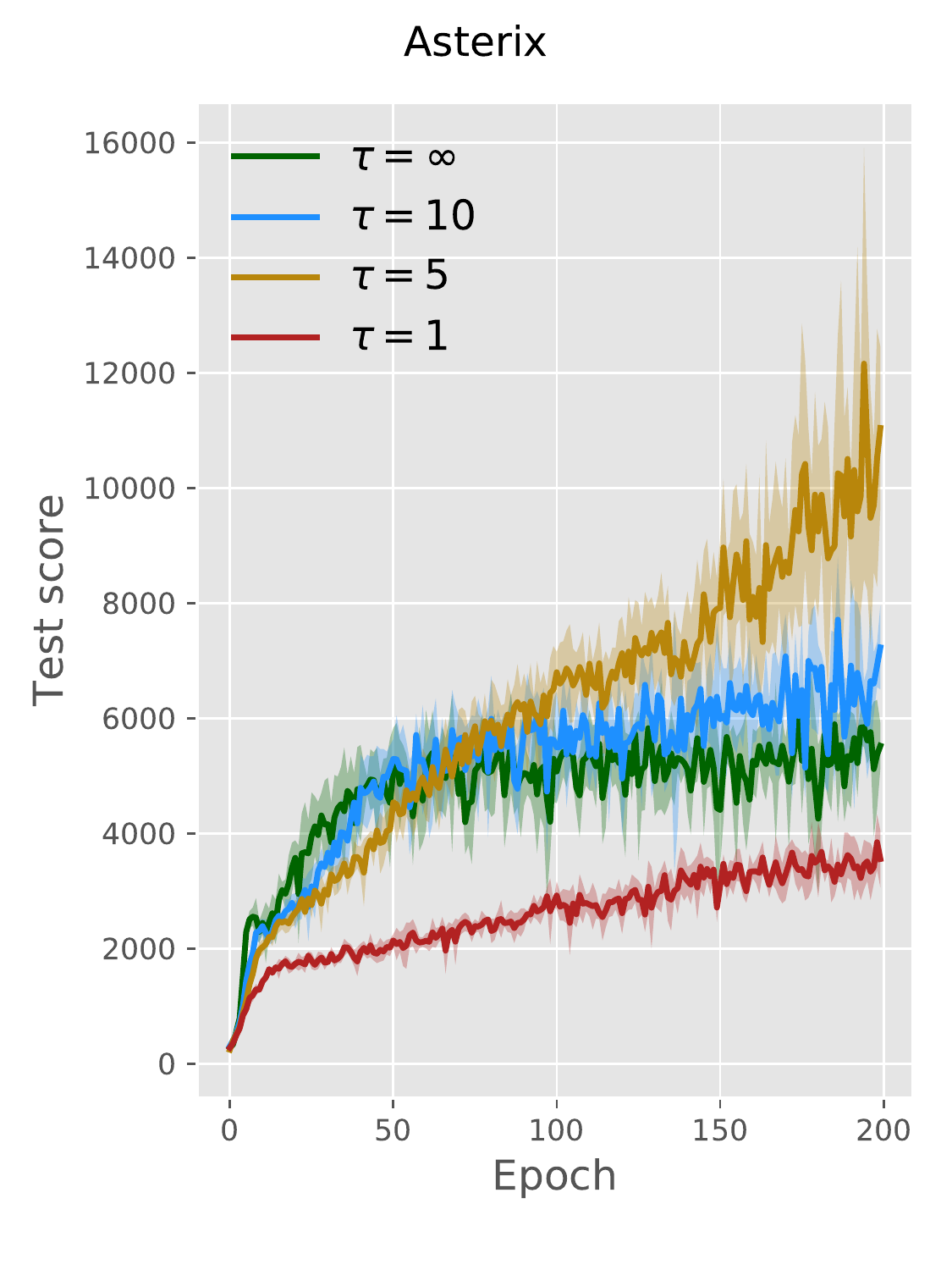}} 
	\subfigure{\includegraphics[width=0.495\linewidth]{./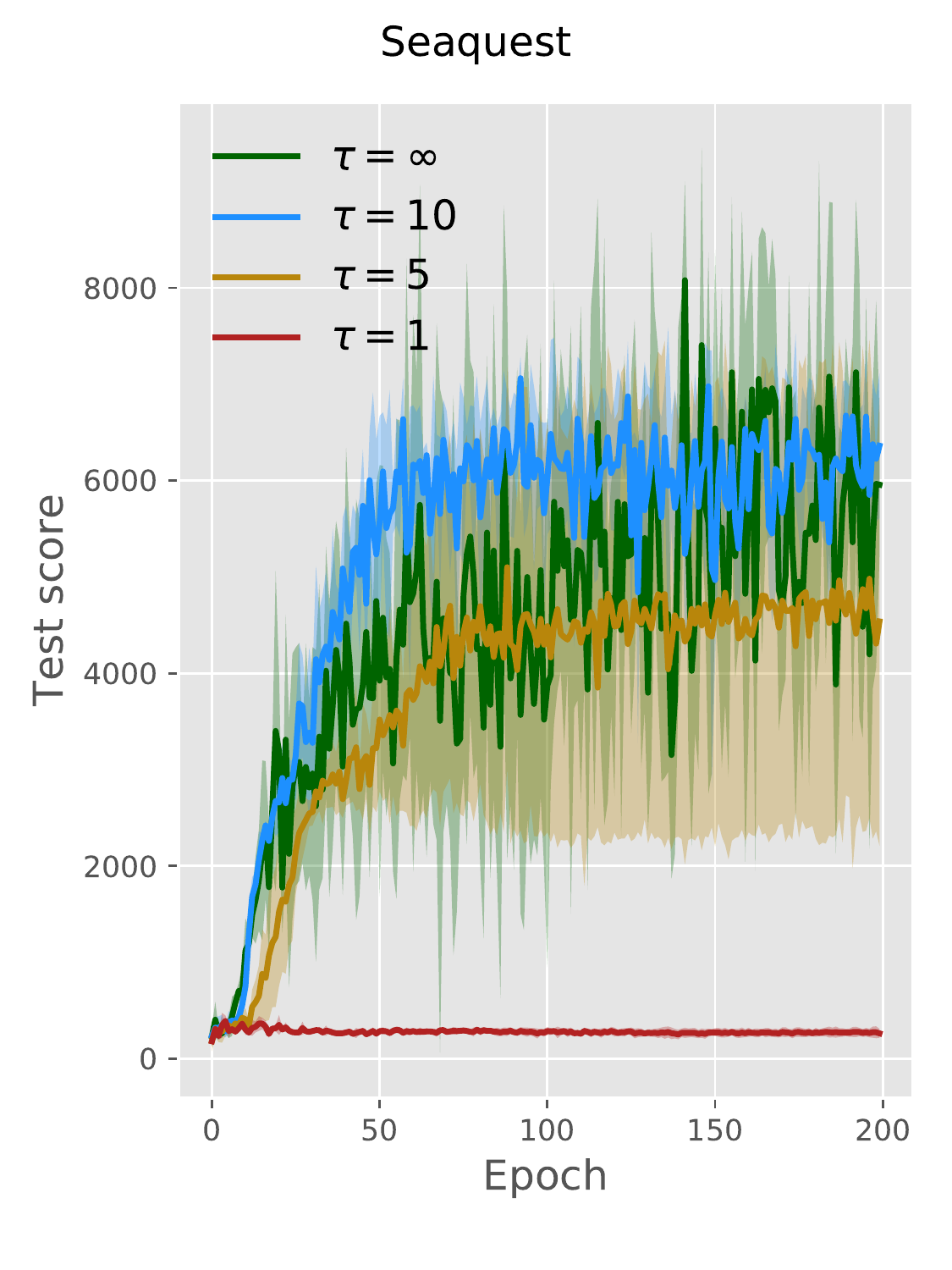}} 
	\caption{Mean and one standard deviation of test scores on the Atari games,
	for different values of $\tau$ in S-DDQN.}
	\label{fig:score_tau_supp}
\end{figure}

\begin{figure}[!t]
	\centering
	\subfigure{\includegraphics[width=0.495\linewidth]{./fig/QBert_qvalue_tau_add5.pdf}} 
	\subfigure{\includegraphics[width=0.495\linewidth]{./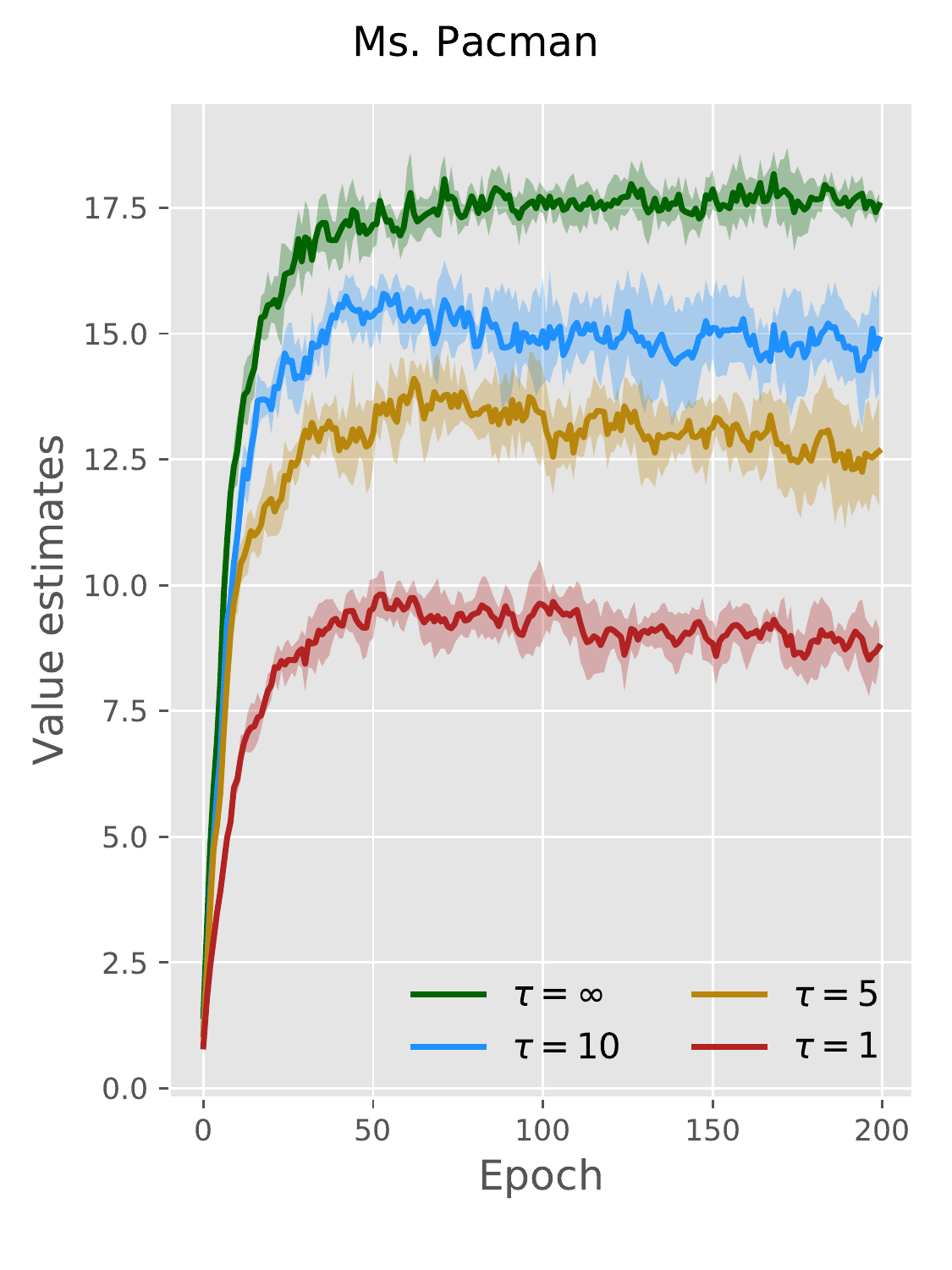}}
	\subfigure{\includegraphics[width=0.495\linewidth]{./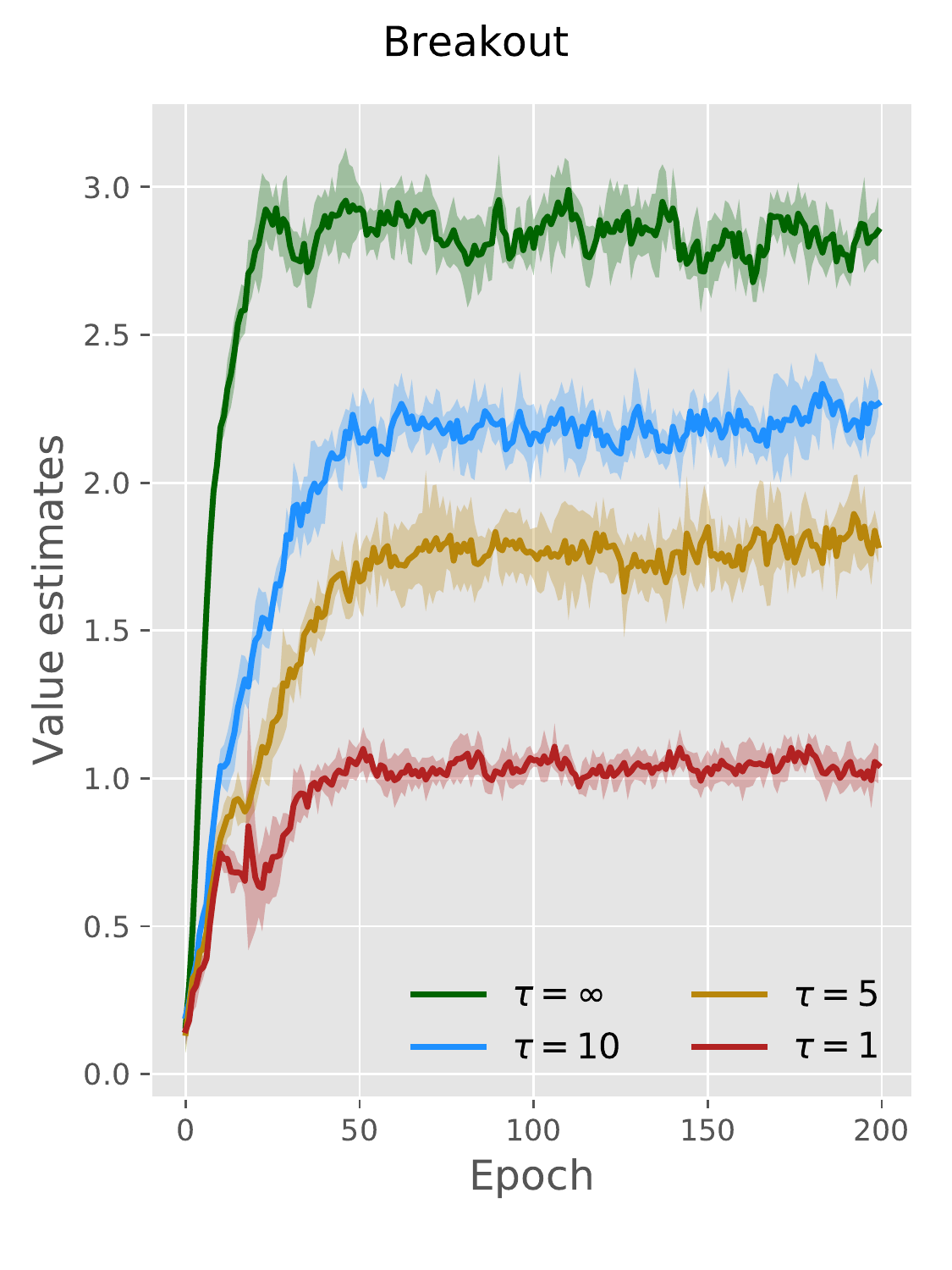}}
	\subfigure{\includegraphics[width=0.495\linewidth]{./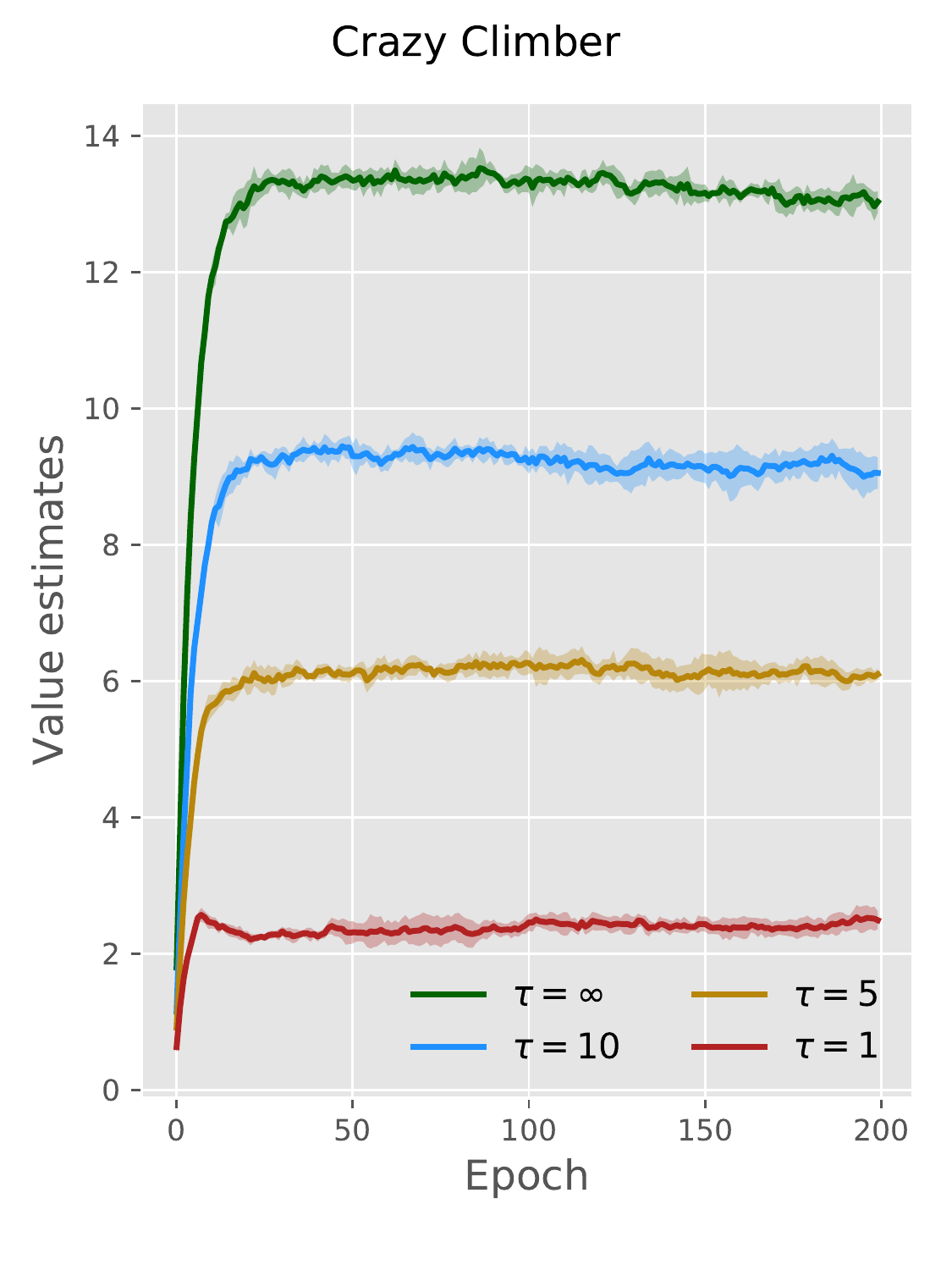}} 
	\subfigure{\includegraphics[width=0.495\linewidth]{./fig/Asterix_qvalue_tau_add5.pdf}} 
	\subfigure{\includegraphics[width=0.495\linewidth]{./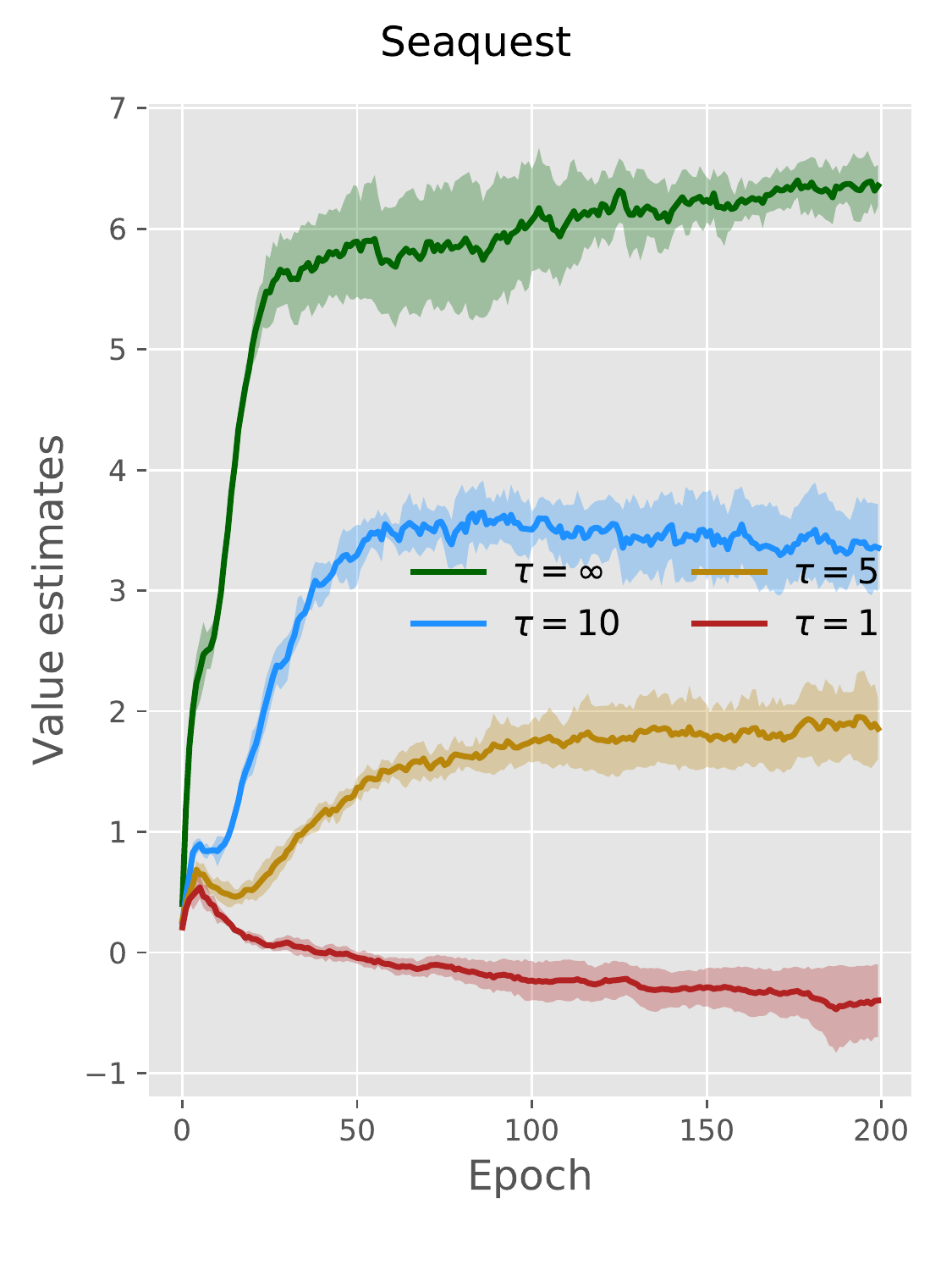}}  	
	\caption{Mean and one standard deviation of the estimated Q-values on the Atari games,
	for different values of $\tau$ in S-DDQN.}
	\label{fig:qvalue_tau_supp}
\end{figure}

\begin{figure}[!h]
	\centering
	\subfigure{\includegraphics[width=0.495\linewidth]{./fig/QBert_res_tau_grad_add5.pdf}} 
	\subfigure{\includegraphics[width=0.495\linewidth]{./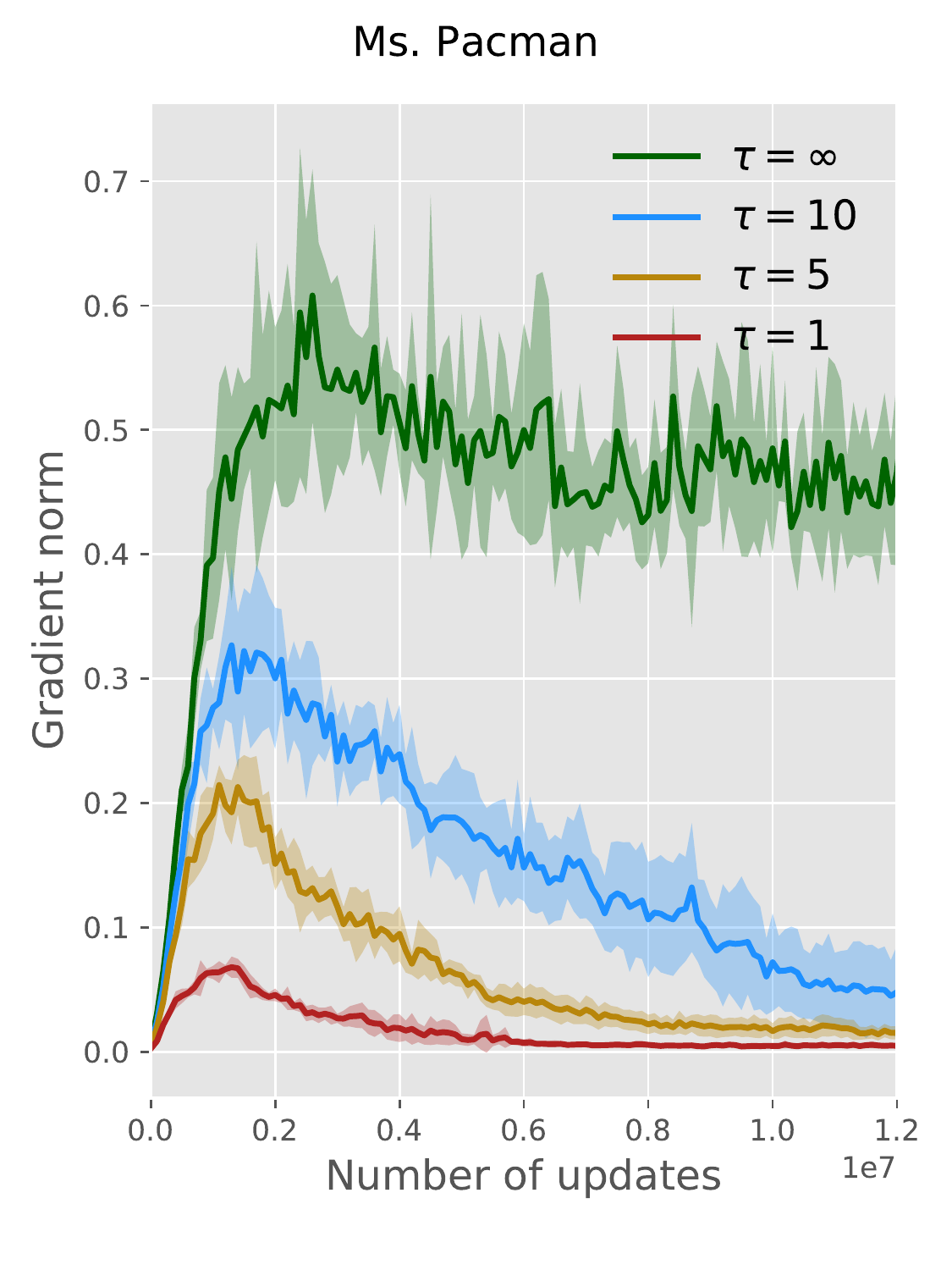}}
	\subfigure{\includegraphics[width=0.495\linewidth]{./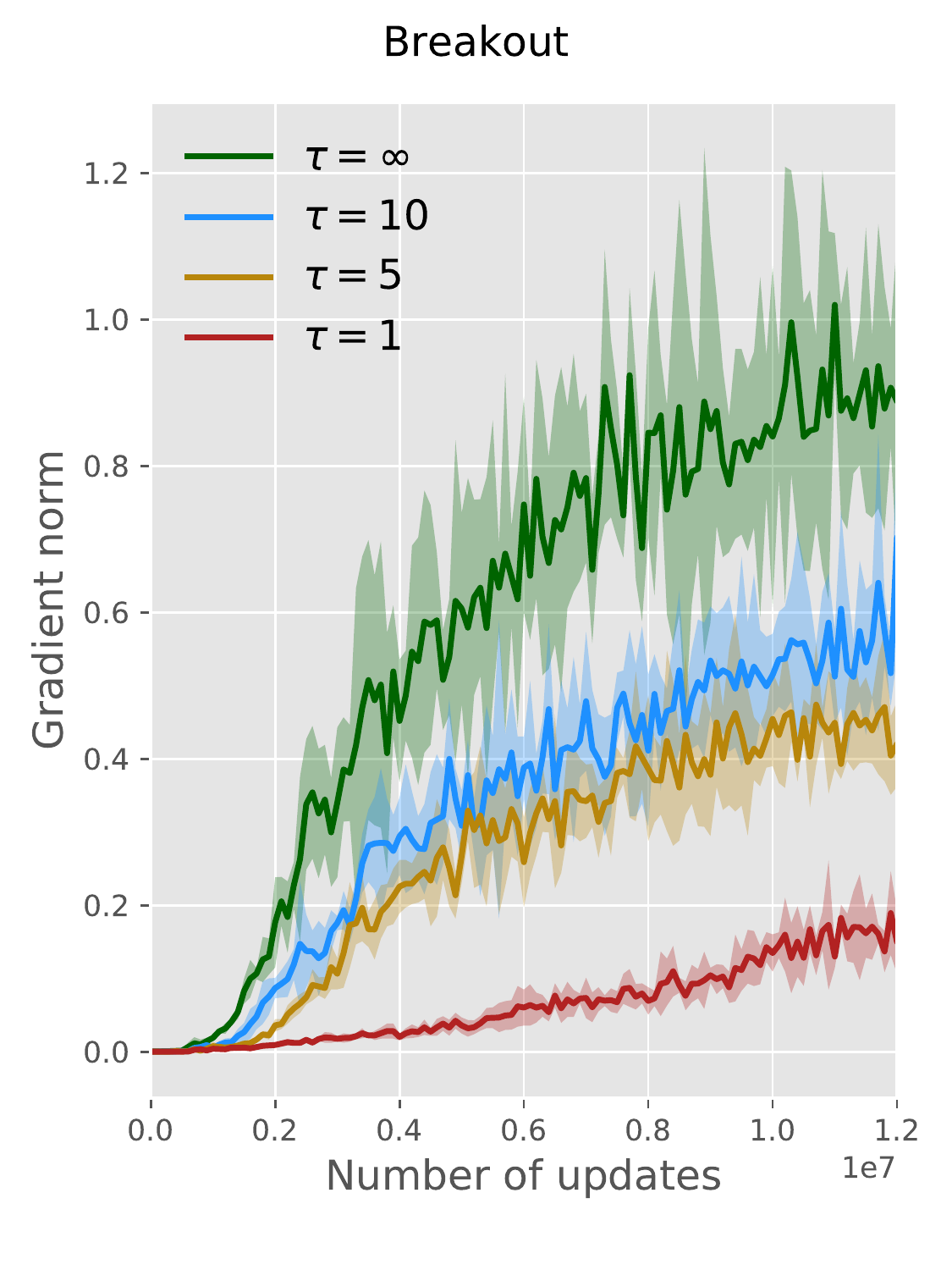}}
	\subfigure{\includegraphics[width=0.495\linewidth]{./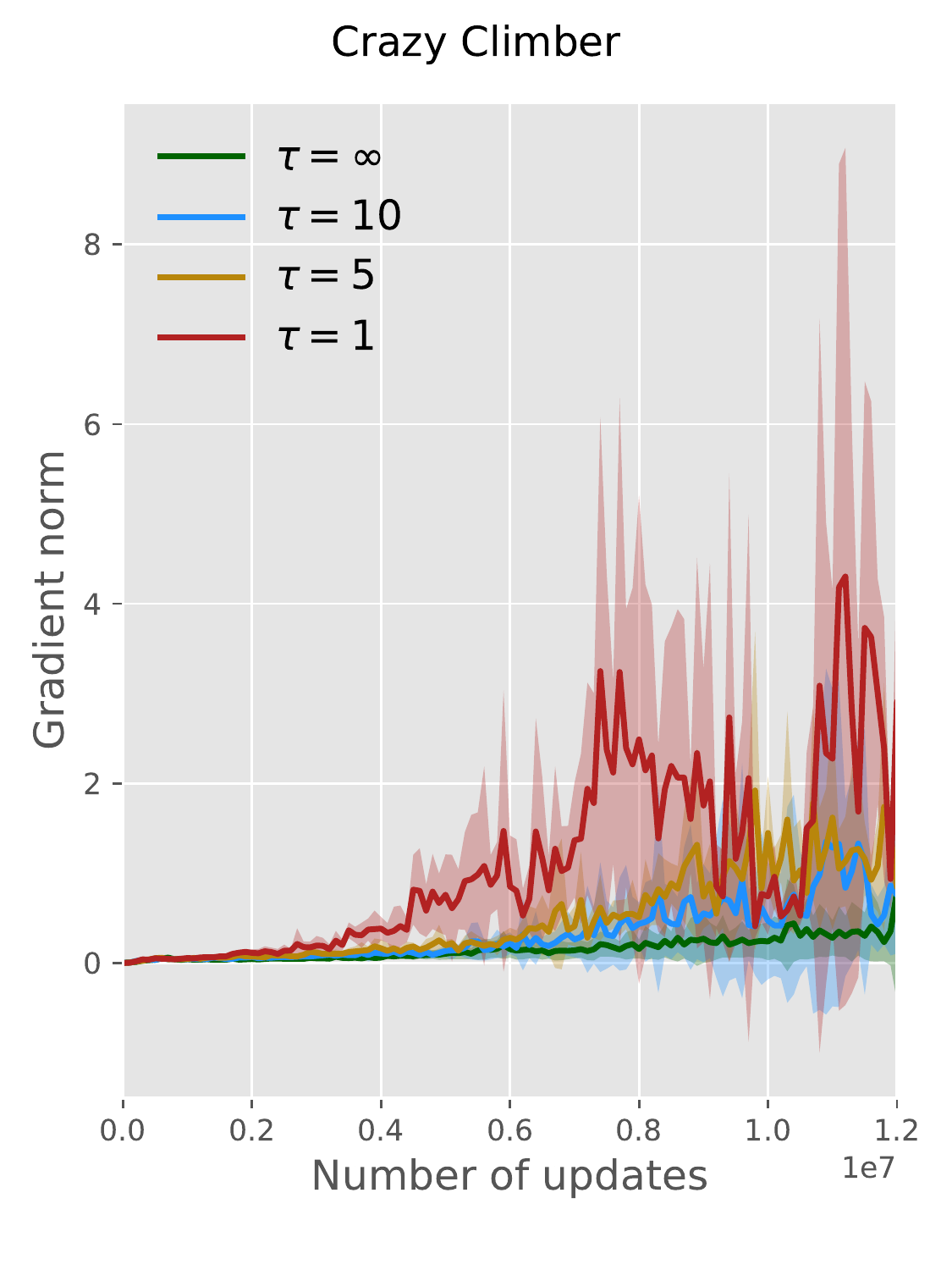}} 
	\subfigure{\includegraphics[width=0.495\linewidth]{./fig/Asterix_res_tau_grad_add5.pdf}} 
	\subfigure{\includegraphics[width=0.495\linewidth]{./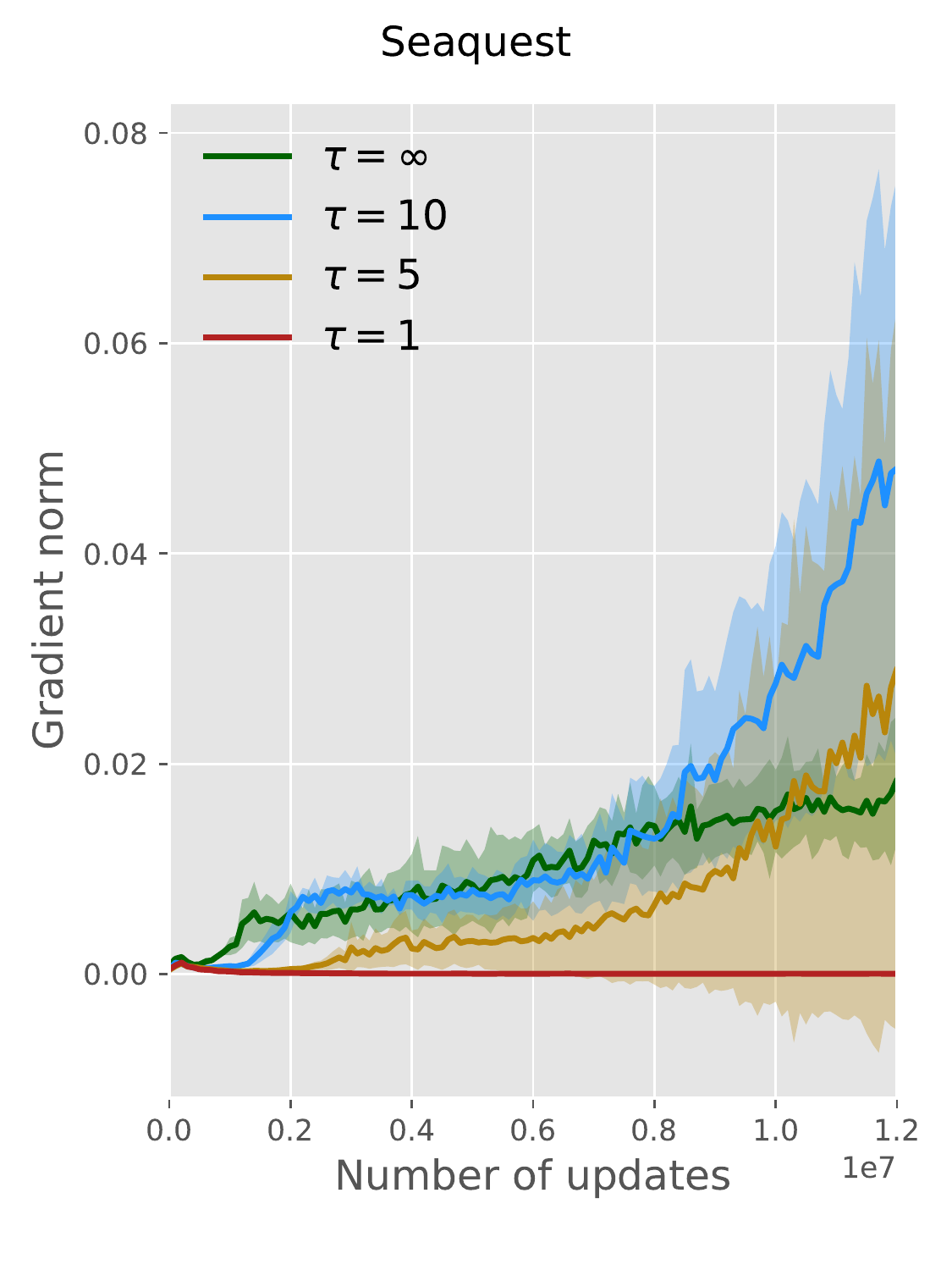}} 
	\caption{Mean and one standard deviation of the gradient norm on the Atari games,
	for different values of $\tau$ in S-DDQN.}
	\label{fig:grad_tau_supp}
\end{figure}

\end{document}